\newcommand{\torrent}{\textsc{Torrent}\xspace}
\newcommand{\myalgo}{\textsc{CRR}\xspace}
\newcommand{\aoard}{\textsc{CRTSE}\xspace}			
\newcommand{\bt}{\vb^t}
\newcommand{\btn}{\vb^{t+1}}
\newcommand{\btp}{\vb^{t-1}}
\newcommand{\bo}{\vb^\ast}
\newcommand{\wt}{\vw^t}
\newcommand{\wo}{\vw^\ast}
\newcommand{\vlt}{\vlambda^t}
\newcommand{\vln}{\vlambda^{t+1}}
\renewcommand{\ko}{k^\ast}
\newcommand{\yo}{\vy^\ast} 					
\newcommand{\uo}{\vu^\ast} 					
\newcommand{\eo}{\ve^\ast} 					
\newcommand{\veps}{\vepsilon}
\newcommand{\vla}{\vlambda}
\newcommand{\<}{\leftarrow}
\renewcommand{\hat}[1]{\widehat{{#1}}}
\renewcommand{\bar}[1]{\overline{{#1}}}
\newcommand{\It}{I^t}
\newcommand{\HT}{\text{HT}}
\newcommand{\Itn}{I^{t+1}}
\newcommand{\mdt}{{\text{MD}^t}}
\newcommand{\fat}{{\text{FA}^t}}
\newcommand{\cit}{{\text{CI}^t}}
\newcommand{\mdn}{{\text{MD}^{t+1}}}
\newcommand{\fan}{{\text{FA}^{t+1}}}
\newcommand{\cin}{{\text{CI}^{t+1}}}
\newcommand{\erfc}{\text{erfc}}
\newcommand{\ARD}{\text{AR}\br{d}}			
\newcommand{\VAR}{\text{VAR}\br{1}}			
\newcommand{\newreptheorem}[2]{\newtheorem*{rep@#1}{\rep@title} 
\newenvironment{rep#1}[1]{\def\rep@title{#2 \ref*{##1}}\begin{rep@#1}}{\end{rep@#1}}
}
\title{Efficient and Consistent Robust Time Series Analysis}
\author{Kush Bhatia$^{*}$ \and Prateek Jain$^{*}$ \and Parameswaran Kamalaruban$^{\#}$ \and Purushottam Kar$^{\dagger}$\\ $^{*}$Microsoft Research, Bangalore, India\\\texttt{\{t-kushb, prajain\}@microsoft.com}\\$^{\#}$Australian National University, Canberra, Australia\\
\texttt{kamalaruban.parameswaran@nicta.com.au}\\$^{\dagger}$Indian Institute of Technology Kanpur, India\\
\texttt{purushot@cse.iitk.ac.in}}
\date{}
\begin{document}

\maketitle

\begin{abstract}
We study the problem of robust time series analysis under the standard auto-regressive (AR) time series model in the presence of arbitrary outliers. We devise an efficient hard thresholding based algorithm which can obtain a {\em consistent} estimate of the optimal AR model despite a large fraction of the time series points being corrupted. Our algorithm alternately estimates the corrupted set of points and the model parameters, and is inspired by recent advances in robust regression and hard-thresholding methods. However, a direct application of existing techniques is hindered by a critical difference in the time-series domain: each point is correlated with {\em all} previous points rendering existing tools inapplicable directly. We show how to overcome this hurdle using novel proof techniques. Using our techniques, we are also able to provide the {\em first} efficient and provably consistent estimator for the robust regression problem where a standard linear observation model with white additive noise is corrupted arbitrarily. We illustrate our methods on synthetic datasets and show that our methods indeed are able to consistently recover the optimal parameters despite a large fraction of points being corrupted.
\end{abstract}


\section{Introduction}
Several real world prediction problems, for instance, the temperature of a city, stock prices, traffic patterns, the GPS location of a car etc are naturally modeled as time series. One of the most popular and simple model for time series is the auto-regressive (AR ($d$)) model which models a given observation as a sample from a distribution with mean given by a fixed linear combination of previous $d$ time series values. That is, $x_{t}=\sum_{i=1}^d w^*_ix_{t-i}+\epsilon_i$ where $\epsilon_i$ is unbiased noise.

Unfortunately, in real life scenarios, time series tend to have several outliers. For example, traffic patterns may get disrupted due to accidents and stock prices may get affected by unforseen political or social influences. The estimation of model parameters in the presence of such outliers is a classical problem in time-series literature and is given a detailed treatment in several texts \cite{MaronnaMY2006, Martin1979}. 

Existing time-series texts define two major outlier models: a) innovative outliers, b) additive outliers. In innovative outliers, corrupted values become a part of the time series and influence future iterates i.e. if $x_t$ is corrupted and we observe $\tilde{x}_t=x_t+b_t$ then subsequent values $x_{t'}$  ($t'>t$) are obtained by using $\tilde{x}_t$ rather than $x_t$. In the additive outlier model, on the other hand, although the observation of $\tilde x_t$ is corrupted, the time series itself continues using the clean value $x_t$. Conventional wisdom in time series literature considers innovative outliers to be ``good'' and helpful in spurring a shift in the time series \cite{MaronnaMY2006}. Additive outliers, on the other hand, are considered more challenging due to this latent behaviour in the model and can cause standard estimators for the AR model to diverge.

Due to importance of the problem, several estimators have been proposed for the AR model under corruption, e.g. the generalized M-estimator by \cite{MartinZ1978}. 
However, most existing estimators are computationally intractable (operate in exponential time) and do not offer non-asymptotic guarantees.

Our goal in this work is to devise an {\em efficient} and {\em consistent} estimator for the Robust Time Series Estimation (RTSE) problem in the AR($d$) model with non-asymptotic convergence guarantees in the presence of a large number of outliers. 
To this end, we cast the model estimation problem as a sparse estimation problem and use techniques from the sparse regression literature \cite{JainTK2014} to devise our hard-thresholding based algorithm. At a high level, our algorithm locates the corrupted indices by using a projected gradient method where the projection is onto the set of sparse vectors.

However, analyzing this technique proves especially challenging. While hard threshodling methods have been extensively studied for sparse linear regression \cite{JainTK2014, BlumensathD2009, Zhang11}, similar techniques do not apply directly to our problem because of two key challenges: a) in the time series domain, data points $x_t$'s are dependent on each other while sparse linear regression techniques typically assume independence of the data points, and b) even for robust linear regression (where each row of data matrix is assumed to be independent), existing analyses \cite{BhatiaJK2015} are unable to guarantee consistent estimates. 

Using a novel two-stage proof technique, we show that our method provides a consistent estimator for the true model $\wo$ so long as the number of outliers $k$ satisfies $k=O(\frac{n}{d \log n})$, where $n$ is the total number of points in the time series
and $d$ is the order of the model. Whenever $k$ satisfies the above assumption, our method in time $\tilde{O}(nd)$ outputs an estimate $\hat\vw$ s.t. $\|\hat\vw-\wo\|_2\leq f(n)$ where $f\rightarrow 0$ as $n\rightarrow \infty$. We direct the reader to Theorem~\ref{thm:crtse-final} for precise rates. 

In fact, using our techniques, we are also able to give a {\em consistent} estimator for the robust least squares regression (RLSR) problem \cite{WrightM10,NguyenT13,BhatiaJK2015} even when a constant fraction of the responses are corrupted. Here again, our algorithm runs in time $\tilde{O}(nd)$, where $d$ is the dimensionality of the data. To the best of our knowledge, our method is the {\em first} efficient and consistent estimator for the RLSR problem in the challenging setting where a {\em constant} fraction of the responses can be corrupted. 

We then study our methods empirically for both the robust time series analysis, as well as the standard robust regression problems. Our methods demonstrate consistency for both problem settings. Moreover, our results for robust time series show that the ordinary least squares estimate, that ignores outliers, provides very poor estimators and hence, is significantly less accurate. In contrast, our proposed method and a few variants of it indeed recover the underlying AR($d$) model accurately. 

{\bf Paper Organization}: Section~\ref{sec:rreg} considers the ``warm-up'' problem of robust regression and presents our algorithm and theoretical guarantees. We then, introduce the robust time series problem and our algorithm and analysis in Section~\ref{sec:ts}. Section~\ref{sec:exps} presents simulations on synthetic datasets. 

\section{Related Works}
{\bf Time Series}: Analysing time series with corruptions is a classical and widely studied problem in statistics literature. In an early work, \cite{MartinZ1978} proposed a generalized M-estimator for the RTSE problem in the additive outlier (AO) model with a positive breakdown point. \cite{MaronnaMY2006} detail a robust variant of the Durbin-Levinson algorithm for RTSE and demonstrate the efficacy of the model empirically. \cite{StockingerD1987} provide an analysis of M-estimators for RTSE with innovative outliers (IO), but show that the standard M-estimator has a break down point of {\em zero} in the presence of AO. This shows that standard M-estimators cannot handle even a non-zero fraction of corruptions. Recently, \cite{CrouxJ2008} proposed a method based on Least Trimmed Squares, which is closely related to our method, and used Monte Carlo simulations to validate the effectiveness of their method. \cite{MulerPY2009} present a method based on robust filters in the more powerful ARMA model. Most of the estimators mentioned above are either not efficient (i.e. exponential time complexity) or do not provide non-asymptotic error rates. In contrast, we provide a consistent and nearly linear time algorithm that allows a large fraction of points to be corrupted. Recently, \cite{AnavaHZ2015} studied time series with missing values but their results do not extend to cases with latent corruptions. Moreover, they consider the online setting as compared to the stochastic setting considered by our method.

{\bf Robust Regression}: The goal in RLSR is to recover a parameter using noisy linear observations that are corrupted sparsely. RLSR is a classical problem in statistics, but computationally efficient, provable algorithms have been proposed only in recent years. The Least Trimmed Squares (LTS) method guarantees consistency but in general requires exponential running time \cite{Rousseeuw1984, Visek06a, Visek06b}. Recently \cite{WrightM10,NguyenT13} proposed $L_1$ norm minimization based methods for RLSR but their analyses do not guarantee consistent estimates in presence of dense unbiased i.i.d. noise. Recently, \cite{BhatiaJK2015} proposed a hard thresholding style algorithm for RLSR but are unable to guarantee better than $O(\sigma)$ error in the estimation of $\wo$ where $\sigma$ is the standard deviation of noise. However, as detailed in section \ref{sec:rreg}, their results holds in a weaker adversarial model than ours. In contrast, we provide nearly optimal $\sigma\frac{\sqrt{d}}{\sqrt{n}}$ error rates for our algorithm. \cite{ChenCM2013} considers a stronger model where along with the response variables, the covariates can also be corrupted. However, their result also do not provide consistency guarantees and they can only tolerate $k \leq n/\sqrt d$ corruptions.

\section{Robust Least Squares Regression}
\label{sec:rreg}
We use robust least squares regression (RLSR) as a warm up problem to introduce the tools, as well as establish notation that will be used for our time-series analysis. We present the problem formulation, propose our \myalgo algorithm, and then prove its consistency and robustness guarantees.

\textbf{Problem Formulation and Notation}: We are given a set of $n$ data points $X = \bs{\vx_1,\vx_2,\ldots,\vx_n} \in \bR^{d \times n}$, where $\vx_i \in \bR^d$ are the \emph{covariates}, $\vy \in \bR^n$ is the vector of \emph{responses} generated as
\begin{equation}\label{eq:rr_mod}
\vy = X^\top\wo + \bo + \veps,
\end{equation}
for some \emph{true} underlying model $\wo \in \bR^d$. The responses suffer two kinds of perturbations -- \emph{dense white noise} $\epsilon_i \sim \cN(0,\sigma^2)$ that is chosen in an i.i.d. fashion independently of the data $X$ and the model $\wo$, and \emph{sparse adversarial corruptions} in the form of $\vb$ whose support is chosen independently of $X, \wo$ and $\veps$. We assume that $\bo$ is a $\ko$-sparse vector albeit one with potentially unbounded entries. The constant $\ko$ will be called the \emph{corruption index} of the problem. The above model is stronger than that of \cite{BhatiaJK2015} which considers a fully adaptive adversary. However, whereas \cite{BhatiaJK2015} is unable to give a consistent estimate, we give an algorithm \myalgo that does provide a consistent estimate. We also note that \cite{BhatiaJK2015} is unable to give consistent estimates even in our model. As noted in the next section, our result requires significantly more fine analysis; standard $\ell_2$-norm style anlaysis by \cite{BhatiaJK2015} seems unlikely to lead to a consistency result in the robust regression setting. 

We will require the notions of \emph{Subset Strong Convexity} and \emph{Subset Strong Smoothness} similar to \cite{BhatiaJK2015} and reproduce the same below. For any set $S \subset [n]$, let $X_S := \bs{\vx_i}_{i \in S} \in \bR^{p \times \abs{S}}$ denote the matrix with columns in that set. We define $\vv_S$ for a vector $\vv \in \bR^n$ similarly. $\lambda_{\min}(X)$ and $\lambda_{\max}(X)$ will denote, respectively, the smallest and largest eigenvalues of a square symmetric matrix $X$.

\begin{definition}[SSC and SSS Properties]
\label{defn:ssc-sss}
A matrix $X \in \bR^{p\times n}$ is said to satisfy the \emph{Subset Strong Convexity Property} (resp. \emph{Subset Strong Smoothness Property}) at level $k$ with strong convexity constant $\lambda_k$ (resp. strong smoothness constant $\Lambda_k$) if the following holds:
\[
\lambda_k \leq \underset{|S| = k}{\min} \lambda_{\min}(X_SX_S^\top) \leq \underset{|S| = k}{\max} \lambda_{\max}(X_SX_S^\top) \leq \Lambda_k.
\]
\end{definition}
We refer the reader to the appendix for SSC/SSS bounds for Gaussian ensembles.

\subsection{\myalgo: A Hard Thresholding Approach to Consistent Robust Regression}
We now present our consistent method \myalgo for the RLSR problem. \myalgo takes a significantly different approach to the problem than previous works. Instead of attempting to exclude data points deemed unclean, \myalgo concentrates on correcting the errors instead. This allows \myalgo to work with the entire data set at all times, as opposed \torrent \cite{BhatiaJK2015} that work with a fraction of the data.

Starting with the RLSR formulation $\min_{\vw \in \bR^p, \norm{\vb}_0 \leq \ko} \frac{1}{2}\norm{X^\top\vw - (\vy - \vb)}_2^2$, we realize that given any estimate $\hat\vb$ of the corruption vector, the optimal model with respect to this estimate is given by the expression $\hat\vw = (XX^\top)^{-1}X(\vy - \hat\vb)$. Plugging this expression for $\hat\vw$ into the formulation allows us to reformulate the RLSR problem.
\begin{equation}
\min_{\norm{\vb}_0 \leq \ko} f(\vb) = \frac{1}{2}\norm{(I-P_X)(\vy - \vb)}_2^2
\label{eq:rreg-form-new}
\end{equation}
where $P_X = X^\top(XX^\top)^{-1}X$. This greatly simplifies the problem by casting it as a \emph{sparse parameter estimation} problem instead of a data subset selection problem. \myalgo directly optimizes \eqref{eq:rreg-form-new} by using a form of iterative hard thresholding. At each step, \myalgo performs the following update: $\btn = \HT_{k}(\bt - \nabla f(\bt))$, 
where $k$ is a parameter for \myalgo. Any value $k \geq \ko$ suffices to ensure convergence and consistency. The hard thresholding operator is defined below.
\begin{definition}[Hard Thresholding]
\label{defn:ht}
For any $\vv \in \bR^n$, let the permutation $\sigma_{\vv} \in S_n$ order elements of $\vv$ in descending order of their magnitudes. Then for any $k \leq n$, we define the hard thresholding operator as $\hat\vv = \HT_k(\vv)$ where $\hat\vv_i = \vv_i$ if $\sigma_\vv^{-1}(i) \leq k$ and 0 otherwise.
\end{definition}

We note that \myalgo functions with a fixed, unit step length, which is convenient in practice as it avoids step length tuning, something most IHT algorithms \cite{GargK2009,JainTK2014} require.
For the RLSR problem, we will consider data sets that are Gaussian ensembles i.e. $\vx_i \sim \cN(\vzero,\Sigma)$. Since \myalgo interacts with the data only using the projection matrix $P_X$, one can assume , without loss of generality, that the data points are generated from a standard Gaussian i.e. $\vx_i \sim \cN(\vzero,I_{d\times d})$. Our analysis will take care of the condition number of the data ensemble whenever it is apparent.


\begin{figure*}[t]
\begin{minipage}[t]{0.4\linewidth}
\begin{algorithm}[H]
	\caption{\small \myalgo: Consistent Robust Regression}
	\label{algo:myalgo}
	\begin{algorithmic}[1]
		\small{
			\REQUIRE Covariates $X = \bs{\vx_1,\ldots,\vx_n}$, responses $\vy = [y_1,\ldots,y_n]^\top$, corruption index $k$, tolerance $\epsilon$
			\STATE $\vb^0 \< \vzero, t \< 0,$\\$P_X \< X^\top(XX^\top)^{-1}X$
			\WHILE{$\norm{\bt - \btp}_2 > \epsilon$}
				\STATE $\btn \< \HT_{k}(P_X\bt + (I - P_X)\vy)$%
				\STATE $t \< t + 1$
			\ENDWHILE
			\STATE \textbf{return} $\wt \< (XX^\top)^{-1}X(\vy - \bt)$
		}
	\end{algorithmic}
\end{algorithm}
\end{minipage}
\hskip1.5ex
\begin{minipage}[t]{0.58\linewidth}
\begin{algorithm}[H]
	\caption{\small \aoard: Consistent Robust Time Series Estimation}
	\label{algo:aoard}
	\begin{algorithmic}[1]
		\small{
			\REQUIRE Time-series data $y_i, i=-d+1,\ldots,n$, corruption index $k$, tolerance $\epsilon$, time series order $d$, error trimming level $\hat\sigma$%
			\STATE $y_i = \max\bc{\min\bc{y_i,\hat\sigma},-\hat\sigma}$
			\STATE $\vx_i \< (y_{i-1},\ldots,y_{i-d})^\top$, $X \< \bs{\vx_1,\ldots,\vx_n}$, $\vy \< \br{y_1,\ldots,y_n}^\top$, $P_{X} \< X^\top (XX^\top)^{-1} X$, $t \< 0$, $\vb^0 \< 0$%
			\WHILE{$\norm{\bt - \btp}_2 > \epsilon$}
			\STATE $\btn \< \HT_k^{\cG} \br{P_{X} \bt + \br{I - P_{X}} \vy}$
			\STATE $t \< t+1$
			\ENDWHILE
			\STATE \textbf{return} $\wt \< (XX^\top)^{-1}X(\vy - \bt)$
		}
	\end{algorithmic}
\end{algorithm}
\end{minipage}
\end{figure*}

\subsection{Convergence and Consistency Guarantees}
\begin{theorem}
\label{thm:crr-final}
Let $x_i\in \mathbb{R}^d, 1\leq i\leq n$ be generated i.i.d. from a Gaussian distribution and let $y_i$'s be generated using \eqref{eq:rr_mod} for a fixed $\wo$ and let $\sigma^2$ be the noise variance. Let the number of corruptions $\ko$ be s.t. $\ko \leq k \leq n/10000$. Then, with probability at least $1-\delta$, \myalgo, after $\cO(\log(\norm{\bo}_2/n) + \log(n/(\sigma\cdot d)))$ steps, ensures that $\norm{\wt - \wo}_2 \leq \cO(\sigma\sqrt{d/n\log(nd/\delta}))$.
\end{theorem}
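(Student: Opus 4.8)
The plan is to view \myalgo as iterative hard thresholding on the reformulated objective $f(\vb)=\tfrac12\norm{(I-P_X)(\vy-\vb)}_2^2$ of \eqref{eq:rreg-form-new}, and to analyze it in \emph{two stages}: a coarse stage that proves geometric convergence of $\vb^t$ towards $\bo$ in $\ell_2$, and a finer stage that --- crucially using the obliviousness of the corruption support --- converts this $\ell_2$ control into the claimed consistency rate for $\wt$. Two identities organize the whole argument. First, since $I-P_X$ is an orthogonal projection, $\nabla f(\vb)=(I-P_X)(\vb-\vy)$, so the \myalgo update is exactly $\btn=\HT_k\br{P_X\bt+(I-P_X)\vy}$, as in Algorithm~\ref{algo:myalgo}. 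Second, substituting $\vy=X^\top\wo+\bo+\veps$ and using $(I-P_X)X^\top=0$, and writing $\Delta^t:=\bt-\bo$,
\[ z^t:=P_X\bt+(I-P_X)\vy=\bo+(I-P_X)\veps+P_X\Delta^t, \]
so on $\text{supp}(\btn)$ one has $\Delta^{t+1}_i=[(I-P_X)\veps]_i+[P_X\Delta^t]_i$, off it $\Delta^{t+1}_i=-\bo_i$, and any coordinate of $S^\ast$ that thresholding drops must have $|\bo_i|$ below the $k$-th largest magnitude appearing in $z^t$, which will be $\cO(\sigma\sqrt{\log n})$. From the last line of Algorithm~\ref{algo:myalgo} one also gets $\wt-\wo=(XX^\top)^{-1}X(\veps-\Delta^t)$.

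\textbf{Coarse stage.} The SSC/SSS bounds for Gaussian ensembles (appendix) give, for $k\le n/10000$, $\lambda_{\min}(XX^\top)=\Omega(n)$ and, for every vector $\vv$ supported on at most $3k$ coordinates, $\norm{P_X\vv}_2=\sqrt{\vv^\top P_X\vv}\le\sqrt{\Lambda_{3k}/\lambda_{\min}(XX^\top)}\,\norm{\vv}_2\le\tfrac14\norm{\vv}_2$. Since $\bo$ is $k$-sparse, optimality of $\HT_k$ gives $\norm{\btn-z^t}_2\le\norm{\bo-z^t}_2$; the standard refinement of this inequality (comparing $z^t$ on $\text{supp}(\btn)$ against $z^t$ on $S^\ast$), together with $z^t-\bo=(I-P_X)\veps+P_X\Delta^t$, the contraction of $P_X$ on $\Delta^t$ (supported on $\le 2k$ coordinates), the fact that $S^\ast$ is \emph{fixed} (so $\norm{[(I-P_X)\veps]_{S^\ast}}_2=\cO(\sigma\sqrt{\ko})$ needs no union bound), and a $\binom{n}{3k}$ union bound for the data-dependent noise coordinates (contributing $\cO(\sigma\sqrt{k\log n})$), yields a recursion $\norm{\Delta^{t+1}}_2\le\tfrac12\norm{\Delta^t}_2+\cO(\sigma\sqrt{k\log n})$. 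Telescoping from $\vb^0=\vzero$ gives $\norm{\Delta^t}_2=\cO(\sigma\sqrt{k\log n})$ in the stated number of iterations, and hence, via the output identity and the crude estimate $\norm{X_S\Delta^t_S}_2\le\sqrt{\Lambda_{|S|}}\norm{\Delta^t_S}_2$, only $\norm{\wt-\wo}_2=\cO(\sigma)$ up to logarithmic factors --- no better than \cite{BhatiaJK2015}.

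\textbf{Fine stage.} For consistency I revisit $\wt-\wo=(XX^\top)^{-1}\br{X\veps-X_S\Delta^t_S}$ with $S=\text{supp}(\bt)\cup S^\ast$. The term $(XX^\top)^{-1}X\veps$ is precisely the ordinary-least-squares error on the \emph{clean} model, and since $\veps$ is independent of $X$ it is $\cO(\sigma\sqrt{(d\log(d/\delta))/n})$ --- already the target rate. The whole task is thus to show $\norm{X_S\Delta^t_S}_2=\cO(\sigma\sqrt{dn\log(nd/\delta)})$, so that dividing by $\lambda_{\min}(XX^\top)=\Omega(n)$ makes the corruption term $\cO(\sigma\sqrt{(d/n)\log(nd/\delta)})$. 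Bounding $\norm{X_S\Delta^t_S}_2\le\sqrt{\Lambda_{|S|}}\norm{\Delta^t_S}_2$ is hopeless here --- it loses a factor $\sqrt{k/d}$ and only gives $\cO(\sigma)$ --- so one must instead use the fixed-point structure: on the selected coordinates $\Delta^t$ equals $z^{t-1}$, i.e.\ $(I-P_X)\veps+P_X\Delta^{t-1}$, and after peeling off the geometrically small feedback $P_X\Delta^{t-1}$ and the dropped corruptions (each $\cO(\sigma\sqrt{\log n})$, negligible after the $1/n$ scaling) what remains is essentially $X_S[(I-P_X)\veps]_S=X_S\veps_S-X_S[P_X\veps]_S$. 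This must be controlled by a \emph{direction-aware} bound --- $X_S$ acting on a ``generic'' noise direction shrinks it like $\sqrt d$, not like $\sqrt{\Lambda_{|S|}}$ --- made uniform over the algorithm's data-dependent support $S$. Here obliviousness of $S^\ast$ is indispensable: it lets the $S^\ast$-restricted noise quantities concentrate with no union bound, so the only union bound needed is over the \emph{noise-selected} part of $S$, where one can further exploit $X(I-P_X)\veps=\vzero$. Collecting the OLS and corruption terms then gives $\norm{\wt-\wo}_2=\cO(\sigma\sqrt{(d/n)\log(nd/\delta)})$.

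\textbf{The main obstacle} lies entirely in the fine stage, specifically in the uniform control of $\norm{X_S\Delta^t_S}_2$ at the sharp $\sqrt{dn}$ scale (rather than the trivial $\sqrt{k(k+d)}$, of order $k$) over the \emph{data-dependent} support produced by the iterates. This is exactly the gap between a consistent estimator and the $\cO(\sigma)$ estimators of prior work, and it is why the stronger oblivious-adversary model (support of $\bo$ chosen independently of $X,\wo,\veps$) is used: one cannot afford a union bound over the fixed corrupted coordinates, only over the noise-driven part of the support, and decoupling these two requires the independence assumption. A secondary subtlety is the feedback term $P_X\Delta^t$ inside the recursion --- one must verify that near the fixed point it does not reinject a non-vanishing error, which is handled by a short bootstrap argument on top of the coarse-stage bound.
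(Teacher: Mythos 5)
Your two-stage skeleton (coarse geometric convergence to an $\bigO{\sigma}$ ball, then a finer analysis of $\wt-\wo=(XX^\top)^{-1}(X\veps-X_S\Delta^t_S)$) matches the paper's, and you correctly isolate the one genuinely hard term: the contribution of coordinates the algorithm wrongly selects because the noise there is large, i.e.\ $\norm{X_\fan(X_\fan^\top\vlt+\vg_\fan)}_2$ in the paper's notation. But your proposed resolution of that term does not go through as described. The set $\fan$ is, by construction, contained in the set of top-$k$ coordinates of $X^\top\vlt+\vg$ by magnitude over $\bar{S^\ast}$; it is therefore maximally adversarial with respect to the noise, not ``generic.'' A union bound over the $\binom{n}{k}$ candidate noise-selected supports --- which is what ``made uniform over the algorithm's data-dependent support'' amounts to --- costs $k\log(n/k)$ in the exponent and yields, for the worst set $T$ of size $k$, only $\norm{X_T\veps_T}_2=\bigO{\sigma k\log(n/k)}$; after dividing by $\lambda_{\min}(XX^\top)=\Omega(n)$ this is $\bigO{\sigma (k/n)\log(n/k)}$, a non-vanishing multiple of $\sigma$ when $k=\Theta(n)$ --- exactly the $\bigO{\sigma}$ barrier you are trying to break, and this worst case is essentially attained, so no support-union-bound argument can work. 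The identity $X(I-P_X)\veps=\vzero$ does not rescue this, because restricting to a data-dependent $S$ destroys the orthogonality.

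The paper's actual mechanism, absent from your sketch, is to replace the cardinality-$k$ selection by a \emph{value} threshold $\tau$ and analyze $\vz$ with $z_j=(\vx_j^\top\vla+g_j)\cdot\ind{\abs{\vx_j^\top\vla+g_j}>\tau}$ for a \emph{fixed} $\vla$ (Lemma~\ref{lem:lambda-bound-prelim}). Since $z_j$ is then a per-coordinate function of the independent pairs $(\vx_j,g_j)$, no union bound over supports is needed at all: one computes the Gaussian integral $\E{\vx_jz_j}=M(\tau)\vla$ exactly, with contraction factor $M(\tau)<0.98$ once $\tau$ is on the order of $\sigma$; applies Bernstein for sub-exponential variables along $\vla$ and along directions orthogonal to $\vla$ (where the expectation vanishes, giving the $\sigma\sqrt{d/n}$ fluctuation); and takes a union bound only over an $\epsilon$-net of the $d$-dimensional ball $\norm{\vla}_2\le\sigma/100$, costing a factor $d$ rather than $k\log(n/k)$ in the logarithm. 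Two further steps you would still need to supply: (i) an argument that the $k$-th largest magnitude of $X^\top\vlt+\vg$ over $\bar{S^\ast}$ exceeds the required $\tau$, so that cardinality thresholding is dominated by value thresholding --- this uses the coarse-stage bound to show $\norm{X^\top\vlambda^{T_0}}_2$ cannot push too many of the super-threshold Gaussian coordinates below $\tau$, and is where the constant $n/10000$ enters; and (ii) the bound on the fixed-support term $X_{S^\ast}(X_{S^\ast}^\top\vlt+\vg_{S^\ast})$, which is where obliviousness of $S^\ast$ is actually used --- for the corrupted coordinates, not for the noise-selected ones as your sketch suggests.
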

The above result establishes consistency of the \myalgo method with $\tilde\cO(\sigma\sqrt{d/n})$ error rates that are known to be statistically optimal, notably in the presence of gross and unbounded outliers. We reiterate that to the best of our knowledge, this is the first instance of a poly-time algorithm being shown to be consistent for the RLSR problem. It is also notable that the result allows the corruption index to be $\ko = \Omega(n)$, i.e. allows upto a \emph{constant} factor of the total number of data points to be arbitrarily corrupted, while ensuring consistency, which existing results \cite{BhatiaJK2015,NguyenT13} do not ensure.

For our analysis, we will divide \myalgo's execution into two phases -- a \emph{coarse convergence} phase and a \emph{fine convergence} phase. \myalgo will enjoy a linear rate of convergence in both phases. However, the coarse convergence analysis will only ensure $\norm{\wt - \wo}_2 = \bigO{\sigma}$. The fine convergence phase will then use a much more careful analysis of the algorithm to show that in at most $\bigO{\log n}$ more iterations, \myalgo ensures $\norm{\wt - \wo}_2 = \tilde\cO(\sigma\sqrt{d/n})$, thus establishing consistency of the method. Existing methods, including \torrent, are able to reach an error level $\bigO{\sigma}$, but no further.

Let $\vlt := (XX^\top)^{-1}X(\bt - \bo)$, $\vg := (I - P_X)\veps$, and $\vv^t = X^\top\vlt + \vg$. Let $S^\ast := \supp(\bo)$ true locations of the corruptions and $I^t := \supp(\bt) \cup \supp(\bo)$. Let $\mdt = \supp(\bo)\backslash\supp(\bt)$, $\fat = \supp(\bt)\backslash\supp(\bo)$, and $\cit = \supp(\bt)\cap\supp(\bo)$ respectively denote the coordinates that were \emph{missed detections}, \emph{false alarms}, and \emph{correctly identifications}.

\textbf{Coarse convergence}: Here we establish a result that guarantees that after a certain number of steps $T_0$, \myalgo identifies the corruption vector with a relatively high accuracy i.e. $\norm{\vw^{T_0} - \wo}_2 \leq \bigO{\sigma}$.

\begin{lemma}
\label{lem:coarse-conv}
For any data matrix $X$ that satisfies the SSC and SSS properties such that $\frac{2\Lambda_{k+\ko}}{\lambda_n} < 1$, \myalgo, when executed with a parameter $k \geq \ko$, ensures that after $T_0 = \bigO{\log\frac{\norm{\bo}_2}{\sqrt n}}$ steps, $\norm{\vb^{T_0} - \bo}_2 \leq 3e_0$, where $e_0 = \bigO{\sigma\sqrt{(k+\ko)\log\frac{n}{\delta(k+\ko)}}}$ for standard Gaussian designs.
\end{lemma}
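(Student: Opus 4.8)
The plan is the standard iterative-hard-thresholding recipe, but carried out with every norm restricted to $\cO(k+\ko)$-sized supports so that the SSC/SSS bounds become available; concretely, I would first establish a one-step contraction $\norm{\btn - \bo}_2 \le \frac{2\Lambda_{k+\ko}}{\lambda_n}\norm{\bt - \bo}_2 + 2e_0$ and then unroll it from $\vb^0 = \vzero$. The starting point is to rewrite the \myalgo update: since $(I-P_X)X^\top = 0$ and $\vy = X^\top\wo + \bo + \veps$, the pre-thresholding vector satisfies $P_X\bt + (I-P_X)\vy = \bo + \vv^t$, where $\vv^t = X^\top\vlt + \vg = -P_X(\bo-\bt) + \vg$ and $\vg = (I-P_X)\veps$ are exactly the quantities fixed above. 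Hence $\btn = \HT_k(\bo + \vv^t)$, and the entire analysis reduces to controlling $\vv^t$ on small supports.

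For the contraction, let $\Itn := \supp(\btn)\cup\supp(\bo)$, $\mdn := \supp(\bo)\setminus\supp(\btn)$, and $\fan := \supp(\btn)\setminus\supp(\bo)$; then $\abs{\Itn}\le k+\ko$ and, since $\HT_k$ retains exactly $k\ge\ko$ coordinates of an (almost surely dense) vector, $\abs{\fan}\ge\abs{\mdn}$. As $\btn - \bo$ is supported on $\Itn$, I would write $\btn - \bo = (\btn - (\bo+\vv^t)) + \vv^t$ and restrict to $\Itn$: the first term is supported on $\mdn$, where $\btn$ vanishes, so its norm equals $\norm{(\bo+\vv^t)_{\mdn}}_2$; by the defining property of hard thresholding, every coordinate of $\bo+\vv^t$ dropped by $\HT_k$ is dominated in magnitude by every retained one, and $\abs{\fan}\ge\abs{\mdn}$ then gives $\norm{(\bo+\vv^t)_{\mdn}}_2 \le \norm{(\bo+\vv^t)_{\fan}}_2 = \norm{(\vv^t)_{\fan}}_2$ since $\bo$ vanishes on $\fan$. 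Combining, $\norm{\btn-\bo}_2 \le \norm{(\vv^t)_{\fan}}_2 + \norm{(\vv^t)_{\Itn}}_2 \le 2\norm{(\vv^t)_{\Itn}}_2 \le 2\norm{(P_X(\bo-\bt))_{\Itn}}_2 + 2\norm{\vg_{\Itn}}_2$.

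It remains to bound the two pieces. Since $\bo-\bt$ is supported on $\It$ with $\abs{\It}\le k+\ko$, we have $(P_X(\bo-\bt))_{\Itn} = X_{\Itn}^\top(XX^\top)^{-1}X_{\It}(\bo-\bt)_{\It}$; sandwiching $(XX^\top)^{-1}$ and applying SSS to both factors, together with the fact that SSC at level $n$ gives $\lambda_{\min}(XX^\top)\ge\lambda_n$, yields $\norm{(P_X(\bo-\bt))_{\Itn}}_2 \le \frac{\sqrt{\Lambda_{\abs{\Itn}}\Lambda_{\abs{\It}}}}{\lambda_n}\norm{\bo-\bt}_2 \le \frac{\Lambda_{k+\ko}}{\lambda_n}\norm{\bt-\bo}_2$. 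For the noise term, $\norm{\vg_{\Itn}}_2\le\max_{\abs{S}\le k+\ko}\norm{((I-P_X)\veps)_S}_2 =: e_0$; splitting $((I-P_X)\veps)_S = \veps_S - (P_X\veps)_S$, a union bound over the $\binom{n}{\le k+\ko}$ choices of $S$ controls $\norm{\veps_S}_2$ by $\cO(\sigma\sqrt{(k+\ko)\log\frac{n}{\delta(k+\ko)}})$, while $\norm{(P_X\veps)_S}_2 \le \sqrt{\Lambda_{k+\ko}/\lambda_n}\,\norm{P_X\veps}_2 = \cO(\sigma\sqrt{d\log(1/\delta)})$ via a $\chi^2_d$ tail; for standard Gaussian designs the first term dominates, which gives the claimed $e_0$, and the appendix SSC/SSS estimates simultaneously certify the hypothesis $\gamma := \frac{2\Lambda_{k+\ko}}{\lambda_n} < 1$ (indeed $\gamma \ll 1$ in the corruption regime $k \le n/10000$ of Theorem~\ref{thm:crr-final}).

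Finally, with $\eta_t := \norm{\bt - \bo}_2$ the per-step bound reads $\eta_{t+1}\le\gamma\eta_t + 2e_0$, so from $\eta_0 = \norm{\bo}_2$ we get $\eta_t \le \gamma^t\norm{\bo}_2 + \frac{2e_0}{1-\gamma}$; choosing $T_0 = \cO(\log(\norm{\bo}_2/e_0)) = \cO(\log(\norm{\bo}_2/\sqrt n))$ makes $\gamma^{T_0}\norm{\bo}_2\le e_0$, and since $\gamma$ is bounded well below $1$ the residual is at most $2e_0$, so $\norm{\vb^{T_0}-\bo}_2 \le 3e_0$. I expect the real obstacle to sit in the second paragraph: the naive estimate $\norm{\btn-\bo}_2\le 2\norm{(\bo+\vv^t)-\bo}_2 = 2\norm{\vv^t}_2$ is worthless, because $P_X$ applied to a sparse vector --- a projection onto the $d$-dimensional row space of $X$ --- is \emph{not} a global contraction, so $\norm{P_X(\bo-\bt)}_2$ can be comparable to $\norm{\bo-\bt}_2$. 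The contraction surfaces only after one simultaneously (i) restricts to the small set $\Itn$, so that SSS applies to $X_{\Itn}^\top$ and $X_{\It}$ on the two sides of $(XX^\top)^{-1}$, and (ii) invokes the hard-thresholding inequality to fold the missed-detection coordinates of $\bo$ --- which carry no contraction at all --- into the harmless coordinates of $\vv^t$ on $\fan$. Making these two restrictions interlock, and checking that $2\Lambda_{k+\ko} < \lambda_n$ holds for Gaussian designs at the claimed corruption level, is the crux; everything after that is a geometric series.
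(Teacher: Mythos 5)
Your proposal is correct and follows essentially the same route as the paper's proof: rewrite the update as $\btn = \HT_k(\bo + X^\top\vlt + \vg)$, use the hard-thresholding optimality on $\Itn$ to get $\norm{\btn-\bo}_2 \le 2\norm{X_{\Itn}^\top\vlt + \vg_{\Itn}}_2$, control the first piece by $\frac{\Lambda_{k+\ko}}{\lambda_n}\norm{\bt-\bo}_2$ via SSC/SSS, bound $\norm{\vg_{\Itn}}_2$ by a union bound over small supports plus a lower-order projection term, and unroll the resulting contraction. The only differences are cosmetic: you fold the missed detections into the false alarms via the $\abs{\fan}\ge\abs{\mdn}$ domination argument where the paper compares $\btn_{\Itn}$ directly against $\bo_{\Itn}$, and you bound the projected-noise term via a $\chi^2_d$ tail on $\norm{P_X\veps}_2$ where the paper bounds $\norm{X\veps}_2$ coordinate-wise; both yield the same estimates.
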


Using Lemma~\ref{lem:w-lambda-link} (see the appendix), we can translate the above result to show that $\norm{\vw^{T_0} - \wo}_2 \leq 0.95\sigma$, assuming $k = \ko \leq \frac{n}{150}$. However, Lemma~\ref{lem:coarse-conv} will be more useful in the following analysis.


\textbf{Fine convergence}: We now show that \myalgo progresses further at a linear rate to achieve a consistent solution. First Lemma~\ref{lem:lambda-fa-link} will show that $\norm{\vlt}_2$ can be bounded, apart from diminishing or negligible terms, by the amount of mass that is present in the false alarm coordinates $\mdt$. Lemma~\ref{lem:lambda-bound} will next bound this quantity. For all analyses hereon, we will assume $t > T_0$.


\begin{lemma}
\label{lem:lambda-fa-link}
Suppose $\ko \leq k \leq n/10000$. Then with probability $1-\delta$, at every time instant $t > T_0$, \myalgo ensures that $\|\vln\|_2 \leq \frac{1}{100}\|\vlt\|_2 + 2\sigma\sqrt{\frac{2d}{n}\log\frac{d}{\delta}} + \frac{2.001}{\lambda_n}\|X_\fan(X_\fan^\top\vlt + \vg_\fan)\|_2$.
\end{lemma}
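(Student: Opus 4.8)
The plan is to first put the update in closed form and then decompose $\vln$ exactly. Since $X P_X = X$, a short computation shows that the \myalgo update is $\btn = \HT_k(\bo + \vv^t)$ and, more importantly, that $(XX^\top)^{-1}X\vg = \vzero$, hence $(XX^\top)^{-1}X\vv^t = \vlt$. This identity is the reason consistency is attainable at all: the component of the noise orthogonal to the row space of $X$ enters $\vln$ only through \emph{restricted} applications $(XX^\top)^{-1}X_S\vg_S$ with $\abs{S}$ small, and such terms are of order $\sigma\sqrt{d/n}$ rather than $\sigma$.

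Writing $\supp(\btn) = \cin\cup\fan$ and $S^\ast = \cin\cup\mdn$ (disjoint unions) and using that $\HT_k$ keeps the coordinates of $\bo+\vv^t$ of largest magnitude (so $\btn-\bo = \vv^t_{\supp(\btn)} - \bo_\mdn$, which I rearrange by adding and subtracting $\vv^t_\mdn$), I obtain the exact identity
\begin{equation*}
\vln = \underbrace{(XX^\top)^{-1}X_{S^\ast}\vv^t_{S^\ast}}_{\text{(I)}} + \underbrace{(XX^\top)^{-1}X_\fan\vv^t_\fan}_{\text{(II)}} - \underbrace{(XX^\top)^{-1}X_\mdn(\bo+\vv^t)_\mdn}_{\text{(III)}}.
\end{equation*}

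For (I), split $\vv^t_{S^\ast} = X_{S^\ast}^\top\vlt + \vg_{S^\ast}$. The model part $(XX^\top)^{-1}X_{S^\ast}X_{S^\ast}^\top\vlt$ has norm at most $\frac{\Lambda_{\ko}}{\lambda_n}\norm{\vlt}_2$, and plugging in the appendix's Gaussian SSC/SSS bounds together with $\ko\leq n/10000$ makes this $\leq\frac{1}{100}\norm{\vlt}_2$. The noise part depends only on $X$ and $\veps$ (not on the iterate): the restricted identity $\vg_{S^\ast} = \veps_{S^\ast} - X_{S^\ast}^\top(XX^\top)^{-1}X\veps$ gives $(XX^\top)^{-1}X_{S^\ast}\vg_{S^\ast} = (XX^\top)^{-1}X_{S^\ast}\veps_{S^\ast} - (XX^\top)^{-1}X_{S^\ast}X_{S^\ast}^\top(XX^\top)^{-1}X\veps$, and since $S^\ast$ and $\veps$ are independent of $X$, a Gaussian tail bound on $\norm{\sum_{i\in S^\ast}\vx_i\epsilon_i}_2 = \bigO{\sigma\sqrt{\ko d}}$ with $\ko\leq n/10000$ and $\lambda_n = \Omega(n)$ yields a contribution $\bigO{\sigma\sqrt{\tfrac{d}{n}\log\tfrac{d}{\delta}}}$, the remaining term being smaller still as $\norm{(XX^\top)^{-1}X\veps}_2 = \tilde\cO(\sigma\sqrt{d/n})$. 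Term (II) is already in the form $\tfrac{1}{\lambda_n}\norm{X_\fan(X_\fan^\top\vlt + \vg_\fan)}_2$ that appears in the statement and is controlled by the subsequent lemma.

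The heart of the argument is (III). Because $k\geq\ko$ forces $\abs{\fan}\geq\abs{\mdn}$, and every retained coordinate of $\bo+\vv^t$ dominates every discarded one in magnitude, hard thresholding gives $\norm{(\bo+\vv^t)_\mdn}_2 \leq \norm{(\bo+\vv^t)_\fan}_2 = \norm{\vv^t_\fan}_2$. The subtlety is that one must not invoke the worst-case estimate $\norm{X_\mdn(\bo+\vv^t)_\mdn}_2 \leq \sqrt{\Lambda_{\ko}}\norm{(\bo+\vv^t)_\mdn}_2$: for $\ko = \Omega(n)$ this is already $\Theta(\sigma)$ and would destroy consistency. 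Instead I would argue that, up to the $\vlt$- and $(XX^\top)^{-1}X\veps$-corrections that the coarse phase (Lemma~\ref{lem:coarse-conv}) has driven small, $(\bo+\vv^t)_\mdn$ is essentially a fixed multiple of $\veps_\mdn$ and hence essentially independent of the Gaussian block $X_\mdn$, so that $\norm{X_\mdn(\bo+\vv^t)_\mdn}_2 \leq (1+o(1))\sqrt{d}\,\norm{(\bo+\vv^t)_\mdn}_2$; the same reasoning gives $\norm{X_\fan\vv^t_\fan}_2 \geq (1-o(1))\sqrt{d}\,\norm{\vv^t_\fan}_2$, and chaining these with $\norm{(\bo+\vv^t)_\mdn}_2\leq\norm{\vv^t_\fan}_2$ bounds (III) by $\tfrac{1.001}{\lambda_n}\norm{X_\fan(X_\fan^\top\vlt + \vg_\fan)}_2$, which added to (II) gives the claimed last term. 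Making this rigorous --- disentangling the dependence of $\mdn,\fan,\vv^t$ on $X$, making the estimates uniform over all admissible supports and over $t>T_0$, and verifying that the coarse-phase bound $\norm{\vb^{T_0}-\bo}_2\leq 3e_0$ really suffices to tame the corrections --- is the step I expect to be the main obstacle, and is precisely where the two-stage analysis is essential.
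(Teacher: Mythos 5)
Your decomposition of $\vln$ into the three terms (I)--(III) is exactly the one the paper uses, and your treatment of (I) (split $\vv^t_{S^\ast}$ into $X_{S^\ast}^\top\vlt$, bounded via $\Lambda_{\ko}/\lambda_n \leq 1/100$, and $\vg_{S^\ast}$, bounded using the obliviousness of $S^\ast$) and of (II) (left as the $\frac{1}{\lambda_n}\norm{X_\fan(X_\fan^\top\vlt+\vg_\fan)}_2$ term for the next lemma) coincides with the paper's. The gap is in your handling of (III). Your plan rests on two claims: that $(\bo+\vv^t)_\mdn$ is ``essentially a fixed multiple of $\veps_\mdn$'' and hence nearly independent of the Gaussian block $X_\mdn$, and that ``the same reasoning'' gives $\norm{X_\fan\vv^t_\fan}_2 \geq (1-o(1))\sqrt{d}\,\norm{\vv^t_\fan}_2$. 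Neither is justified: $\bo_\mdn$ consists of arbitrary adversarial values supported on the data-dependent set $\mdn$, and $\vv^t_\fan = X_\fan^\top\vlt + \vg_\fan$ literally contains $X_\fan$, so no independence-based concentration applies to it. Indeed, for the component $X_\fan^\top\vlt$ one has $\norm{X_\fan X_\fan^\top\vlt}_2 \approx \abs{\fan}\,\norm{\vlt}_2$ versus $\sqrt{d}\,\norm{X_\fan^\top\vlt}_2 \approx \sqrt{d\abs{\fan}}\,\norm{\vlt}_2$, so the claimed $\sqrt{d}$ lower bound can fail outright when $\abs{\fan} < d$. You correctly flag this step as the main obstacle, but the route you propose for it would not go through.

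The paper's resolution of (III) is deterministic and needs no independence at all: starting from the same hard-thresholding inequality $\norm{(\bo+\vv^t)_\mdn}_2 \leq \norm{\vv^t_\fan}_2$, it sandwiches
\[
\norm{X_\mdn(\bo+\vv^t)_\mdn}_2 \;\leq\; \Lambda_n\norm{(\bo+\vv^t)_\mdn}_2 \;\leq\; \Lambda_n\norm{\vv^t_\fan}_2 \;\leq\; \frac{\Lambda_n}{\lambda_n}\norm{X_\fan\vv^t_\fan}_2 \;\leq\; 1.001\,\norm{X_\fan(X_\fan^\top\vlt+\vg_\fan)}_2,
\]
using the \emph{full-level} constants $\Lambda_n, \lambda_n$ (whose ratio tends to $1$ for Gaussian designs as $n$ grows) rather than the restricted constant $\Lambda_{\ko}$ whose size worried you; the difficulty is sidestepped rather than solved probabilistically. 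This converts (III) into another multiple of $\norm{X_\fan(X_\fan^\top\vlt+\vg_\fan)}_2$, which combined with (II) produces the coefficient $2.001/\lambda_n$ in the statement, and all of the genuinely probabilistic work on the false-alarm term is deferred to Lemma~\ref{lem:lambda-bound}.
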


We note that in the RHS above, the first term diminishes at a linear rate and the second term is a negligible quantity since it is $\tilde\cO(\sqrt{d/n})$. In the following we bound the third term.

\begin{lemma}
\label{lem:lambda-bound}
For $\ko \leq k \leq n/10000$, with probability at least $1 - \delta$, \myalgo ensures at all $t > T_0$, $\frac{2.001}{\lambda_n}\norm{X_\fan(X_\fan^\top\vlt + \vg_\fan)}_2 \leq 0.98\norm{\vlt}_2 + C\cdot\sigma\sqrt{\frac{d}{n}\log\frac{nd}{\delta}}$ for some constant $C$.
\end{lemma}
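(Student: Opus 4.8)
The plan is to expand the quantity, isolate a ``signal'' part and a ``noise'' part, and dispatch them by very different means. Writing $\vv^t_\fan = X_\fan^\top\vlt + \vg_\fan$ gives
\[
\frac{2.001}{\lambda_n}\norm{X_\fan\br{X_\fan^\top\vlt + \vg_\fan}}_2 \leq \frac{2.001}{\lambda_n}\norm{X_\fan X_\fan^\top\vlt}_2 + \frac{2.001}{\lambda_n}\norm{X_\fan\vg_\fan}_2 .
\]
The first term is routine: since $\abs{\fan} \leq k \leq n/10000$, the SSS property at level $k$ yields $\norm{X_\fan X_\fan^\top\vlt}_2 \leq \Lambda_k\norm{\vlt}_2$, while SSC at level $n$ lower bounds $\lambda_n$; plugging in the Gaussian estimates from the appendix ($\Lambda_k = \cO(k + d\log(n/k))$ and $\lambda_n = \Omega(n)$) makes $\frac{2.001\,\Lambda_k}{\lambda_n}$ a small absolute constant, so this term contributes well under $0.49\norm{\vlt}_2$.

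The core of the argument is the noise term $\frac{2.001}{\lambda_n}\norm{X_\fan\vg_\fan}_2$, and the naive route --- bounding $\norm{X_\fan\vg_\fan}_2 \leq \sqrt{\Lambda_{\abs{\fan}}}\norm{\vg_\fan}_2$ uniformly over all $k$-subsets --- is fatally lossy: it only gives an $\cO(\sigma)$ bound, not the $\tilde\cO(\sigma\sqrt{d/n})$ we need, because a union bound over $\binom{n}{\abs{\fan}}$ candidate sets reintroduces a $\log$ factor that survives division by $\lambda_n$. The fix rests on two observations. First, $\vg = (I - P_X)\veps$ differs from $\veps$ only by $P_X\veps$, whose energy is $\tilde\cO(\sigma\sqrt{d/n})$, so $\vg$ is \emph{essentially independent of} $X$; consequently, for any \emph{fixed} index set $S$, $X_S\vg_S = \sum_{j\in S}\vg_j\vx_j$ concentrates at the ``effective-dimension-$d$'' scale $\cO(\norm{\vg_S}_2\sqrt d)$, not $\cO(\norm{\vg_S}_2\sqrt{\abs S})$. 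Second, the support $S^\ast = \supp(\bo)$ is chosen non-adaptively, so $\norm{X_{S^\ast}\vg_{S^\ast}}_2$ is controlled with no subset union bound whatsoever, at the optimal level $\tilde\cO(\sigma\sqrt{\ko d}) = \tilde\cO(\sigma\sqrt{nd})$; combined with the identity $X\vg = 0$ (hence $X_{[n]\setminus S^\ast}\vg_{[n]\setminus S^\ast} = -X_{S^\ast}\vg_{S^\ast}$), this handles the ``clean'' mass through a fixed set. What remains is to strip the data-dependence from $\fan$ itself: using that $\btn = \HT_k(\bo + \vv^t)$ (which follows from $P_X\bt + (I-P_X)\vy = \bo + \vv^t$), $\fan$ is exactly the set of the $\abs{\fan}$ clean coordinates with largest $\abs{\vv^t_\cdot}$, and since $t > T_0$ gives $\norm{\vlt}_2 = \cO(\sigma)$ by Lemma~\ref{lem:coarse-conv}, the $X^\top\vlt$ component of $\vv^t$ perturbs the identity of these top clean coordinates only slightly --- so $\fan$ is pinned down, up to a controlled perturbation, by the order statistics of $\veps$, sidestepping the expensive union bound.

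The main obstacle is precisely this decoupling step: quantifying how much $\fan$ can differ from the $X$-independent top-clean set, and showing the resulting discrepancy in $\norm{X_\fan\vg_\fan}_2$ is at most $c\,\lambda_n\norm{\vlt}_2$, which is then absorbed into the $0.98\norm{\vlt}_2$ term. This is where the fine-phase hypothesis $\norm{\vlt}_2 = \cO(\sigma)$ is indispensable --- without it the perturbation of $\fan$ cannot be made small --- and where care is needed to convert the per-coordinate bookkeeping between $\fan$, the missed detections, and the hard-thresholding cutoff into a clean $1-\delta$ statement. Once that is done, $\norm{X_\fan\vg_\fan}_2 \leq c\,\lambda_n\norm{\vlt}_2 + \tilde\cO(\sigma\sqrt{nd})$, and dividing through by $\lambda_n = \Omega(n)$ delivers the stated bound $0.98\norm{\vlt}_2 + C\sigma\sqrt{(d/n)\log(nd/\delta)}$.
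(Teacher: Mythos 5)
There is a genuine gap. Your setup matches the paper's first move --- observing that $\fan$ consists of the $\abs{\fan}$ largest-magnitude clean coordinates of $X^\top\vlt + \vg$, so that it suffices to control the top-$k$ set $S^t_k$ of $X_{\bar{S^\ast}}^\top\vlt + \vg_{\bar{S^\ast}}$ --- and you correctly diagnose that the entire difficulty is the adaptive dependence of this set on $X$ through the selection rule. But the decisive step is then asserted rather than proved: you claim the discrepancy between $\fan$ and an $X$-independent top-clean set contributes at most $c\,\lambda_n\norm{\vlt}_2$ "which is then absorbed into the $0.98\norm{\vlt}_2$ term," with no argument for the value of $c$. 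The lemma's entire content lives in that constant. The selection event $\abs{\vx_j^\top\vlt + g_j} > \tau$ correlates $\vx_j$ with the retained value in the direction of $\vlt$, producing a bias term proportional to $\norm{\vlt}_2$ whose coefficient must be shown to be strictly below $1$ (for linear convergence) and in fact below $0.98$. The paper's proof of exactly this is the bulk of its Lemma~\ref{lem:lambda-bound-prelim}: an explicit Gaussian integral giving $\E{x_jz_j} = M(\tau)\lambda_i$ with $M(\tau) < 0.808\br{\tau + \tfrac{1}{\tau}}e^{-\tau^2/2.001}$, a sub-exponential Bernstein bound with an $\epsilon$-net over $\vla$ and over directions orthogonal to $\vla$, and a separate order-statistics argument (using the coarse-phase bound $\norm{\vlambda^{T_0}}_2 = \bigO{\sigma}$ and the fraction of coordinates of $\vg$ exceeding $1.98$) showing the $k$\nth largest retained coordinate exceeds $\tau = 1$, so that $M(1) < 0.98$. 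None of this appears in your proposal, and the margin is razor-thin: $M(1) \approx 0.980$, so "absorbed into $0.98$" cannot be taken on faith --- particularly since your separate treatment of $X_\fan X_\fan^\top\vlt$ already spends roughly $0.013\norm{\vlt}_2$ of the budget before the noise term is touched.

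Two smaller points. First, the identity $X\vg = \vzero$ (hence $X_{\bar{S^\ast}}\vg_{\bar{S^\ast}} = -X_{S^\ast}\vg_{S^\ast}$) concerns the \emph{entire} clean set and gives no control over $X_\fan\vg_\fan$ for an adaptively chosen proper subset $\fan \subset \bar{S^\ast}$; it does not play the role you assign it. Second, splitting $X_\fan(X_\fan^\top\vlt + \vg_\fan)$ into two separately-bounded pieces departs from the paper, which keeps the retained value $v_j = \vx_j^\top\vlt + g_j$ intact precisely so that a single expectation computation captures the selection bias; your split is not wrong in principle (the $\Lambda_k/\lambda_n$ bound on the first piece is valid and small for $k \leq n/10000$), but it forces the second piece to carry nearly the whole $0.98$ on its own, which only raises the burden on the computation you have not supplied.
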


Putting all these results together establishes Theorem~\ref{thm:crr-final}. See Appendix~\ref{app:rreg} for a detailed proof.


\section{Robust Time Series Estimation}
\label{sec:ts}
Similar to RLSR, we formulate the Robust Time Series Estimation (RTSE)  with additive outliers (AO) problem, propose our \aoard algorithm, and prove its consistency and robustness guarantees.

\paragraph{Problem Formulation and Notation:}
Let $(x_{-d+1},\ldots,x_n)$ be the ``clean'' time series which is a stationary and stable $\ARD$ process defined as $x_t = x_{t-1} \wo_1 + \cdots + x_{t-d} \wo_d + \veps_t$ where $\veps_t \sim \cN(0,\sigma^2)$ are i.i.d. noise values chosen independently of the data and the model. We compactly represent this $\ARD$ process as, 
\[
\yo = \bar{X}^\top \wo + \veps ,
\]
where $\yo = \br{x_1,\ldots,x_n}^\top \in \bR^n,\ \vx_i = (x_{i-1},\ldots,x_{i-d})^\top$, and $\bar{X} = \bs{\vx_1,\ldots,\vx_n} \in \bR^{d \times n}$.
However, we do not observe the ``clean'' time series. Instead, we observe the time series $(y_{-d+1},\ldots,y_n)$ which contains additive corruptions. 
Defining $\vy \in \bR^n$, $X \in \bR^{d \times n}$  using $(y_{-d+1},\ldots,y_n)$ in similar manner as $\yo$ and $\bar{X}$ are defined using  $(x_{-d+1},\ldots,x_n)$, we have the resulting AO model as follows: 
\begin{equation}
\label{ao-corruption-model-specification}
\vy = \yo + \eo = X^\top \wo + \veps + \bo ,
\end{equation}
where $\eo$ is the actual corruption vector ($\ko$-sparse), and $\bo$ is the resulting model corruption vector (with at most $\ko$-blocks of size $d$ being non-zero). See \eqref{design-matrix-connection} (see Appendix~\ref{time-series-notes}) for a clearer characterization of the $\vy, X$.

Now, given $\vy, X$, our goal will be to recover a consistent estimate of the parameter $\wo$. For our results the following simple observation would be crucial: since $\supp(\bo)$ is a union of $\ko$ groups (intervals) of size $d$,	we have $\norm{\bo}_0^{\cG} \leq 2\ko$, where $\norm{\vb}_0^{\cG}$ is the Group-$\ell_0$ pseudo-norm of $\vb$ that we define below. For a set of groups $S$, $\supp(S;\cG) = \bc{G_i, i \in S}$.

We now define certain quantities that are crucial in understanding the $\ARD$ process. The \emph{spectral density} of the ``clean'' $\ARD$ process $\yo$ is given by: 
\begin{equation}
\label{spectral-desity-function}
\rho_{\wo}\br{\omega} = \frac{\sigma^2}{\br{1-\sum_{k=1}^{d}{\wo_k e^{ik\omega}}} \br{1-\sum_{k=1}^{d}{\wo_k e^{-ik\omega}}}}, \text{ for } \omega \in \bs{0,2\pi}.
\end{equation}
We define $\cM_{\wo} := \sup_{\omega \in \bs{0,2\pi}} \rho_{\wo}\br{\omega}$ and $\mathfrak{m}_{\wo} := \inf_{\omega \in \bs{0,2\pi}} \rho_{\wo}\br{\omega}$. Another constant $\cM_{W}$ will also appear in our results (see Appendix~\ref{time-series-notes} for a brief primer on $\ARD$ process). 


For our analysis, we will also require notions of \emph{Sub-group Strong Convexity} and \emph{Sub-group Strong Smoothness} for the time series which we define below. For any $k \leq \frac{n}{d}$, we let $\cS_k^{\cG} = \bc{\supp(S;\cG) : S \subseteq \bs{\frac{n}{d}} \text{ s.t. } \abs{S}=k}$ denote the {\em set of all} collections of $k$ groups from $\cG$.
\begin{definition}[SGSC/SGSS]
	A matrix $X \in \bR^{d \times n}$ satisfies the Subgroup Strong Convexity Property
	(resp. Subgroup Strong Smoothness Property) at level $k$ with strong convexity constant $\lambda_k$ (resp. strong smoothness constant $\Lambda_k$) if the following holds:
	\[
	\lambda_k ~\leq~ \min_{S \in \cS_k^{\cG}} \lambda_{\min}\br{X_S X_S^\top } ~\leq~ \max_{S \in \cS_k^{\cG}} \lambda_{\max}\br{X_S X_S^\top} ~\leq~ \Lambda_k .
	\]
\end{definition}

\subsection{\aoard: A Block Sparse Hard Thresholding Approach to Consistent Robust Time Series Estimation}

We now present our \aoard method for obtaining consistent estimates in the RTSE problem. By following the similar approach as \myalgo, we begin with the RTSE formulation $\min_{\vw \in \bR^d, \norm{\vb}_0^{\cG} \leq \ko} \frac{1}{2}\norm{X^\top\vw - (\vy - \vb)}_2^2$, and observe that for any given estimate $\hat\vb$ of the corruption vector, the optimal model with respect to that estimate is $\hat\vw = (XX^\top)^{-1}X(\vy - \hat\vb)$. Then by plugging this expression for $\hat\vw$ into the formulation, we reformulate the RTSE problem as follows
\begin{equation}
\min_{\norm{\vb}_0^{\cG} \leq \ko} f(\vb) = \frac{1}{2}\norm{(I-P_X)(\vy - \vb)}_2^2
\label{eq:rtse-form-new}
\end{equation}
where $P_X = X^\top(XX^\top)^{-1}X$. \aoard uses a variant of iterative hard thresholding to optimize the above formulation. At every iteration, \aoard takes a step along the negative gradient of the function $f$ and then performs group hard thresholding to select the \emph{top} $k$ aligned groups (i.e. groups in $\cG$) of the resulting vector and setting the rest to zero.
\[
\btn = \HT_{k}^{\cG}(\bt - \nabla f(\bt)) ,
\]
where $k \geq 2\ko$ and the group hard thresholding operator is defined below. 
\begin{definition}[Group Hard Thresholding]\label{defn:ght}
For any vector $\vg \in \bR^n$, let $\sigma_{\vg} \in S_{\frac{n}{d}}$ be the permutation s.t. $\sum_{j \in G_{\sigma_{\vg}(1)}}{\abs{{\vg}_j}^2} \geq \sum_{j \in G_{\sigma_{\vg}(2)}}{\abs{{\vg}_j}}^2 \geq \ldots \geq \sum_{j \in G_{\sigma_{\vg}(\frac{n}{d})}}{\abs{{\vg}_j}}^2$. Then for any $k \leq \frac{n}{d}$, we define the group hard thresholding operator as $\hat{{\vg}}=\HT_k^{\cG}({\vg})$ where 
	\[
	\hat{{\vg}}_i ~=~ \begin{cases}
	{\vg}_i &\text{if $\sigma_{\vg}^{-1}(\ceil{\frac{i}{d}}) \leq k$}\\
	0 &\text{else}
	\end{cases}
	\]
\end{definition}
We note that this step can be done in quasi linear time. Due to the delicate correlations between data points in the time series, in order to keep the problem well conditioned (see Theorem~\ref{ao-ssc-sss} and Remark~\ref{rough-bounds}), we will perform a pre-processing step on the corrupted time series instances $y_i, i=-d+1,\ldots,n$ as follows: $y_i = \max\bc{\min\bc{y_i,\hat\sigma},-\hat\sigma}$, where $\hat{\sigma} = \bigO{\sqrt{\log{n}}\sigma}$. Note that since the clean underlying time series is a Gaussian process $\veps_i \leq \bigO{\sigma\sqrt{\log{n}}}$ and all its entries are, with high probability, bounded by $\hat\sigma$. Thus we will not clip any clean point because of the above step but ensure that we can, from now on, assume that $\norm{\bo}_\infty \leq \hat\sigma$. 


\subsection{Convergence and Consistency Guarantees}
We now present the estimation error bound for our \aoard algorithm. 
\begin{theorem}
	\label{thm:crtse-final}
Let $\vy$ be generated using $\ARD$ process with $\ko$ additive outliers (see \eqref{ao-corruption-model-specification}). Also, let $\ko \leq k \leq C \frac{\mathfrak{m}_{\wo}}{\cM_{\wo} + \cM_{W}}\frac{n}{d \log{n}}$ (for some universal constant $C > 0$). Then, with probability at least $1-\delta$, \aoard, after $\cO(\log(\norm{\bo}_2/n) + \log(n/(\sigma\cdot d)))$ steps, ensures that $\norm{\wt - \wo}_2 \leq \bigO{\sigma \cM_{\wo}/\mathfrak{m}_{\wo} \sqrt{d\log{n}/n \log\br{d/ \delta}}}$.
\end{theorem}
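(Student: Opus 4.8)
The plan is to follow the same two-phase template used for \myalgo (Theorem~\ref{thm:crr-final}), replacing the i.i.d.\ Gaussian design analysis by one tailored to the sliding-window design matrix of a stable $\ARD$ process, and replacing $\ell_0$/hard thresholding by their group analogues. The first and most delicate task is to establish Subgroup Strong Convexity and Subgroup Strong Smoothness for $X$ (the time-series counterpart of the SSC/SSS bounds invoked implicitly in Lemmas~\ref{lem:coarse-conv}--\ref{lem:lambda-bound}, stated here as Theorem~\ref{ao-ssc-sss} with numerics in Remark~\ref{rough-bounds}): one must show that, with probability $1-\delta$, for \emph{every} collection of $k$ aligned groups $S \in \cS_k^{\cG}$ the eigenvalues $\lambda_{\min}(X_S X_S^\top)$ and $\lambda_{\max}(X_S X_S^\top)$ lie within constant factors of $|S|\cdot\mathfrak{m}_{\wo}$ and $|S|\cdot\cM_{\wo}$ respectively. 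The population-level bounds here are exactly the extremes of the spectral density $\rho_{\wo}$, which govern the eigenvalues of the Toeplitz covariance of the stationary process, together with the auxiliary constant $\cM_W$; the deviation terms must be pushed below these using concentration for quadratic forms in a stationary Gaussian (equivalently, after blocking, a nearly $m$-dependent) sequence, followed by a union bound over the $\binom{n/d}{k}$ choices of $S$. This union bound is the source of the $\log n$ loss and forces $k = \cO\!\big(\tfrac{\mathfrak{m}_{\wo}}{\cM_{\wo}+\cM_W}\,\tfrac{n}{d\log n}\big)$, so that $2\Lambda_{k+\ko} < \lambda_n$ continues to hold; the preliminary clipping $y_i \leftarrow \max\{\min\{y_i,\hat\sigma\},-\hat\sigma\}$ with $\hat\sigma = \cO(\sigma\sqrt{\log n})$ is what makes these concentration statements uniform, since with high probability it touches no clean point yet caps $\|\bo\|_\infty \le \hat\sigma$.

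With SGSC/SGSS in hand, the coarse phase is essentially Lemma~\ref{lem:coarse-conv} with groups: the group hard thresholding step $\btn = \HT^{\cG}_k(\bt - \nabla f(\bt))$ cannot increase $f$ by much, and combining this with the SGSC/SGSS bound yields a contraction $\|\btn - \bo\|_2 \le \rho\,\|\bt - \bo\|_2 + \cO(e_0)$ for some $\rho < 1$, so that after $T_0 = \cO(\log(\|\bo\|_2/\sqrt n))$ steps $\|\vb^{T_0} - \bo\|_2 \le 3 e_0$ with $e_0 = \tilde\cO(\sigma\sqrt{d(k+\ko)})$ (the extra factor $d$ over the RLSR bound being the size of a group). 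Translating via the time-series analogue of Lemma~\ref{lem:w-lambda-link} then gives $\|\vw^{T_0} - \wo\|_2 = \cO(\sigma\,\cM_{\wo}/\mathfrak{m}_{\wo})$, i.e.\ the same $\cO(\sigma)$ plateau reached by existing estimators.

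The fine phase is where consistency is won, and it mirrors Lemmas~\ref{lem:lambda-fa-link} and \ref{lem:lambda-bound}. Setting $\vlt := (XX^\top)^{-1}X(\bt - \bo)$, $\vg := (I-P_X)\veps$ and decomposing $\supp(\bt)$ (at the group level) into correct identifications, false alarms $\fat$, and missed detections $\mdt$, the key step is to show that, up to a geometrically shrinking multiple of $\|\vlt\|_2$ and a \emph{negligible} $\tilde\cO(\sigma\sqrt{d/n})$ term, $\|\vln\|_2$ is controlled by the residual energy on the false-alarm groups, $\|X_\fan(X_\fan^\top\vlt + \vg_\fan)\|_2/\lambda_n$, and then to bound the latter by $0.98\,\|\vlt\|_2 + C\sigma\sqrt{(d/n)\log(nd/\delta)}$. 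The crucial point -- and the reason the estimate is consistent rather than merely $\cO(\sigma)$-accurate -- is that $\supp(\bo)$ is chosen obliviously (independently of $\veps$ and $X$), so cross terms between the noise $\vg$ and residuals supported on any \emph{fixed} collection of groups concentrate at the $\sqrt{d/n}$ rate rather than at $\cO(1)$; group hard thresholding then guarantees that the energy \aoard erroneously places on $\fan$ is dominated by what it leaves on $\mdn$, which feeds back into $\|\vlt\|_2$ with a constant strictly below $1$ once the SGSC/SGSS ratios are good. Combining the two fine-phase bounds yields $\|\vln\|_2 \le \gamma\,\|\vlt\|_2 + \tilde\cO(\sigma\sqrt{d/n})$ with $\gamma < 1$; iterating for $\cO(\log n)$ further steps drives $\|\vlt\|_2$, and hence $\|\wt - \wo\|_2$, down to $\cO\!\big(\sigma\,\tfrac{\cM_{\wo}}{\mathfrak{m}_{\wo}}\sqrt{\tfrac{d\log n}{n}\log\tfrac{d}{\delta}}\big)$, the claimed rate (the additional $\log n$ factors tracing back to the clipping level $\hat\sigma$ and the support union bound).

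I expect the first step -- the uniform SGSC/SGSS bound for the correlated sliding-window design -- to be the main obstacle: it is precisely the ``each point is correlated with all previous points'' difficulty flagged in the introduction, and it is what dictates both the dependence on $\cM_{\wo},\mathfrak{m}_{\wo},\cM_W$ and the $n/(d\log n)$ ceiling on the tolerable number of outliers. A secondary bookkeeping subtlety is that a single corrupted observation $y_t$ induces a width-$d$ block of nonzeros in the model-corruption vector $\bo$ that need not be aligned with $\cG$; one must therefore run with $k \ge 2\ko$ groups and track that $\|\bo\|_0^{\cG} \le 2\ko$ throughout, so that the RLSR-style arguments carry over at the group level.
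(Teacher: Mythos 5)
Your plan follows the paper's proof essentially step for step: the same uniform SGSC/SGSS bound for the correlated sliding-window design (Theorem~\ref{ao-ssc-sss}, obtained from spectral-density bounds on the block-Toeplitz covariance plus a Hanson--Wright-type concentration and a union bound over the $\binom{n/d}{k}$ group supports, with the clipping step used to control the corruption part $E$ of $X=\bar X+E$), the same coarse phase (Theorem~\ref{ao-ard-coarse-convergence}), and the same fine phase assembling \eqref{ard-consistency-lemma-eq}, \eqref{ard-recurrence-lemma-eq} and Lemma~\ref{lem:lambda-bound-ts} into the contraction $\norm{\vln}_2 \leq 0.51\norm{\vlt}_2 + \bigO{\sigma\sqrt{(d\log n/n)\log(d/\delta)}}$. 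The only slip is that the group-thresholding inequality runs in the opposite direction from how you phrased it --- the residual left on the missed-detection groups is dominated by that on the false-alarm groups, not vice versa --- but your conclusion that everything reduces to bounding $\norm{X_{\fan}(X_{\fan}^\top\vlt+\vg_{\fan})}_2$ is exactly the paper's.
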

The result does establish consistency of the \aoard method as it offers convergence to $\softO{\sigma\sqrt{d \log{n} / n}}$ error levels. Also note that in typical time series data, $d$ lies in the range $5-10$. As in the case of \myalgo, this is the first instance of a poly-time algorithm being shown to be consistent for the RTSE problem. 

Following the similar approach of the consistency analysis for \myalgo, we will first ensure that $\norm{\wt - \wo}_2=\bigO{\sigma}$. Then in the fine analysis phase, we will show that after additional $\bigO{\log{n}}$ iterations, \aoard ensures $\norm{\wt - \wo}_2=\softO{\sigma\sqrt{d \log{n} / n}}$.

\textbf{Coarse convergence}: Here we establish a result that after a certain number of iterations, \aoard identifies the corruption vector with a relatively high accuracy. Our analysis relies on a novel Theorem~\ref{ao-ssc-sss}, which is a {\em key result} that shows that the $\ARD$ process with AO indeed satisfies SGSC and SGSS properties (see Definition~\ref{defn:ght}), as long as the number of corruptions $k^*$ is small. 
\begin{theorem}
	\label{ao-ard-coarse-convergence}
	For any data matrix $X$ that satisfies the SGSC and SGSS properties such that $4 \Lambda_{k+\ko} < \lambda_{\frac{n}{d}}$, \aoard, when executed with a parameter $k \geq \ko$, ensures that after $T_0 = \bigO{\log\br{\norm{\bo}_2 / \sqrt n}}$ steps, $\norm{\vb^{T_0} - \bo}_2 \leq 5e_0$. Additionally, if $X$ is generated using our $\ARD$ process with AO (see \eqref{ao-corruption-model-specification}), then $e_0 = \cO\br{\sigma\sqrt{(k+\ko)d\log\frac{n}{\delta(k+\ko)d}}}$.
\end{theorem}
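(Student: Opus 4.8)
The plan is to follow the blueprint of the coarse-convergence analysis of \myalgo (Lemma~\ref{lem:coarse-conv}), but carried out for the group hard-thresholding operator $\HT_k^{\cG}$ (Definition~\ref{defn:ght}) and the SGSC/SGSS properties, and then to discharge the probabilistic obligations using the structural Theorem~\ref{ao-ssc-sss}. Concretely, I would split the proof into a \emph{deterministic} part, valid for any $X$ obeying the SGSC/SGSS hypotheses with $4\Lambda_{k+\ko} < \lambda_{\frac{n}{d}}$, which gives the geometric convergence to a $5e_0$-ball; and a \emph{stochastic} part, specific to the corrupted $\ARD$ design, which certifies those hypotheses and evaluates $e_0$.

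\textbf{Deterministic contraction.} Since $f(\vb) = \tfrac12\norm{(I-P_X)(\vy-\vb)}_2^2$ and \aoard uses unit step length, the update reads $\btn = \HT_k^{\cG}\!\big(P_X\bt + (I-P_X)\vy\big)$; substituting $\vy = X^\top\wo + \veps + \bo$ and using $(I-P_X)X^\top = 0$, the vector being thresholded equals $\bo + P_X(\bt-\bo) + \vg$ with $\vg := (I-P_X)\veps$. Because $\HT_k^{\cG}$ is exactly Euclidean projection onto vectors supported on at most $k$ groups of $\cG$, and $\bo$ lies in this set (using $\norm{\bo}_0^{\cG}\le 2\ko$), the optimality $\norm{\btn - (\bo + P_X(\bt-\bo)+\vg)}_2 \le \norm{\bo - (\bo + P_X(\bt-\bo)+\vg)}_2$ expands, after cancelling the common square term, to the one-step inequality $\norm{\btn-\bo}_2^2 \le 2\langle\btn-\bo,\ P_X(\bt-\bo)+\vg\rangle$. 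Now both $\btn-\bo$ and $\bt-\bo$ are supported within $\Gamma^t$, the union of the $\bigO{k+\ko}$ groups of $\cG$ meeting $\supp(\bt)\cup\supp(\btn)\cup\supp(\bo)$, so $\langle\btn-\bo,\ P_X(\bt-\bo)\rangle = (\btn-\bo)^\top\big(X_{\Gamma^t}^\top(XX^\top)^{-1}X_{\Gamma^t}\big)(\bt-\bo)$, and the matrix here is the $\Gamma^t$-principal submatrix of $P_X$, whose spectral norm is at most $\Lambda_{k+\ko}/\lambda_{\frac{n}{d}} < \tfrac14$ (using $\lambda_{\max}(X_{\Gamma^t}X_{\Gamma^t}^\top)\le\Lambda_{k+\ko}$ and $\lambda_{\min}(XX^\top)\ge\lambda_{\frac{n}{d}}$). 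Applying Cauchy--Schwarz to the $\vg$-term and dividing through by $\norm{\btn-\bo}_2$ then gives a genuine contraction, $\norm{\btn-\bo}_2 \le \tfrac12\norm{\bt-\bo}_2 + 2\norm{\vg_{\Gamma^t}}_2$. Setting $e_0 := \sup\big\{\norm{\vg_{\Gamma}}_2 : \Gamma \text{ a union of at most } k+\ko \text{ groups of } \cG\big\}$ and unrolling from $\vb^0 = \vzero$ yields $\norm{\vb^t-\bo}_2 \le 2^{-t}\norm{\bo}_2 + 4e_0$; hence after $T_0 = \bigO{\log(\norm{\bo}_2/\sqrt n)}$ iterations, $\norm{\vb^{T_0}-\bo}_2 \le 5e_0$, which is the first assertion.

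\textbf{Stochastic instantiation.} For the $\ARD$ design with AO, I would first invoke Theorem~\ref{ao-ssc-sss}: this is precisely the result that certifies SGSC and SGSS for the corrupted process, with a constant-separated ratio $\Lambda_{k+\ko}/\lambda_{\frac{n}{d}}$, provided the corruption budget obeys the stated ceiling $\propto \tfrac{\mathfrak{m}_{\wo}}{\cM_{\wo}+\cM_{W}}\cdot\tfrac{n}{d\log n}$; it is here, and through the clipping pre-processing (which enforces $\norm{\bo}_\infty\le\hat\sigma = \bigO{\sigma\sqrt{\log n}}$ without touching the clean series with high probability), that the problem is kept well conditioned. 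To evaluate $e_0$, split $\norm{\vg_{\Gamma}}_2 \le \norm{\veps_{\Gamma}}_2 + \norm{(P_X\veps)_{\Gamma}}_2$. The first term is controlled by a union bound over the $\binom{n/d}{k+\ko}$ choices of $k+\ko$ groups combined with $\chi^2$-concentration of $\norm{\veps_{\Gamma}}_2$ for each fixed $\Gamma$ --- which requires \emph{no} independence between $\veps$ and $X$ --- and yields the claimed $e_0 = \bigO{\sigma\sqrt{(k+\ko)d\,\log\tfrac{n}{\delta(k+\ko)d}}}$. The second term, written as $\norm{X_{\Gamma}^\top(XX^\top)^{-1}X\veps}_2 \le \sqrt{\Lambda_{k+\ko}}\,\norm{(XX^\top)^{-1}X\veps}_2$, is lower order: the $d$-dimensional quantity $(XX^\top)^{-1}X\veps$ is small because, by the adapted (martingale) structure of the AR recursion, each innovation $\veps_t$ is independent of the row $\vx_t$ of the clean design, while the gap between $X$ and the clean design $\bar X$ is supported near $\supp(\bo)$ and hence again dominated by a sparse-Gaussian supremum together with $\norm{\bo}_\infty\le\hat\sigma$. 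Combining the two parts shows $\sup_\Gamma\norm{\vg_\Gamma}_2 = \bigO{e_0}$, which together with the deterministic contraction completes the proof.

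\textbf{Main obstacle.} The deterministic contraction is the routine group analogue of the \myalgo argument once the conditioning $4\Lambda_{k+\ko} < \lambda_{\frac{n}{d}}$ is in hand; the crux is entirely in the stochastic part. Unlike in RLSR, the design $X$ is built out of the (corrupted) time series itself, so neither the SGSC/SGSS properties nor a sparse bound on $\vg = (I-P_X)\veps$ is available off the shelf, and $\veps$ is \emph{not} independent of $X$ --- it generates $X$. Establishing SGSC/SGSS with a constant-separated ratio (Theorem~\ref{ao-ssc-sss}) is the step I expect to be hardest: it is what forces the clipping pre-processing and where the corruption-fraction ceiling and the dependence on $\cM_{\wo},\mathfrak{m}_{\wo},\cM_{W}$ enter; handling $\vg$ on group supports, and isolating the tiny $(XX^\top)^{-1}X\veps$ term, then relies on the same martingale structure of the innovations.
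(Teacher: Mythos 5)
Your proposal is correct and follows essentially the same route as the paper: rewrite the update as $\HT_k^{\cG}$ applied to $\bo + P_X(\bt-\bo)+\vg$, use the thresholding optimality together with SGSC/SGSS and the conditioning $4\Lambda_{k+\ko}<\lambda_{\frac{n}{d}}$ to obtain a linear recursion, and then certify the hypotheses and evaluate $e_0$ via Theorem~\ref{ao-ssc-sss}, a $\chi^2$ union bound over group supports, and the martingale bound on $\norm{X\veps}_2$ (split as $\bar X\veps + E\veps$ using the clipping). The only deviation is cosmetic: you derive the one-step contraction from the squared projection-optimality inequality and Cauchy--Schwarz, whereas the paper uses the support-restricted inequality plus the triangle inequality; both yield the same recursion up to constants (and both share the paper's own mild bookkeeping slack in counting the groups covering $\supp(\bt)\cup\supp(\btn)\cup\supp(\bo)$).
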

Note that if $X$ is given by $\ARD$ process with AO model and if $k$ is sufficiently small i.e. $\ko \leq k \leq C \frac{\mathfrak{m}_{\wo}}{\cM_{\wo} + \cM_{W}}\frac{n}{d \log{n}}$ (for some universal constant $C > 0$) and $n$ is sufficiently large enough, then with probability at least $1-\delta$, we have $4 \Lambda_{k+\ko} < \lambda_{\frac{n}{d}}$. See Remark~\ref{rough-bounds} for more details.


\paragraph{Fine Convergence:}
As was the case in least squares regression, we will now sketch a proof that the \aoard algorithm indeed moves beyond the convergence level achieved in the coarse analysis and proceeds towards a consistent solution at a linear rate. We begin by noting that by applying Lemma~\ref{lem:Xe-bound}, we can derive a result similar to Lemma~\ref{lem:w-lambda-link}. With high probability, we have for all $t > 1$
\begin{equation}
\label{ard-consistency-lemma-eq}
\norm{\wt - \wo}_2 \leq C\cdot\frac{\Lambda_n}{\lambda_n}\br{\sigma\sqrt{\frac{d\log n}{n}\log\frac{d}{\delta}} + \norm{\vlt}_2},
\end{equation}
for a universal constant $C$. We note that for large enough $n$, Lemma~\ref{full-singular-values} shows that $\frac{\Lambda_n}{\lambda_n} = \bigO 1$. Since the first term in the bracket is a negligible term, one that does not hinder consistency, save $\log$ factors, we are just left to establish the convergence of the iterates $\vlt$. We next note that Lemma~\ref{lem:Xe-bound}, along with the fact that the locations of the corruptions were decided obliviously and independently of the noise values $\bc{\epsilon_i}$, allows us to also prove the following equivalent of Lemma~\ref{lem:lambda-fa-link} for the time series case as well: with high probability, for every time instant $t > T_0$, we have
\begin{equation}
\label{ard-recurrence-lemma-eq}
\norm{\vln}_2 \leq \frac{1}{100}\norm{\vlt}_2 + C\cdot\br{\sigma\sqrt{\frac{d\log n}{n}\log\frac{d}{\delta}} + \frac{1}{\lambda_n}\br{1 + \frac{\Lambda_n}{\lambda_n}}\norm{X_\fan(X_\fan^\top\vlt + \vg_\fan)}_2},
\end{equation}
for some universal constant $C$. Noticing yet again that $\frac{\Lambda_n}{\lambda_n} = \bigO 1$ leaves us to prove a bound on the quantity $\norm{X_\fan(X_\fan^\top\vlt + \vg_\fan)}_2$. We now notice that one can upper bound this quantity by $\norm{X_\fan(X_{S^t_k}^\top\vlt + \vg_{S^t_k})}_2$ by selecting the set $S^t_k$ of the top $k$ elements by magnitude in the vector $X_{\bar{S^\ast}}^\top\vlt + \vg_{\bar{S^\ast}}$.
This allows us to establish the following result.
\begin{lemma}
\label{lem:lambda-bound-ts}
Suppose $\ko \leq k \leq n/(C'\rho(\wo)d\log n)$ for some large enough constant $C'$. Then with probability at least $1 - \delta$, \myalgo ensures at every time instant $t > T_0$
\[
\frac{C}{\lambda_n}\br{1+\frac{\Lambda_n}{\lambda_n}}\norm{X_\fan(X_\fan^\top\vlt + \vg_\fan)}_2 \leq 0.5\norm{\vlt}_2 + \bigO{\sigma\sqrt{\frac{d\log n}{n}\log\frac{1}{\delta}}}
\]
\end{lemma}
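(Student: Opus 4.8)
The plan is to follow the structure of the proof of Lemma~\ref{lem:lambda-bound} from the regression case, substituting the time-series counterparts of its ingredients: Theorem~\ref{ao-ssc-sss} for uniform SGSC/SGSS eigenvalue control, and Lemma~\ref{lem:Xe-bound} for the noise inner-product bound. Since Lemma~\ref{full-singular-values} gives $\Lambda_n/\lambda_n = \bigO 1$, it suffices to establish $\frac{1}{\lambda_n}\norm{X_\fan\br{X_\fan^\top\vlt + \vg_\fan}}_2 \leq 0.5\norm{\vlt}_2 + \bigO{\sigma\sqrt{\frac{d\log n}{n}\log\frac1\delta}}$; plugging this into \eqref{ard-recurrence-lemma-eq} and absorbing the universal constant there yields the lemma. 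I would open with two elementary identities. First, writing $\vv^t = X^\top\vlt + \vg$, the group hard thresholding step $\btn = \HT_k^\cG\br{\bo + \vv^t}$ sets $\btn_\fan = \vv^t_\fan$ on the false-alarm blocks (there $\bo$ vanishes), and these blocks lie among the top-$k$ aligned blocks of $\vv^t$ restricted to the clean coordinates $\bar{S^\ast}$; call that index set $S^t_k$, so $\fan \subseteq S^t_k$ and $S^t_k$ contains exactly $k$ blocks. Second, since $\vg = \br{I-P_X}\veps$ we have $X\vg = 0$, hence $\vg = \veps - X^\top\vz$ with $\vz := \br{XX^\top}^{-1}X\veps$.

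With these in hand I split $\norm{X_\fan\vv^t_\fan}_2 \leq \norm{X_\fan X_\fan^\top\vlt}_2 + \norm{X_\fan\vg_\fan}_2$ and $\norm{X_\fan\vg_\fan}_2 \leq \norm{X_\fan\veps_\fan}_2 + \norm{X_\fan X_\fan^\top}_{\mathrm{op}}\norm{\vz}_2$, and bound the three resulting terms. (i) \emph{Signal term}: $\norm{X_\fan X_\fan^\top\vlt}_2 \leq \Lambda_k\norm{\vlt}_2$ by SGSS (as $\fan$ has at most $k$ blocks). By Theorem~\ref{ao-ssc-sss}, applicable because $k$ is below the stated threshold, $\Lambda_k = \bigO{\rho(\wo)\,kd\log n}$ whereas $\lambda_n = \Omega\br{\mathfrak m_{\wo}n}$, so $\Lambda_k/\lambda_n \leq 1/(2C)$ once $k \leq n/\br{C'\rho(\wo)d\log n}$ with $C'$ large enough, which produces the $0.5\norm{\vlt}_2$ contribution (here $\rho(\wo)$ denotes the conditioning ratio $\br{\cM_{\wo}+\cM_W}/\mathfrak m_{\wo}$ up to constants, as in Theorem~\ref{thm:crtse-final}). (ii) \emph{Projection term}: $\norm{\vz}_2 \leq \norm{X\veps}_2/\lambda_n$, and conditioned on $X$ a Gaussian tail bound gives $\norm{X\veps}_2 = \bigO{\sigma\sqrt{\Lambda_n\br{d+\log(1/\delta)}}}$; combined with $\Lambda_k/\lambda_n$ small, this piece is $\softO{\sigma\sqrt{d/n}}$, hence negligible. (iii) \emph{Noise term} $\norm{X_\fan\veps_\fan}_2$: this is where Lemma~\ref{lem:Xe-bound} and the oblivious-adversary assumption enter. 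Because $S^\ast$ --- hence the collection of coordinates eligible to be false alarms --- is chosen independently of $\veps$, and because $\fan$ is determined by the $d$-dimensional iterate $\vlt$ (together with $X,\vg,\bo$) so it ranges over at most $n^{\bigO{d}}$ distinct blocksets, a union bound over that family yields $\norm{X_\fan\veps_\fan}_2 = \bigO{\sigma\sqrt{\Lambda_k\,d\log(n/\delta)}}$; dividing by $\lambda_n$ and using the bound on $k$ again gives $\bigO{\sigma\sqrt{\frac{d\log n}{n}\log\frac1\delta}}$. Summing the three bounds completes the argument.

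The step I expect to be the main obstacle is item (iii). A naive union bound of $\norm{X_S\veps_S}_2$ over \emph{all} $\binom{n/d}{k}$ blocksets $S$ of size $k$ is hopelessly lossy: its $\sqrt{k\log(n/(dk))}$ overhead drives the bound to $\Theta(\sigma)$ rather than to something vanishing once $k$ is a constant fraction of $n/(d\log n)$, which would destroy consistency. The key point, packaged in Lemma~\ref{lem:Xe-bound}, is that $\fan$ is \emph{not} an arbitrary size-$k$ set but is pinned down by the low-dimensional vector $\vlt \in \bR^d$, so the effective cardinality of the relevant family is only $n^{\bigO{d}}$, shrinking the union-bound overhead from $k\log n$ to $d\log n$. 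Making this rigorous is delicate because $\fan$ also depends on $\vg$ (hence on $\veps$): one must either discretize $\vlt$ over a polynomially fine net of a ball whose radius is controlled via coarse convergence (Theorem~\ref{ao-ard-coarse-convergence}) and the clipping bound $\norm{\bo}_\infty \leq \widehat\sigma$, arguing that net rounding only perturbs blocks whose $\vv^t$-mass sits near the thresholding cutoff and is therefore already negligible, or argue directly about the arrangement of the $\bigO{(n/d)^2}$ quadrics $\bc{\vlt : \norm{(X^\top\vlt + \vg)_G}_2 = \norm{(X^\top\vlt + \vg)_{G'}}_2}$. A secondary but essential point is that every eigenvalue estimate above must hold \emph{uniformly} over blocksets and over all relevant sparsity levels --- precisely the content of Theorem~\ref{ao-ssc-sss} --- and it is the delicate correlations of the $\ARD$ process, tamed by the clipping preprocessing, that make that theorem (and hence the whole proof) go through.
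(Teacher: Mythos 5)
You have correctly isolated the real obstacle --- the data-dependent selection of $\fan$ and the fact that a union bound over all $\binom{n/d}{k}$ blocksets reintroduces a $\Theta(\sigma)$ error floor --- but your proposed resolution of it in item (iii) does not go through, and it is also not what the paper does. The counting argument ``$\fan$ is pinned down by $\vlt \in \bR^d$, hence ranges over only $n^{\bigO{d}}$ blocksets'' conflates two different quantifiers: the $n^{\bigO{d}}$ bound counts the distinct top-$k$ sets as $\vla$ varies \emph{for a fixed realization of $X$ and $\vg$}, whereas the union bound you need is over realizations of $\veps$. For each fixed $\vla$ in a net, the set $S(\vla) = $ top-$k$ blocks of $X^\top\vla + \vg$ is still a function of $\veps$, and the coordinates it selects are selected precisely because $\abs{X_i^\top\vla + g_i}$ is large --- a selection that biases $\veps$ upward on the chosen set. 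So the fixed-set estimate $\norm{X_S\veps_S}_2 = \bigO{\sigma\sqrt{\Lambda_k(d+\log(1/\delta))}}$ cannot be transferred to $S(\vla)$ by union-bounding over net points of $\vla$; the only family one can honestly union over is the full $\binom{n/d}{k}$, which brings back the $\sqrt{kd\log\frac{n}{kd}}$ overhead and kills consistency. (Relatedly, Lemma~\ref{lem:Xe-bound} does not ``package'' any such selection argument --- it only bounds the unrestricted product $\norm{X\veps}_2$ via a martingale.)

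The paper's proof handles the selection bias by a genuinely different mechanism, which your decomposition into signal $+$ noise $+$ projection never invokes (even though it is the actual content of the Lemma~\ref{lem:lambda-bound} template you cite). First, cardinality thresholding is reduced to \emph{value} thresholding at a fixed level $\tau$: the coarse-convergence bound on $\norm{\vlambda^{T_0}}_2$ together with $\Lambda_1 = \bigO{d\log n}$ gives $\norm{X^\top\vlt}_\infty \leq 1/4$, while anti-concentration shows a constant fraction of the $\abs{g_i}$ exceed $1.75$, so the $k$-th largest entry of $X^\top\vlt+\vg$ exceeds a deterministic $\tau$. Then, for fixed $\vla$, the vector $\vz$ with $z_j = (X_j^\top\vla + g_j)\ind{\abs{X_j^\top\vla+g_j}>\tau}$ is analyzed coordinate-wise: the conditional expectation $\E{Y\ind{\abs{Y}>\tau}}$ for $Y\sim\cN(\mu,1)$ is computed explicitly (via the mean value theorem on $\phi(\tau-\mu)-\phi(-\tau-\mu)$) and shown to be at most $C_\tau\abs{\mu}$ with $C_\tau \leq 0.4$. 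This single expectation computation is what simultaneously absorbs the signal term \emph{and} the selection bias of the noise into a contractive multiple of $\norm{XX^\top\vla}_2$; the fluctuation around the expectation is then controlled by an Azuma-type inequality for sub-Gaussian martingale differences, contributing only the $\bigO{\sigma\sqrt{\tfrac{d\log n}{n}\log\tfrac{1}{\delta}}}$ remainder with no $k$-dependent union-bound penalty. Without this value-thresholding and expectation-shrinkage step, your item (iii) has no valid substitute, so the proposal as written has a genuine gap at its central point.
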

Above lemma with \eqref{ard-recurrence-lemma-eq} suffices to establish Theorem~\ref{thm:crtse-final}. See Appendix~\ref{app:app_add} for details of all the steps sketched above.


\vspace*{-5pt}
\section{Experiments}\vspace*{-5pt}
\label{sec:exps}

Several numerical simulations were carried out on synthetically generated linear regression and $\ARD$ time-series data with outliers. The experiments show that in the robust linear regression setting, \myalgo gives a consistent estimator and is $2$x times faster as compared with TORRENT \cite{BhatiaJK2015} while in the robust $\ARD$ time-series setting, \aoard gives a consistent estimator and offers statistically better recovery properties as compared with baseline algorithms.

\begin{figure}[t!]
  \centering\hspace*{-4ex}
\begin{tabular}{cccc}
  \includegraphics[width=.27\textwidth]{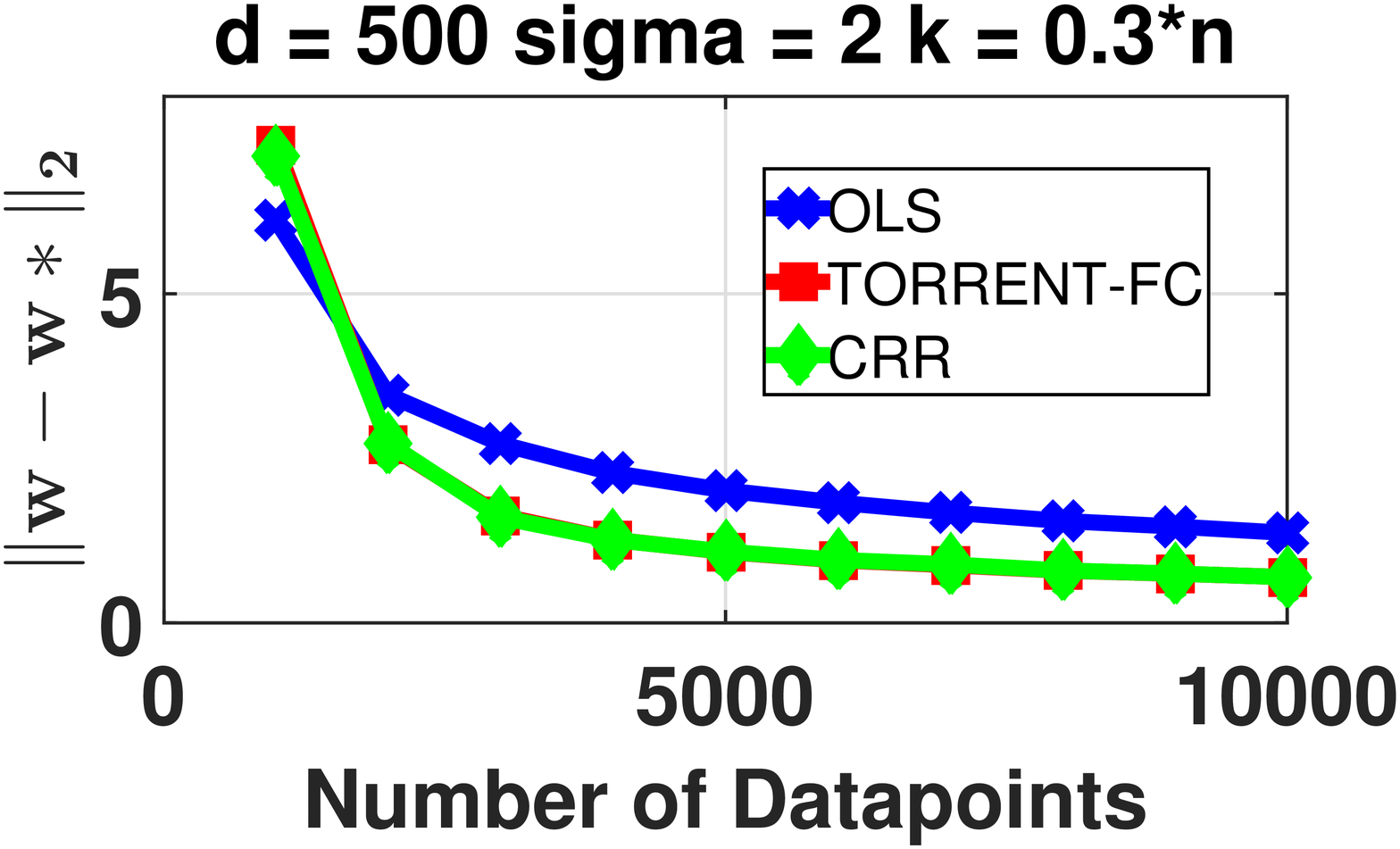}&
  \hspace{-4ex}
  \includegraphics[width=.27\textwidth]{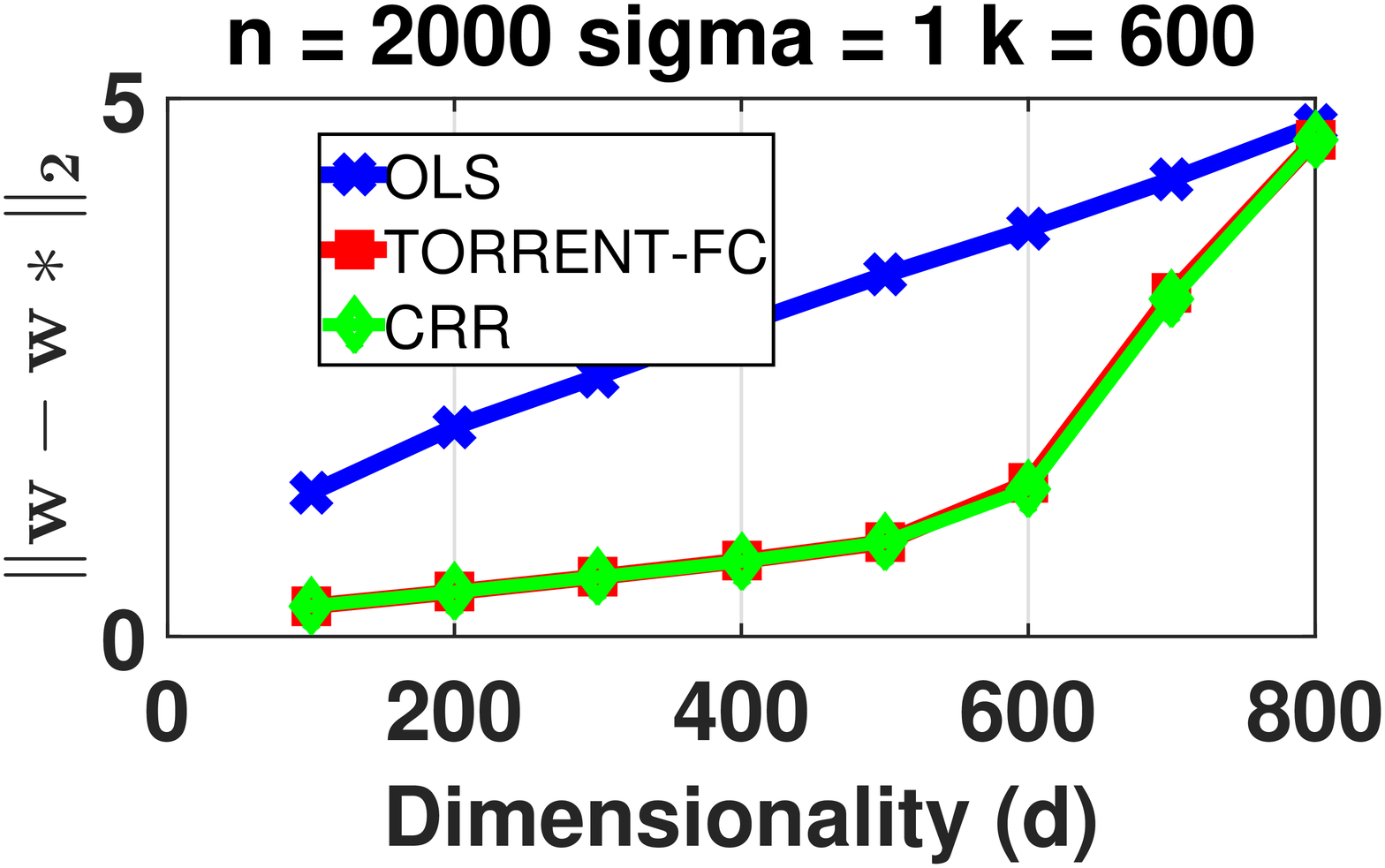}&
  \hspace{-4ex}
  \includegraphics[width=.27\textwidth]{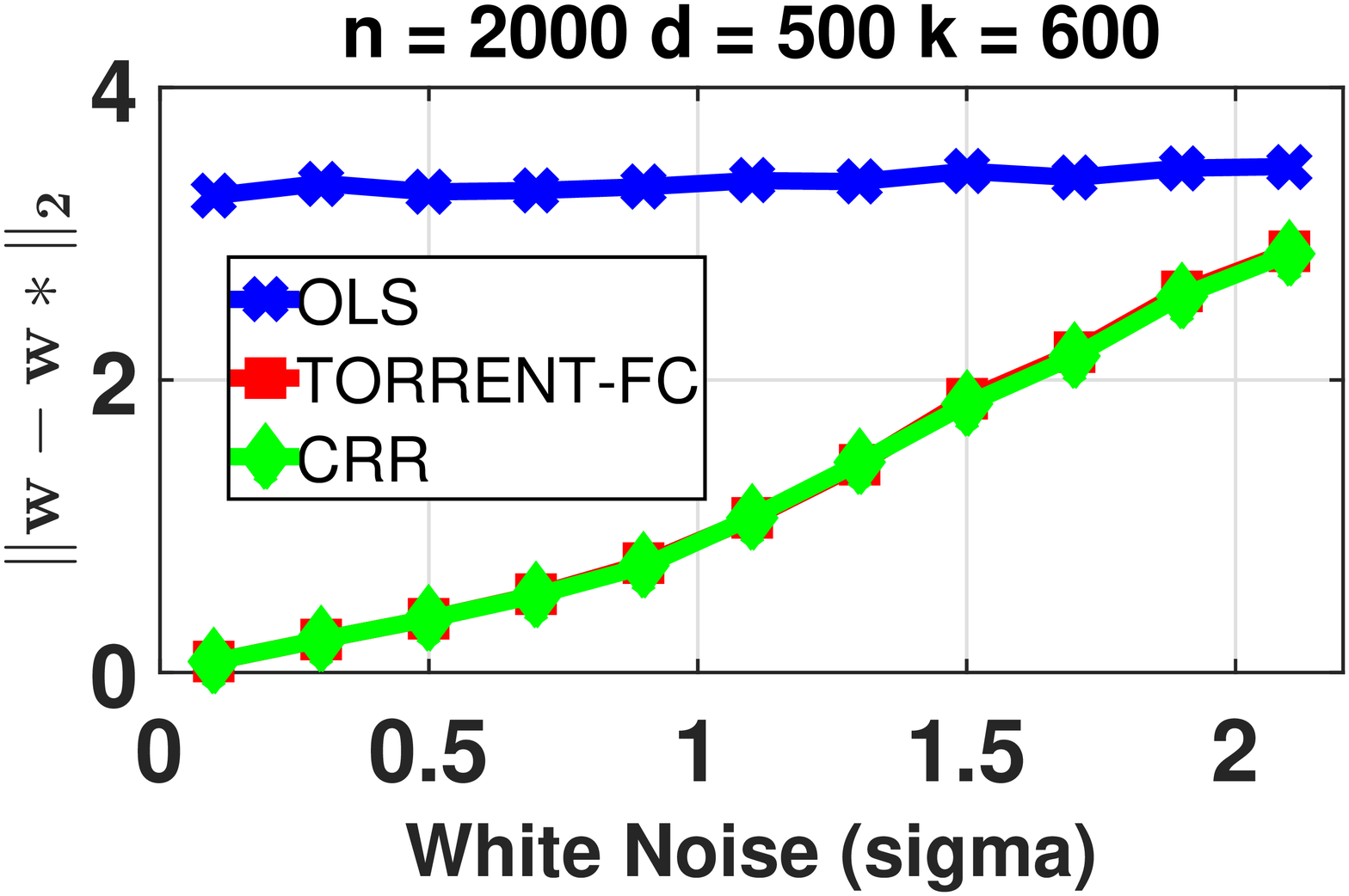}&
  \hspace{-4ex}
  \includegraphics[width=.27\textwidth]{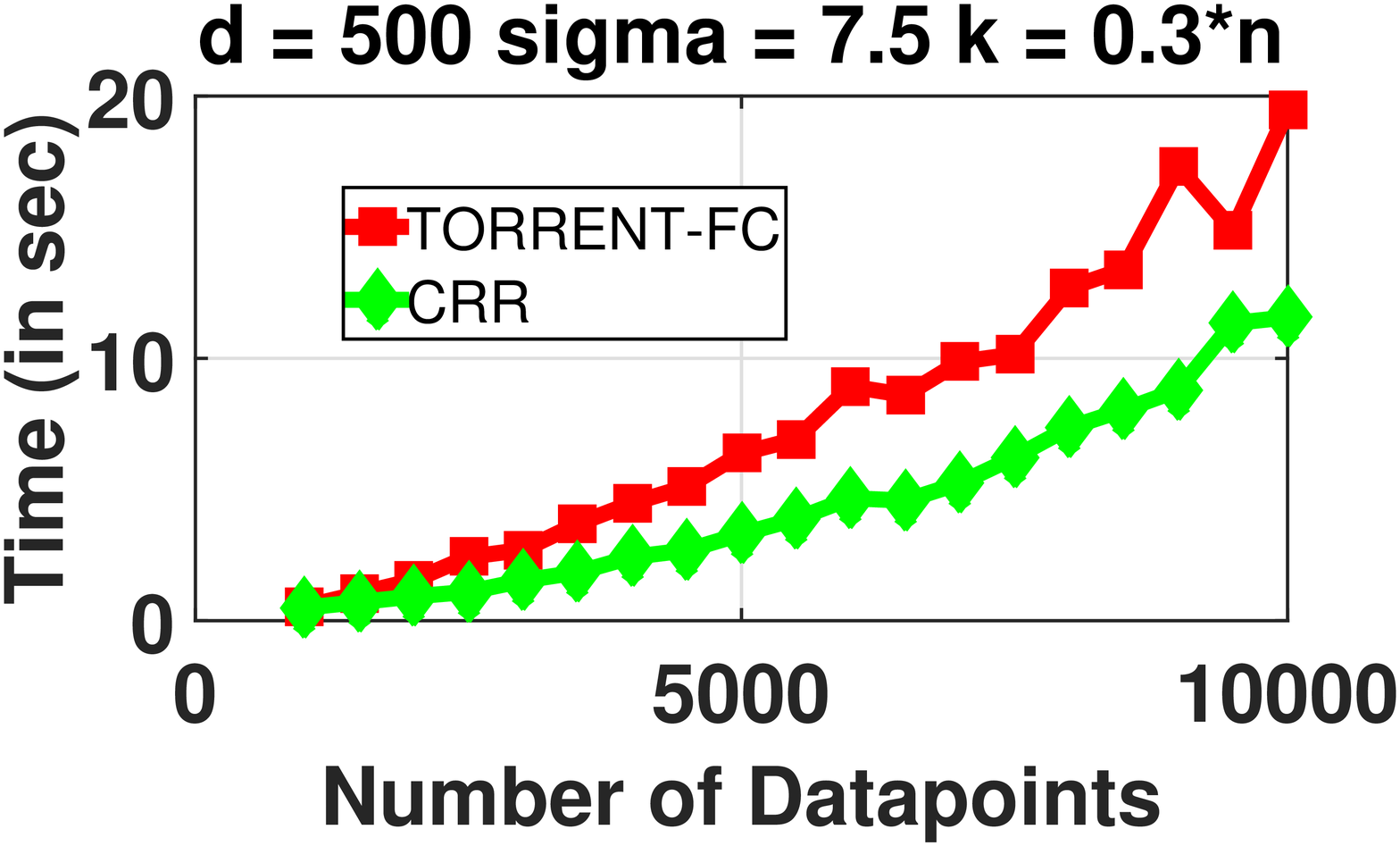}\\
(a)&(b)&(c)&(d)\vspace*{-10pt}
\end{tabular}
\caption{\small{(a), (b) and (c) show variation of recovery error with varying $n$, $d$ and $\sigma$. \myalgo and TORRENT show better recovery properties than the non-robust OLS. These plots also ascertain the $\sqrt{n}$-consistency of \myalgo as is shown in the theoretical analysis. (d) shows the average CPU run time of TORRENT and \myalgo with increasing sample size. \myalgo can be upto $2$x faster than TORRENT while ensuring similar recovery properties.}}\vspace*{-10pt}
  \label{fig:plt_lr}
\end{figure}
\vspace*{-5pt}
\subsection{Robust Linear Regression}\vspace*{-5pt}
\textbf{Data:} For the RLSR problem, the regressor $\wo\in \bR^d$ was chosen to be a random unit norm vector. The data matrix was generated as each $\vx_i \sim  \cN(0, I_d)$.  The $k^\ast$ non-zero locations of the corruption vector $\bo$ were chosen uniformly at random from $[n]$ and the value of the corruptions were set to $b^\ast_i \sim U \br {10, 20}$. The response variables $\vy$ were then generated as $y_i = \ip{\vx_i}{\wo} + \eta_i + b^\ast_i$ where $\eta_i \sim \cN(0, \sigma^2)$. All plots for the RLSR problem have been generated by averaging the results over 20 random instances of the data and regressor.

\textbf{Baseline Algorithms:} We compare \myalgo with two baseline algorithms: Ordinary Least Squares (OLS) and TORRENT (\cite{BhatiaJK2015}). All the three algorithms were implemented in Matlab and were run on a single core 2.4GHz machine with 8GB RAM.

\textbf{Recovery Properties \& Timing:} As can be observed from Figure(\ref{fig:plt_lr}), \myalgo performs as well as TORRENT in terms of the residual error $\| \vw - \wo\|_2$ and both their performances are better as compared with the non-robust OLS method. Further, figures \ref{fig:plt_lr}(a), \ref{fig:plt_lr}(b) and \ref{fig:plt_lr}(c) explain our near optimal recovery bound of $\sigma \sqrt{\frac{d}{n}}$ by showing the corresponding variation of the recovery error with variations in $n$, $d$ and $\sigma$, respectively. Figure \ref{fig:plt_lr}(d) shows a comparison of variation of average CPU time (in secs) with increasing number of data samples and shows that \myalgo can be upto $2$x faster than TORRENT while provably guaranteeing consistent estimates for the regressor.

\begin{figure}[t!]
  \centering\hspace*{-4ex}
\begin{tabular}{cccc}
  \includegraphics[width=.28\textwidth]{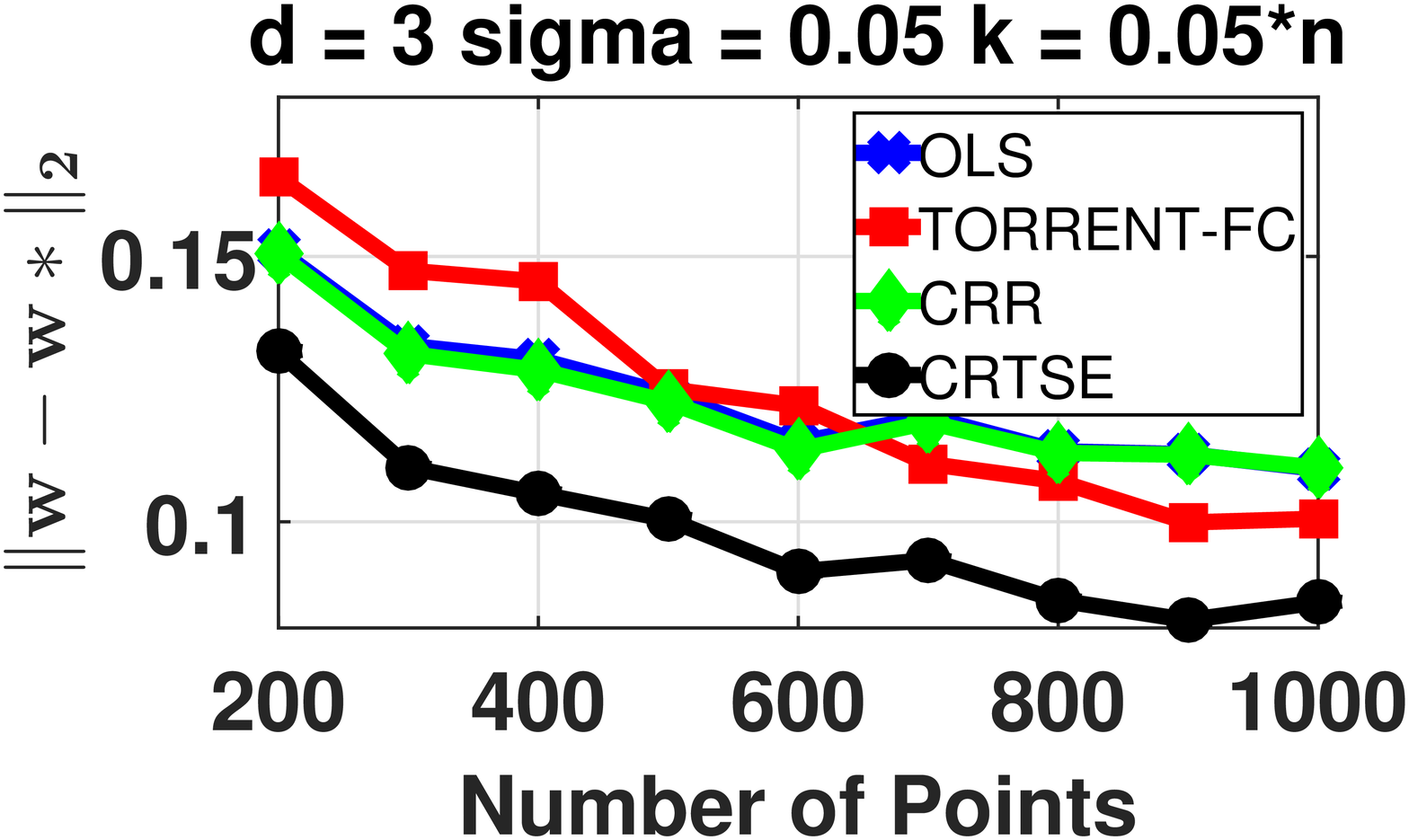}&
  \hspace{-4ex}
  \includegraphics[width=.28\textwidth]{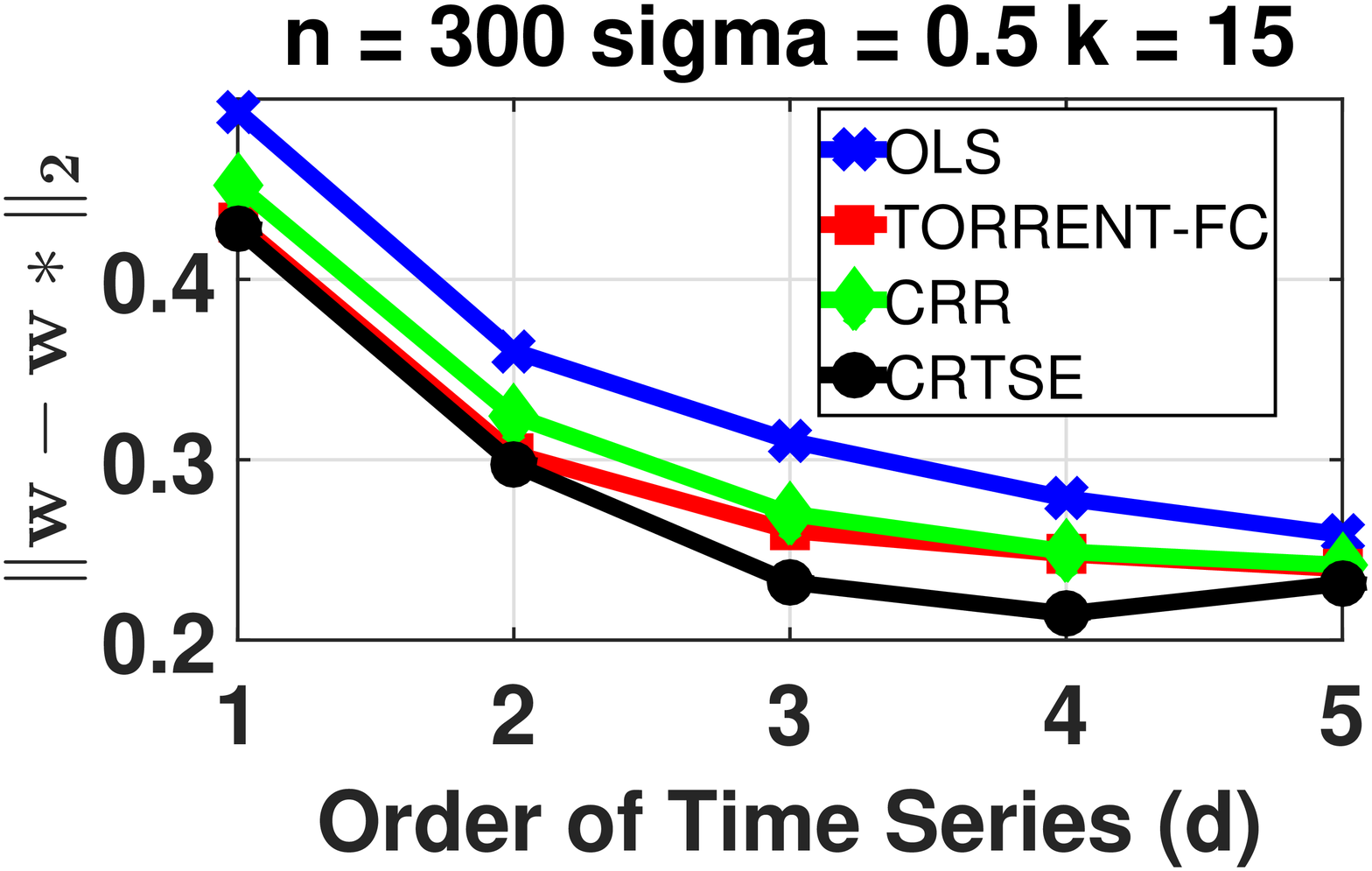}&
  \hspace{-4ex}
  \includegraphics[width=.28\textwidth]{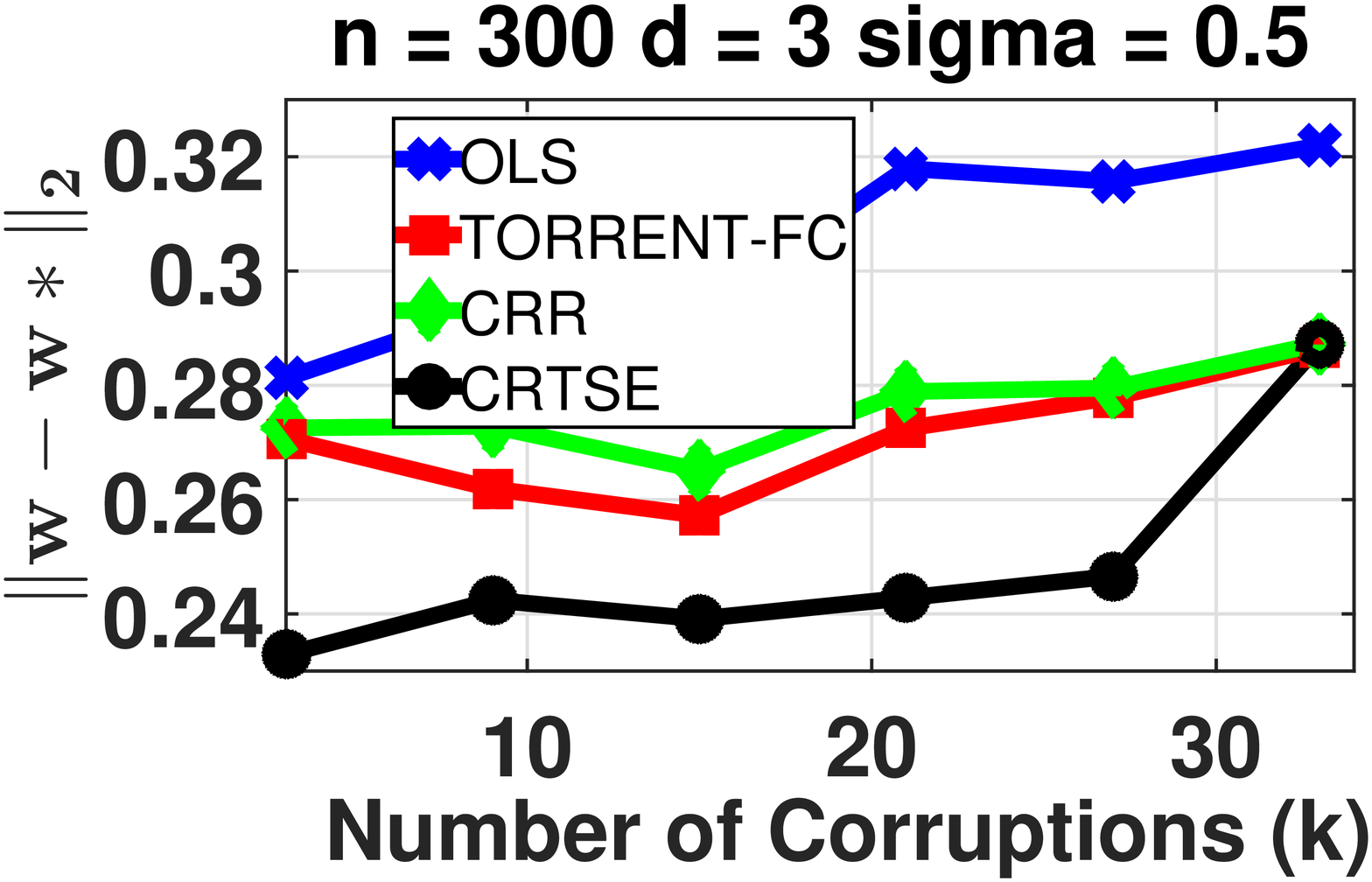}&
  \hspace{-4ex}
  \includegraphics[width=.28\textwidth]{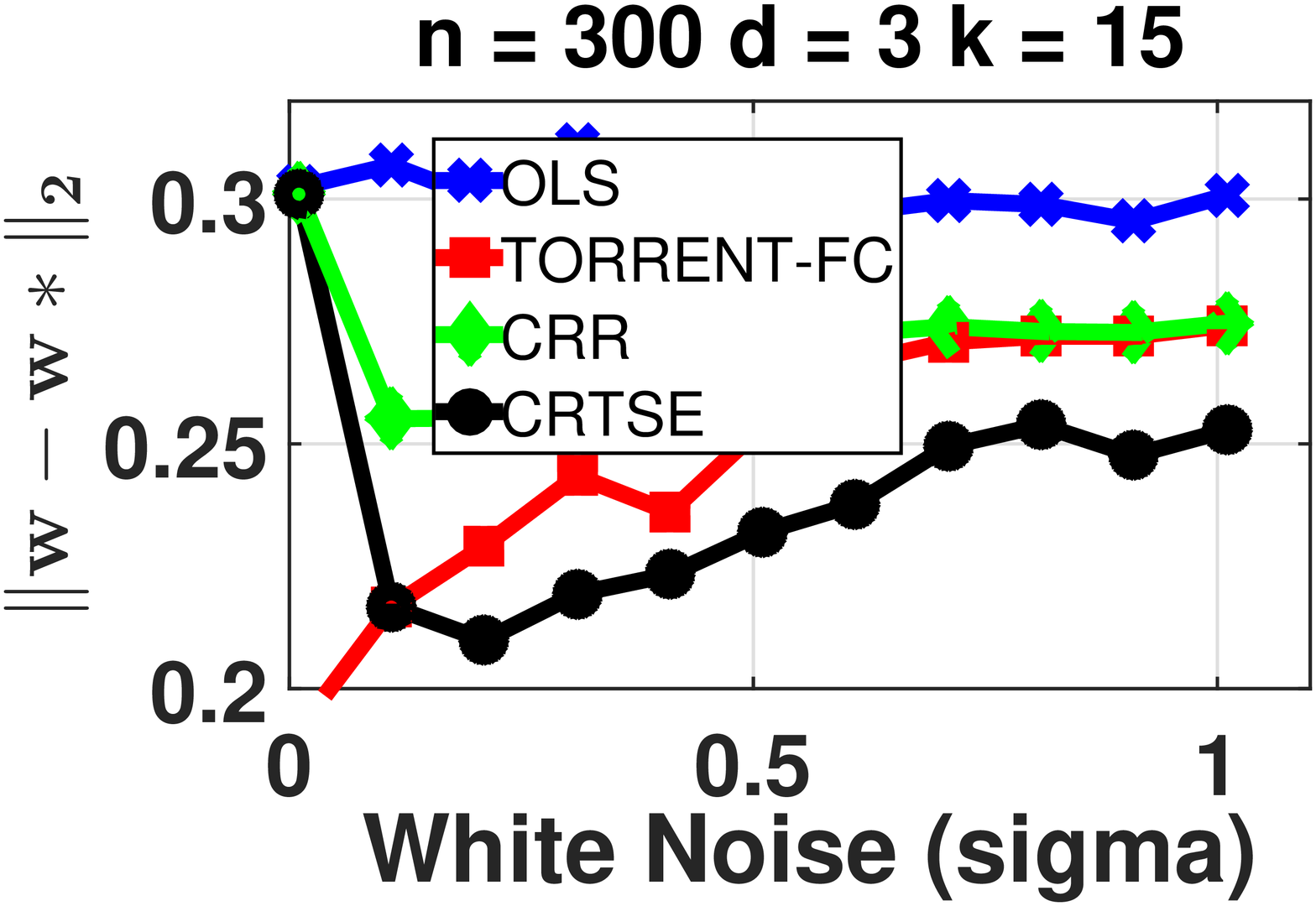}\\
(a)&(b)&(c)&(d)\vspace*{-10pt}
\end{tabular}
\caption{\small{(a), (b), (c), and (d) show variation of recovery error with varying $n$, $d$, $k$ and $\sigma$, respectively. \aoard outperforms OLS, and both the point-wise thresholding algorithms, TORRENT and \myalgo. Also, the decreasing error with increasing $n$ shows the consistency of our estimator in this regime.}}\vspace*{-10pt}
  \label{fig:plt_ts}\vspace*{-5pt}
\end{figure}
\vspace*{-5pt}
\subsection{Robust Time Series with Additive Corruptions}\vspace*{-5pt}
\textbf{Data:} For the RTSE problem, the regressor $\wo\in \bR^d$ was chosen to be a random vector with $O(\frac{1}{\sqrt{d}})$ norm (to avoid the time-series from diverging). The initial $d$ points of the time-series are chosen as $x_i \sim  \cN(0, 1)$ for $i = 1 \ldots d$. The time-series, generated according $\ARD$ model with regressor $\wo$, was then allowed to stabilize for the next $100$ time-steps. We consider the points generated in the next $n$ time steps as $x_i$ for $i = 1 \ldots n$. The $k^\ast$ non-zero locations of the corruption vector $\bo$ were chosen uniformly at random from $[n]$ and the value of the corruptions were set to $b^\ast_i \sim U \br {10, 20}$. The observed time series is then generated as $y_i = x_i + b^\ast_i$. All plots for the RTSE problem have been generated by averaging the outcomes over 200 random runs of the above procedure. 

\textbf{Baseline Algorithms:} We compare \aoard with three baseline algorithms: Ordinary Least Squares (OLS) , TORRENT (\cite{BhatiaJK2015}) and \myalgo. For TORRENT and \myalgo, we set the thresholding parameter $k = 2k^\ast d$ and compare results with \aoard. All simulations were done on a single core 2.4GHz machine with 8GB RAM.

\textbf{Recovery Properties:} Figure \ref{fig:plt_ts} shows the variation of recovery error $\|\vw - \wo\|_2$ for the $\ARD$ time-series with Additive Corruptions. \aoard outperforms all three competitor baselines: OLS, TORRENT and \myalgo. Since \aoard uses a group thresholding based algorithm as compared with TORRENT and \myalgo which use point-wise thresholding, \aoard is able to identify blocks which contain both response and data corruptions and give better estimates for the regressor. Also, figure \ref{fig:plt_ts}(a) shows that the recovery error goes down with increasing number of points in the time-series, as is evident from our consistency analysis of \aoard.

\newpage
{\small
\bibliographystyle{plain}
\bibliography{refs}

\begin{thebibliography}{10}

\bibitem{Visek06a}
Jan \'{A}mos Vi\~{s}ek.
\newblock {The least trimmed squares. Part I: Consistency}.
\newblock {\em Kybernetika}, 42:1--36, 2006.

\bibitem{Visek06b}
Jan \'{A}mos Vi\~{s}ek.
\newblock {The least trimmed squares. Part II: $\sqrt {n} $-consistency}.
\newblock {\em Kybernetika}, 42:181--202, 2006.

\bibitem{AnavaHZ2015}
Oren Anava, Elad Hazan, and Assaf Zeevi.
\newblock Online time series prediction with missing data.
\newblock In {\em Proceedings of the 32nd International Conference on Machine
  Learning (ICML-15)}, pages 2191--2199, 2015.

\bibitem{BasuM2015}
Sumanta Basu and George Michailidis.
\newblock {Regularized Estimation in Sparse High-dimensional Time Series
  Models}.
\newblock {\em The Annals of Statistics}, 43(4):1535--1567, 2015.

\bibitem{BhatiaJK2015}
Kush Bhatia, Prateek Jain, and Purushottam Kar.
\newblock {Robust Regression via Hard Thresholding}.
\newblock In {\em 29th Annual Conference on Neural Information Processing
  Systems (NIPS)}, 2015.

\bibitem{BlumensathD2009}
Thomas Blumensath and Mike~E. Davies.
\newblock {Iterative Hard Thresholding for Compressed Sensing}.
\newblock {\em Applied and Computational Harmonic Analysis}, 27(3):265--274,
  2009.

\bibitem{ChenCM2013}
Yudong Chen, Constantine Caramanis, and Shie Mannor.
\newblock {Robust Sparse Regression under Adversarial Corruption}.
\newblock In {\em 30th International Conference on Machine Learning (ICML)},
  2013.

\bibitem{CrouxJ2008}
Christophe Croux and Kristel Joossens.
\newblock Robust estimation of the vector autoregressive model by a least
  trimmed squares procedure.
\newblock In {\em COMPSTAT 2008}, pages 489--501. Springer, 2008.

\bibitem{GargK2009}
Rahul Garg and Rohit Khandekar.
\newblock Gradient descent with sparsification: an iterative algorithm for
  sparse recovery with restricted isometry property.
\newblock In {\em 26th International Conference on Machine Learning (ICML)},
  2009.

\bibitem{JainTK2014}
Prateek Jain, Ambuj Tewari, and Purushottam Kar.
\newblock {On Iterative Hard Thresholding Methods for High-dimensional
  M-estimation}.
\newblock In {\em 28th Annual Conference on Neural Information Processing
  Systems (NIPS)}, 2014.

\bibitem{MaronnaMY2006}
Ricardo~A. Maronna, R.~Douglas Martin, and Victor~J. Yohai.
\newblock {\em {Robust Statistics: Theory and Methods}}.
\newblock J. Wiley, 2006.

\bibitem{Martin1979}
R.~Douglas Martin.
\newblock Robust estimation for time series autoregressions.
\newblock In ROBERT~L. LAUNER and GRAHAM~N. WILKINSON, editors, {\em Robustness
  in Statistics}, pages 147 -- 176. Academic Press, 1979.

\bibitem{MartinZ1978}
R.~Douglas Martin and Judy Zeh.
\newblock Robust generalized m-estimates for autoregressive parameters:
  smallsample behavior and applications.
\newblock Technical Report 214, University of Washington, Seattle, 1978.

\bibitem{MelnykB2016}
Igor Melnyk and Arindam Banerjee.
\newblock Estimating structured vector autoregressive model.
\newblock arXiv:1602.06606 (math.ST), 2016.

\bibitem{MulerPY2009}
Nora Muler, Daniel Pena, and Victor~J. Yohai.
\newblock Robust estimation for arma models.
\newblock {\em The Annals of Statistics}, 37(2):816--840, 2009.

\bibitem{NguyenT13}
Nam~H. Nguyen and Trac~D. Tran.
\newblock {Exact recoverability from dense corrupted observations via L1
  minimization}.
\newblock {\em IEEE Transaction on Information Theory}, 59(4):2036--2058, 2013.

\bibitem{Rousseeuw1984}
Peter~J. Rousseeuw.
\newblock {Least Median of Squares Regression}.
\newblock {\em Journal of the American Statistical Association},
  79(388):871--880, 1984.

\bibitem{RudelsonV2013}
Mark Rudelson and Roman Vershynin.
\newblock {Hanson-Wright Inequality and Sub-gaussian Concentration}.
\newblock {\em Electronic Communications in Probability}, 18(82):1--9, 2013.

\bibitem{Shamir2011}
Ohad Shamir.
\newblock A variant of azuma's inequality for martingales with subgaussian
  tails.
\newblock arXiv:1110.2392 (cs.LG), 2011.

\bibitem{StockingerD1987}
Norbert Stockinger and Rudolf Dutter.
\newblock Robust time series analysis: A survey.
\newblock {\em Kybernetika}, 23(7):1--3, 1987.

\bibitem{Vershynin2012}
Roman Vershynin.
\newblock {Introduction to the non-asymptotic analysis of random matrices}.
\newblock In Y.~Eldar and G.~Kutyniok, editors, {\em {Compressed Sensing,
  Theory and Applications}}, chapter~5, pages 210--268. Cambridge University
  Press, 2012.

\bibitem{WrightM10}
John Wright and Yi~Ma.
\newblock {Dense Error Correction via $\ell^1$ Minimization}.
\newblock {\em IEEE Transaction on Information Theory}, 56(7):3540--3560, 2010.

\bibitem{Zhang11}
Tong Zhang.
\newblock {Adaptive Forward-Backward Greedy Algorithm for Learning Sparse
  Representations}.
\newblock {\em IEEE Trans. Inf. Theory}, 57:4689--4708, 2011.

\end{thebibliography}
}

\clearpage

\appendix

\section{Supplementary Material for Consistent Robust Regression}\label{app:rreg}

\subsection{SSC/SSS guarantees}

In this section we restate some results from \cite{BhatiaJK2015} which are required for the convergence analysis of the RLSR problem.

\begin{definition}
	A random variable $x \in \bR$ is called sub-Gaussian if the following quantity is finite
	\[
	\underset{p \geq 1}{\sup}\ p^{-1/2}\br{\E{\abs{x}^p}}^{1/p}.
	\]
	Moreover, the smallest upper bound on this quantity is referred to as the sub-Gaussian norm of $x$ and denoted as $\norm{x}_{\psi_2}$.
\end{definition}

\begin{definition}
	A vector-valued random variable $\vx \in \bR^p$ is called sub-Gaussian if its unidimensional marginals $\ip{\vx}{\vv}$ are sub-Gaussian for all $\vv \in S^{p-1}$. Moreover, its sub-Gaussian norm is defined as follows
	\[
	\norm{X}_{\psi_2} :=  \underset{\vv \in S^{p-1}}{\sup} \norm{\ip{\vx}{\vv}}_{\psi_2}
	\]
\end{definition}

\begin{lemma}
	\label{lem:ev-bound-gaussian-global}
	Let $X \in \bR^{p \times n}$ be a matrix whose columns are sampled i.i.d from a standard Gaussian distribution i.e. $\vx_i \sim \cN(0,I)$. Then for any $\epsilon > 0$, with probability at least $1 - \delta$, $X$ satisfies
	\begin{align*}
	\lambda_{\max}(XX^\top) &\leq n + (1 - 2\epsilon)^{-1}\sqrt{cnp + c'n\log\frac{2}{\delta}}\\
	\lambda_{\min}(XX^\top) &\geq n - (1 - 2\epsilon)^{-1}\sqrt{cnp + c'n\log\frac{2}{\delta}},
	\end{align*}
	where $c = 24e^2\log\frac{3}{\epsilon}$ and $c' = 24e^2$.
\end{lemma}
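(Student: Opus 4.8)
The plan is to recognize this as a standard non-asymptotic bound on the extreme singular values of a Gaussian design and to prove it by combining a sub-exponential concentration inequality for a fixed direction with an $\epsilon$-net argument over the sphere. First I would write $XX^\top = \sum_{i=1}^n \vx_i\vx_i^\top$ and note that since $\vx_i \sim \cN(\vzero,I)$ are i.i.d., $\E{XX^\top} = nI$. Thus the two claimed inequalities are together equivalent to the single operator-norm statement $\norm{XX^\top - nI}_2 \leq (1-2\epsilon)^{-1}\sqrt{cnp + c'n\log\frac{2}{\delta}}$ holding with probability $1-\delta$: controlling $\lambda_{\max}$ from above and $\lambda_{\min}$ from below simultaneously amounts to controlling the operator norm of the centered matrix $A := XX^\top - nI$.

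The core of the argument is a pointwise tail bound on the quadratic form. For any fixed $\vv \in S^{p-1}$, we have $\vv^\top XX^\top\vv = \sum_{i=1}^n \ip{\vx_i}{\vv}^2$, and since each $\ip{\vx_i}{\vv}\sim\cN(0,1)$ independently, each summand $\ip{\vx_i}{\vv}^2 - 1$ is a centered chi-square-minus-one variable, which is sub-exponential with an absolute constant sub-exponential norm $K$. Bernstein's inequality for sums of independent sub-exponential variables then gives, for any deviation $t$,
\[
\Pr\bs{\abs{\vv^\top A\vv} \geq t} \leq 2\exp\br{-c''\min\br{\frac{t^2}{nK^2},\frac{t}{K}}}.
\]
In the regime of interest the Gaussian branch $t^2/(nK^2)$ is the binding one, which is exactly what produces the $\sqrt{n}$-scaling of the final bound rather than a linear-in-$t$ tail.

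Next I would discretize the sphere. Taking a minimal $\epsilon$-net $T_\epsilon$ of $S^{p-1}$ of cardinality $\abs{T_\epsilon} \leq (3/\epsilon)^p$ and union-bounding the pointwise estimate over $T_\epsilon$, I would choose $t$ so that the failure probability $2(3/\epsilon)^p\exp\br{-c'' t^2/(nK^2)}$ is at most $\delta$. Solving for $t$ yields a bound of the form $t \asymp \sqrt{n\,p\log(3/\epsilon) + n\log\frac{2}{\delta}}$, which is where the constants $c = 24e^2\log\frac{3}{\epsilon}$ (carrying the net's metric entropy $\log(3/\epsilon)$) and $c' = 24e^2$ enter, once the absolute constants from $K$ and $c''$ are tracked through. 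Finally, the standard net-approximation lemma for symmetric matrices, $\sup_{\vv\in S^{p-1}}\abs{\vv^\top A\vv} \leq (1-2\epsilon)^{-1}\sup_{\vv\in T_\epsilon}\abs{\vv^\top A\vv}$, transfers control over the net back to the operator norm of $A$, introducing precisely the factor $(1-2\epsilon)^{-1}$; rearranging $\abs{\vv^\top A\vv} = \abs{\vv^\top XX^\top \vv - n}$ then gives the stated upper bound on $\lambda_{\max}(XX^\top)$ and lower bound on $\lambda_{\min}(XX^\top)$.

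The main obstacle I anticipate is the bookkeeping of explicit constants: one must carry the sub-exponential norm $K$ of a chi-square-minus-one variable, the Bernstein constant $c''$, and the net entropy factor $\log(3/\epsilon)$ through simultaneously to land on the precise values $24e^2\log\frac{3}{\epsilon}$ and $24e^2$, and one must verify that the chosen $t$ stays within the Gaussian branch of Bernstein's inequality so that the $\min$ resolves to $t^2/(nK^2)$ and not $t/K$. Since this lemma is quoted verbatim as a restatement from \cite{BhatiaJK2015}, the cleanest route is to reproduce their constant-tracking — equivalently, the covering-based extreme singular-value estimates standard in the non-asymptotic random matrix theory literature — rather than re-deriving fresh constants from scratch.
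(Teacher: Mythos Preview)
Your approach is correct and is the standard covering-plus-Bernstein argument for non-asymptotic extreme singular values of a Gaussian matrix. Note, however, that the paper does not actually prove this lemma: it appears in the appendix under the heading ``In this section we restate some results from \cite{BhatiaJK2015}\ldots'' and is simply quoted without proof. You have correctly identified this at the end of your proposal, and the sketch you give --- reduce to $\norm{XX^\top - nI}_2$, apply sub-exponential Bernstein to $\sum_i(\ip{\vx_i}{\vv}^2 - 1)$ for fixed $\vv$, union-bound over a $(3/\epsilon)^p$-sized net, and lift back via the $(1-2\epsilon)^{-1}$ net-to-sphere lemma --- is exactly the route by which such statements are established and matches the structure of the constants $c = 24e^2\log(3/\epsilon)$, $c' = 24e^2$ appearing in the statement.
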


\begin{theorem}
	\label{thm:ev-bound-gaussian-local}
	Let $X \in \bR^{p \times n}$ be a matrix whose columns are sampled i.i.d from a standard Gaussian distribution i.e. $\vx_i \sim \cN(0,I)$. Then for any $\gamma > 0$, with probability at least $1 - \delta$, the matrix $X$ satisfies the SSC and SSS properties with constants
	\begin{align*}
	\Lambda_\gamma &\leq \gamma n\br{1 + 3e\sqrt{6\log\frac{e}{\gamma}}} + \bigO{\sqrt{np + n\log\frac{1}{\delta}}} \\
	\lambda_\gamma &\geq n - (1-\gamma)n\br{1 + 3e\sqrt{6\log\frac{e}{1-\gamma}}} - \Omega \br{\sqrt{np + n\log\frac{1}{\delta}}}.
	\end{align*}
\end{theorem}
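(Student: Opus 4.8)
The plan is to reduce both uniform eigenvalue bounds to a single fixed-subset spectral estimate combined with a union bound over subsets, and --- crucially --- to obtain the strong convexity (lower) bound by complementation rather than by a direct minimum-eigenvalue argument. Write $k = \gamma n$ for the subset size corresponding to level $\gamma$. To bound $\Lambda_\gamma$, fix $S \subseteq [n]$ with $\abs{S} = k$. Since $\lambda_{\max}\br{X_S X_S^\top} = \sup_{\vv \in S^{p-1}} \sum_{i \in S} \ip{\vx_i}{\vv}^2$ and, for each fixed $\vv \in S^{p-1}$, the quantity $\sum_{i \in S} \ip{\vx_i}{\vv}^2$ is a $\chi^2_k$ variable, Bernstein's inequality for sub-exponential sums gives $\Pr\bs{\abs{\sum_{i \in S} \ip{\vx_i}{\vv}^2 - k} > t} \leq 2\exp\br{-c_0 \min\bc{t^2/k,\, t}}$ for an absolute constant $c_0$. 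Taking a $\frac14$-net $\cN$ of $S^{p-1}$ of cardinality at most $9^p$ and using the standard conversion $\lambda_{\max}\br{X_S X_S^\top} \leq 2\max_{\vv \in \cN} \sum_{i \in S} \ip{\vx_i}{\vv}^2$, a union bound over $\cN$ controls $\lambda_{\max}\br{X_S X_S^\top}$ for this fixed $S$ up to failure probability $9^p \cdot 2\exp\br{-c_0 \min\bc{t^2/k,\, t}}$.

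Next I would pay for all $\binom{n}{k}$ subsets with a further union bound. Since $\binom{n}{k} \leq \br{en/k}^k = \br{e/\gamma}^{\gamma n}$, demanding that the total failure probability $\br{e/\gamma}^{\gamma n} \cdot 9^p \cdot 2\exp\br{-c_0 \min\bc{t^2/k,\, t}}$ be at most $\delta$ forces, in the regime $t \leq k$, a deviation $t = \bigO{\sqrt{kp + k \cdot \gamma n \log\br{e/\gamma} + k \log\br{1/\delta}}} = \bigO{\gamma n \sqrt{\log\br{e/\gamma}}} + \bigO{\sqrt{np + n\log\br{1/\delta}}}$, where I used $k = \gamma n$. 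Hence $\Lambda_\gamma = \max_{\abs{S} = k} \lambda_{\max}\br{X_S X_S^\top} \leq k + t$, and carefully tracking the numerical constants coming out of the $\chi^2$ tail and the net cardinality yields the stated $\Lambda_\gamma \leq \gamma n \br{1 + 3e\sqrt{6\log\frac{e}{\gamma}}} + \bigO{\sqrt{np + n\log\frac{1}{\delta}}}$. The key structural point is that it is the $\binom{n}{k}$ union bound, not the per-subset concentration, that is responsible for the multiplicative inflation factor $\sqrt{\log(e/\gamma)}$.

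For $\lambda_\gamma$ I would avoid estimating $\lambda_{\min}\br{X_S X_S^\top}$ head-on, because a direct net/union argument only gives $\lambda_{\min}\br{X_S X_S^\top} \geq k - \bigO{\gamma n \sqrt{\log(e/\gamma)}}$, which is vacuous once $\gamma$ is small. Instead, for $\abs{S} = k$ let $\bar{S} = [n] \setminus S$, so $\abs{\bar{S}} = (1-\gamma)n$, and write $X_S X_S^\top = XX^\top - X_{\bar{S}} X_{\bar{S}}^\top$. By Weyl's inequality, $\lambda_{\min}\br{X_S X_S^\top} \geq \lambda_{\min}\br{XX^\top} - \lambda_{\max}\br{X_{\bar{S}} X_{\bar{S}}^\top}$. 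I would bound the first term by Lemma~\ref{lem:ev-bound-gaussian-global}, giving $\lambda_{\min}\br{XX^\top} \geq n - \bigO{\sqrt{np + n\log\frac{1}{\delta}}}$, and the second term --- uniformly over all such $\bar{S}$ --- by the strong smoothness bound just established applied at level $1-\gamma$, giving $\lambda_{\max}\br{X_{\bar{S}} X_{\bar{S}}^\top} \leq (1-\gamma)n\br{1 + 3e\sqrt{6\log\frac{e}{1-\gamma}}} + \bigO{\sqrt{np + n\log\frac{1}{\delta}}}$ (both events living on the same $1-\delta$ probability set after adjusting constants). Subtracting and minimizing over $S$ gives the claimed lower bound on $\lambda_\gamma$.

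The step I expect to be the main obstacle is the bookkeeping needed to land the exact constant $3e\sqrt{6}$: it requires a sufficiently sharp sub-exponential tail bound for $\chi^2$ sums together with a judicious choice of net resolution, so that the net-to-sphere conversion, the $9^p$ net cost, and the $\binom{n}{k}$ exponent all combine with the right numerical factors. Everything else --- Weyl's inequality, the complementation identity $X_S X_S^\top = XX^\top - X_{\bar{S}} X_{\bar{S}}^\top$, and the estimate $\binom{n}{k} \leq (e/\gamma)^{\gamma n}$ --- is routine. This essentially reproduces, for completeness, the argument of \cite{BhatiaJK2015}.
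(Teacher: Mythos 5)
You are reconstructing a result the paper itself does not prove: Theorem~\ref{thm:ev-bound-gaussian-local} is explicitly restated from \cite{BhatiaJK2015}, and the closest thing to a proof in this paper is the time-series analogue (Lemma~\ref{clean-ssc-sss}, built on Lemma~\ref{full-singular-values}). Your route matches that template exactly: a per-subset spectral bound via an $\epsilon$-net over the sphere plus scalar concentration, a union bound over the $\binom{n}{k}\le(e/\gamma)^{\gamma n}$ subsets to get $\Lambda_\gamma$, and then the complementation identity $X_SX_S^\top = XX^\top - X_{\bar S}X_{\bar S}^\top$ together with Weyl's inequality and the global bound of Lemma~\ref{lem:ev-bound-gaussian-global} to get $\lambda_\gamma$. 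That last step is precisely how the paper handles $\lambda_k$ in Lemma~\ref{clean-ssc-sss}, and your reason for preferring it (a head-on minimum-eigenvalue union bound is vacuous for small $\gamma$) is the right one.

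The one step that does not go through as written is the claim that Bernstein's inequality, ``in the regime $t\le k$,'' yields a deviation $t=\bigO{\gamma n\sqrt{\log(e/\gamma)}}$. For the quadratic branch $\exp(-c_0t^2/k)$ to absorb the union-bound exponent you need $t^2/k\gtrsim \gamma n\log(e/\gamma)$, i.e.\ $t\gtrsim \gamma n\sqrt{\log(e/\gamma)}$; but this $t$ exceeds $k=\gamma n$ for every $\gamma<1$, so you are in fact in the linear (sub-exponential) branch, which forces $t\gtrsim\gamma n\log(e/\gamma)$ and delivers only $\Lambda_\gamma\le\gamma n\br{1+C\log\frac{e}{\gamma}}+\dots$, weaker than the stated $\sqrt{\log}$ dependence. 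Landing the stated form and the constant $3e\sqrt6$ requires the specific chi-squared deviation bound the source relies on (its Lemma~20, invoked in this paper's proof of Lemma~\ref{lem:coarse-conv} in the form $\norm{\veps_S}_2^2\le\sigma^2 m+2e\sigma^2\sqrt{6mt}$ at confidence $e^{-t}$ for a set of size $m$), not generic Bernstein. So the ``bookkeeping'' you defer is not mere constant-chasing: with the tail bound you chose, the $\sqrt{\log(e/\gamma)}$ factor is not attainable, and you would need to swap in that moment-based chi-squared estimate to recover the theorem as stated. Everything else in the proposal is sound.
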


\begin{lemma}
	\label{lem:ev-bound-subgaussian-global}
	Let $X \in \bR^{p \times n}$ be a matrix with columns sampled from some sub-Gaussian distribution with sub-Gaussian norm $K$ and covariance $\Sigma$. Then, for any $\delta > 0$, with probability at least $1 - \delta$, each of the following statements holds true:
	\begin{align*}
	\lambda_{\max}(XX^\top) &\leq \lambda_{\max}(\Sigma)\cdot n + C_K\cdot\sqrt{pn} + t\sqrt{n}\\
	\lambda_{\min}(XX^\top) &\geq \lambda_{\min}(\Sigma)\cdot n - C_K\cdot\sqrt{pn} - t\sqrt{n},
	\end{align*}
	where $t = \sqrt{\frac{1}{c_K}\log\frac{2}{\delta}}$, and $c_K, C_K$ are absolute constants that depend only on the sub-Gaussian norm $K$ of the distribution.
\end{lemma}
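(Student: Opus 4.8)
The plan is to deduce both eigenvalue estimates from a single operator-norm bound on the centered sample covariance matrix, and to prove that bound by the classical $\epsilon$-net plus Bernstein argument. Write $\hat\Sigma_n := \frac1n XX^\top = \frac1n\sum_{i=1}^n \vx_i\vx_i^\top$, so that $\E{\hat\Sigma_n} = \Sigma$ and $XX^\top = n\hat\Sigma_n$. Since $\hat\Sigma_n$ and $\Sigma$ are symmetric, Weyl's inequality applied to $XX^\top = n\Sigma + n(\hat\Sigma_n - \Sigma)$ gives $\lambda_{\max}(XX^\top) \leq n\lambda_{\max}(\Sigma) + n\norm{\hat\Sigma_n - \Sigma}_2$ and $\lambda_{\min}(XX^\top) \geq n\lambda_{\min}(\Sigma) - n\norm{\hat\Sigma_n - \Sigma}_2$. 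Thus it suffices to show that, with probability at least $1-\delta$, $\norm{\hat\Sigma_n - \Sigma}_2 \leq C_K\sqrt{p/n} + t/\sqrt n$ whenever this quantity is at most a constant; multiplying through by $n$ then produces exactly the $C_K\sqrt{pn}$ and $t\sqrt n$ terms in the two stated bounds.

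To control $\norm{\hat\Sigma_n - \Sigma}_2$ I would first discretize. Because the matrix is symmetric, its operator norm equals $\sup_{\vv\in S^{p-1}}\abs{\ip{(\hat\Sigma_n-\Sigma)\vv}{\vv}}$, and a standard covering argument over a $\frac14$-net $\cN$ of $S^{p-1}$ (with $\abs{\cN}\leq 9^p$) gives $\norm{\hat\Sigma_n-\Sigma}_2 \leq 2\max_{\vv\in\cN}\abs{\ip{(\hat\Sigma_n-\Sigma)\vv}{\vv}}$. For a fixed $\vv\in S^{p-1}$ I would expand the quadratic form as $\ip{(\hat\Sigma_n-\Sigma)\vv}{\vv} = \frac1n\sum_{i=1}^n\br{\ip{\vx_i}{\vv}^2 - \E{\ip{\vx_i}{\vv}^2}}$, a normalized sum of $n$ i.i.d. centered random variables.

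The heart of the argument is the per-vector tail bound. By assumption each marginal $\ip{\vx_i}{\vv}$ is sub-Gaussian with $\norm{\ip{\vx_i}{\vv}}_{\psi_2}\leq K$, so its square is sub-exponential with $\norm{\ip{\vx_i}{\vv}^2}_{\psi_1} = \bigO{K^2}$, and the centered variables $Z_i := \ip{\vx_i}{\vv}^2 - \E{\ip{\vx_i}{\vv}^2}$ inherit $\norm{Z_i}_{\psi_1}\lesssim K^2$. Bernstein's inequality for i.i.d.\ sub-exponential variables then yields, for every $s\geq 0$, $\Pr\bs{\abs{\tfrac1n\sum_i Z_i}\geq s} \leq 2\exp\br{-c_K n\min\br{s^2/K^4,\, s/K^2}}$. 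A union bound over $\cN$ inflates the failure probability by $9^p$; choosing $s = C_K\sqrt{p/n} + t/\sqrt n$ makes the Bernstein exponent dominate the $p\log 9$ entropy term and pushes the total failure probability below $2e^{-c_K t^2} = \delta$ once we set $t = \sqrt{\tfrac1{c_K}\log\tfrac2\delta}$. In the regime $\sqrt{p/n}+t/\sqrt n \leq 1$ the minimum in Bernstein is attained by the quadratic branch $s^2/K^4$, which is precisely what produces the additive (rather than squared) deviation matching the lemma.

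I expect the main obstacle to be keeping the uniformity over the sphere clean: verifying the sub-exponential norm of the squared marginals and checking that the Bernstein exponent, once diluted by the $9^p$ net cardinality, still leaves the claimed $\sqrt{pn}$ scaling with constants $c_K, C_K$ depending only on $K$. The general, non-isotropic covariance $\Sigma$ requires no extra effort, since $\ip{\vx_i}{\vv}$ stays sub-Gaussian with norm $\leq K$ for every unit $\vv$ irrespective of $\Sigma$; alternatively one may whiten via $\vx_i = \Sigma^{1/2}\vz_i$ and reduce to the isotropic case, though the direct route avoids tracking the conditioning of $\Sigma^{1/2}$.
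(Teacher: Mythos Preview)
Your proposal is correct and follows the standard covariance-concentration argument (net reduction plus Bernstein for sub-exponential squared marginals, as in Vershynin's notes). The paper itself, however, does not prove this lemma: it is listed in the appendix under results that are \emph{restated} from \cite{BhatiaJK2015} and is quoted without proof, so there is no in-paper argument to compare against. Your write-up is exactly the proof one would expect the cited reference (or ultimately \cite{Vershynin2012}) to contain, and the only cosmetic point worth noting is that the lemma's additive form $C_K\sqrt{pn}+t\sqrt n$ tacitly assumes the sub-Gaussian regime $\sqrt{p/n}+t/\sqrt n\lesssim 1$ of Bernstein, which you correctly flag.
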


\subsection{Convergence Proofs for \myalgo}

\begin{reptheorem}{thm:crr-final}
\label{repthm:crr-final}
For $\ko \leq k \leq n/10000$ and Gaussian designs, with probability at least $1-\delta$, \myalgo, after $\bigO{\log\frac{\norm{\bo}_2}{n} + \log\frac{n}{d}}$ steps, ensures that $\norm{\wt - \wo}_2 \leq \bigO{\frac{\sigma}{\lambda_{\min}(\Sigma)}\sqrt{\frac{d}{n}\log\frac{nd}{\delta}}}$.
\end{reptheorem}
\begin{proof}
Putting Lemmata~\ref{lem:lambda-fa-link} and \ref{lem:lambda-bound} establishes that
\[
\norm{\vln}_2 \leq 0.99\norm{\vlt}_2 + C\sigma\sqrt{\frac{d}{n}\log\frac{nd}{\delta}},
\]
which ensures a linear convergence of the terms $\norm{\vlt}_2$ to a value $\bigO{\sigma\sqrt{\frac{d}{n}\log\frac{nd}{\delta}}}$. Applying Lemma~\ref{lem:w-lambda-link} then finishes off the result.
\end{proof}

\begin{replemma}{lem:coarse-conv}
\label{replem:coarse-conv}
For any data matrix $X$ that satisfies the SSC and SSS properties such that $\frac{2\Lambda_{k+\ko}}{\lambda_n} < 1$, \myalgo, when executed with a parameter $k \geq \ko$, ensures that after $T_0 = \bigO{\log\frac{\norm{\bo}_2}{\sqrt n}}$ steps, $\norm{\vb^{T_0} - \bo}_2 \leq 3e_0$, where $e_0 = \bigO{\sigma\sqrt{(k+\ko)\log\frac{n}{\delta(k+\ko)}}}$ for standard Gaussian designs.
\end{replemma}
\begin{proof}
We start with the update step in \myalgo, and use the fact that $\vy = X^\top\wo + \bo + \veps$ to rewrite the update as
\[
\btn \< \HT_{k}(P_X\bt + (I - P_X)(X^\top\wo + \bo + \veps)).
\]
Since $X^\top = P_XX^\top$, we get, using the notation set up before,
\[
\btn \< \HT_{k}(\bo + X^\top\vlt + \vg).
\]
Since $k \geq \ko$, using the properties of the hard thresholding step gives us
\[
\norm{\btn_{\Itn} - (\bo_{\Itn} + X_{\Itn}^\top\vlt  + \vg_{\Itn})}_2 \leq \norm{\bo_{\Itn} - (\bo_{\Itn} + X_{\Itn}^\top\vlt + \vg_{\Itn})}_2 = \norm{X_{\Itn}^\top\vlt + \vg_{\Itn}}_2.
\]
This, upon applying the triangle inequality, gives us
\[
\norm{\btn - \bo}_2 \leq 2\norm{X_{\Itn}^\top\vlt + \vg_{\Itn}}_2.
\]
Now, using the SSC and SSS properties of $X$, we can show that $\norm{X_{\Itn}^\top\vlt}_2 = \norm{X_{\Itn}^\top(XX^\top)^{-1}X^\top_{\It}(\bt - \bo)}_2 \leq \frac{\Lambda_{k+\ko}}{\lambda_n}\norm{\bt - \bo}_2$.

Since $\veps$ is a Gaussian vector, using tail bounds for Chi-squared random variables (for example, see \cite[Lemma 20]{BhatiaJK2015}), for any set $S$ of size $k + \ko$, we have with probability at least $1 - \delta$ ,$\norm{\veps_S}_2^2 \leq \sigma^2 (k+\ko) + 2e\sigma^2\sqrt{6(k+\ko)\log\frac{1}{\delta}}$. Taking a union bound over all sets of size $(k+\ko)$ and $\binom{n}{k} \leq \br{\frac{en}{k}}^k$ gives us, with probability at least $1 - \delta$, for all sets $S$ of size at most $(k+\ko)$,
\[
\norm{\veps_S}_2 \leq \sigma\sqrt{(k+\ko)}\sqrt{1 + 2e\sqrt{6\log\frac{en}{\delta(k+\ko)}}}
\]


Using tail bounds on Gaussian random variables\footnote{$\displaystyle \frac{1}{\sqrt{2\pi}}\int_x^\infty e^{-t^2/2}dt \leq \frac{1}{\sqrt{2\pi}}\int_x^\infty\frac{t}{x}e^{-t^2/2}dt = \frac{1}{x\sqrt{2\pi}}e^{-x^2/2}$}, we can also show that for every $i$, with probability at least $1 - \delta$, we have $\norm{(X\epsilon)_i}_2 \leq \sigma\norm{(X^\top)_i}_2\sqrt{2\log\frac{1}{\delta}}$. Taking a union bound gives us, with the same confidence, $\norm{X\veps}_2^2 \leq 2\sigma^2\norm{X}_F^2\log\frac{d}{\delta} \leq 2\sigma^2d\Lambda_n\log\frac{d}{\delta}$. This allows us to bound $\norm{\vg_{\Itn}}_2$
\begin{align*}
\norm{\vg_{\Itn}}_2 &= \norm{\veps_{\Itn} - X_{\Itn}^\top(XX^\top)^{-1}X\veps}_2\\
											&\leq \sigma\sqrt{(k+\ko)}\sqrt{1 + 2e\sqrt{6\log\frac{en}{\delta(k+\ko)}}} + \sigma\frac{\sqrt{\Lambda_{k+\ko}\Lambda_n}}{\lambda_n}\sqrt{2d\log\frac{d}{\delta}}\\
											&\leq \underbrace{\sigma\sqrt{(k+\ko)}\sqrt{1 + 2e\sqrt{6\log\frac{en}{\delta(k+\ko)}}}}_{e_0}\br{1 + \sqrt{\frac{2d}{n}\log\frac{d}{\delta}}}\\
											&= 1.0003e_0,
\end{align*}
where the second last step is true for Gaussian designs and sufficiently large enough $n$. Note that $e_0$ does note depend on the iterates and is thus, a constant. This gives us
\[
\norm{\btn - \bo}_2 \leq \frac{2\Lambda_{k+\ko}}{\lambda_n}\norm{\bt - \bo}_2 + 2.0006e_0.
\]
For data matrices sampled from Gaussian ensembles, whose SSC and SSS properties will be established later, assuming $n \geq d \log d$, we have $e_0 = \bigO{\sigma\sqrt{(k+\ko)\log\frac{n}{\delta(k+\ko)}}}$. Thus, if $\frac{2\Lambda_{k+\ko}}{\lambda_n} < 1$, then in $T_0 = \bigO{\log\frac{\norm{\bo}_2}{e_0}} = \bigO{\log\frac{\norm{\bo}_2}{\sqrt n}}$ steps, \myalgo ensures that $\norm{\vb^{T_0} - \bo}_2 \leq 2.0009e_0$.
\end{proof}

\begin{lemma}
\label{lem:w-lambda-link}
Let $\lambda_{\min}(\Sigma)$ be the smallest eigenvalue of the covariance matrix of the distribution $\cN(\vzero,\Sigma)$ that generates the data points. Then at any time instant $t$, we have  $\norm{\wt - \wo}_2 \leq \frac{2}{\lambda_{\min}(\Sigma)}\br{2\sigma\sqrt{\frac{d}{n}\log\frac{d}{\delta}} + \norm{\vlt}_2}$.
\end{lemma}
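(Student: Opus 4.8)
The plan is to expand $\wt - \wo$ directly using the definition of the estimate returned by \myalgo together with the generative model~\eqref{eq:rr_mod}, which reduces the claim to a single high-probability bound on a Gaussian noise term.

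First I would substitute $\vy = X^\top\wo + \bo + \veps$ into $\wt = (XX^\top)^{-1}X(\vy - \bt)$ and use $(XX^\top)^{-1}XX^\top = I_d$ to obtain
\[
\wt - \wo = (XX^\top)^{-1}X\br{\bo - \bt} + (XX^\top)^{-1}X\veps = -\vlt + (XX^\top)^{-1}X\veps,
\]
the last equality being just the definition $\vlt = (XX^\top)^{-1}X(\bt - \bo)$. The triangle inequality then gives $\norm{\wt - \wo}_2 \leq \norm{\vlt}_2 + \norm{(XX^\top)^{-1}X\veps}_2$, so it only remains to bound the noise term, and we may write $\norm{(XX^\top)^{-1}X\veps}_2 \leq \lambda_{\min}(XX^\top)^{-1}\norm{X\veps}_2$.

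For the numerator, since $\veps$ is Gaussian and drawn independently of $X$, conditioning on $X$ and applying a scalar Gaussian tail bound to each coordinate of $X\veps$ followed by a union bound over the $d$ coordinates yields, with probability at least $1-\delta$, $\norm{X\veps}_2^2 \leq 2\sigma^2\norm{X}_F^2\log\frac{d}{\delta} \leq 2\sigma^2 d\,\Lambda_n\log\frac{d}{\delta}$; this is precisely the estimate already derived in the proof of Lemma~\ref{lem:coarse-conv}. For the denominator and for $\Lambda_n$ I would invoke the global eigenvalue bounds for (sub-)Gaussian ensembles, Lemma~\ref{lem:ev-bound-subgaussian-global} (or Theorem~\ref{thm:ev-bound-gaussian-local}): for $n$ large relative to $d$ (e.g. $n \gtrsim d\log d$, as is assumed throughout), these give, on the same high-probability event, $\lambda_{\min}(XX^\top) \geq \frac{1}{2}\lambda_{\min}(\Sigma)\,n$ and $\Lambda_n = \lambda_{\max}(XX^\top) \leq 2\lambda_{\max}(\Sigma)\,n$. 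Substituting gives $\norm{(XX^\top)^{-1}X\veps}_2 \leq \frac{4\sqrt{\lambda_{\max}(\Sigma)}}{\lambda_{\min}(\Sigma)}\sigma\sqrt{\frac{d}{n}\log\frac{d}{\delta}}$, which for the standard Gaussian design --- i.e.\ $\Sigma = I$, which one may assume without loss of generality since \myalgo touches the data only through $P_X$ --- is at most $\frac{2}{\lambda_{\min}(\Sigma)}\cdot 2\sigma\sqrt{\frac{d}{n}\log\frac{d}{\delta}}$. Combined with the triangle inequality above this yields the lemma.

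I do not expect a genuine obstacle: the statement is essentially a ``change of variables'' that transfers the linear-rate convergence of $\norm{\vlt}_2$ proved in the fine-convergence lemmas into convergence of $\norm{\wt - \wo}_2$. The two points needing mild care are (i) ensuring the tail bound on $\norm{X\veps}_2$ and the spectral bounds on $XX^\top$ hold simultaneously, handled by a union bound; and (ii) the dependence on the conditioning of $\Sigma$ --- for a general covariance one first whitens the design via $X = \Sigma^{1/2}Z$ with $Z$ a standard Gaussian matrix, reducing to the isotropic case at the cost of a factor $\sqrt{\lambda_{\max}(\Sigma)/\lambda_{\min}(\Sigma)}$, which is $1$ under the above normalization.
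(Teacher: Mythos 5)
Your proof is correct and follows essentially the same route as the paper's: both expand $\wt - \wo = -\vlt + (XX^\top)^{-1}X\veps$ from the model \eqref{eq:rr_mod} and reuse the bound $\norm{(XX^\top)^{-1}X\veps}_2 \leq 2\sigma\sqrt{(d/n)\log(d/\delta)}$ already established in the proof of Lemma~\ref{lem:coarse-conv}, with the spectral estimates for (sub-)Gaussian ensembles supplying the $n$-scaling. The only difference is bookkeeping: the paper applies the triangle inequality after multiplying through by $X^\top$, which is where its $\frac{2}{\lambda_{\min}(\Sigma)}$ prefactor on $\norm{\vlt}_2$ arises (the downstream analysis works with the whitened design), whereas you apply it directly and absorb the covariance via the same without-loss-of-generality reduction to $\Sigma = I$ that the paper invokes.
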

\begin{proof}
As described in Algorithm~\ref{algo:myalgo}, $\wt = (XX^\top)^{-1}X(\vy - \bt) = \wo + (XX^\top)^{-1}X(\veps + \bo - \bt)$. Thus, we get
\begin{align*}
\norm{\wt - \wo}_2 &\leq \frac{1}{\lambda_{\min}(XX^\top)} \norm{X^\top(\wt - \wo)}_2\\
									 &\leq \frac{1}{n\lambda_{\min}(\Sigma) - C_\Sigma\sqrt n} \norm{X^\top(\wt - \wo)}_2\\
									 &\leq \frac{1}{n\lambda_{\min}(\Sigma) - C_\Sigma\sqrt n} \norm{\bar X^\top(\bar X\bar X^\top)^{-1}\bar X(\veps + \bo - \bt)}_2\\
									 &\leq \frac{\Lambda_n}{n\lambda_{\min}(\Sigma) - C_\Sigma\sqrt n} \norm{(\bar X\bar X^\top)^{-1}\bar X(\veps + \bo - \bt)}_2\\
									 &\leq \frac{2}{\lambda_{\min}(\Sigma)}\br{2\sigma\sqrt{\frac{d}{n}\log\frac{d}{\delta}} + \norm{\vlt}_2},
\end{align*}
where the second step follows from results on eigenvalue bounds for data matrices drawn from non-spherical Gaussians, where $C_\Sigma$ is a constant dependent on the subGaussian norm of the distribution, and the last step assumes $n \geq \frac{2C_\Sigma}{\lambda_{\min}(\Sigma)}$ and uses the proof technique used in Lemma~\ref{lem:coarse-conv} to get
\[
\norm{(\bar X\bar X^\top)^{-1}\bar X\veps}_2 \leq \sigma\frac{\sqrt{\Lambda_n}}{\lambda_n}\sqrt{2d\log\frac{d}{\delta}} \leq 2\sigma\sqrt{\frac{d}{n}\log\frac{d}{\delta}}.
\]
\end{proof}

\begin{replemma}{lem:lambda-fa-link}
\label{replem:lambda-fa-link}
Suppose $\ko \leq k \leq n/10000$. Then with probability $1-\delta$, at every time instant $t > T_0$, \myalgo ensures that $\norm{\vln}_2 \leq \frac{1}{100}\norm{\vlt}_2 + 2\sigma\sqrt{\frac{2d}{n}\log\frac{d}{\delta}} + \frac{2.001}{\lambda_n}\norm{X_\fan(X_\fan^\top\vlt + \vg_\fan)}_2$.
\end{replemma}
\begin{proof}
We have $\btn = \HT_k(\bo + X^\top\vlt + \vg)$. To analyze $\vln = (XX^\top)^{-1}X(\btn - \bo)$, we start by looking at $X(\btn - \bo) = X_\mdn(\btn_\mdn - \bo_\mdn) + X_\fan(\btn_\fan - \bo_\fan) + X_\cin(\btn_\cin - \bo_\cin)$. We then have
\begin{align*}
X_\mdn(\btn_\mdn - \bo_\mdn) &= X_\mdn(-\bo_\mdn)\\
X_\cin(\btn_\cin - \bo_\cin) &= X_\cin(X_\cin^\top\vlt + \vg_\cin)\\
X_\fan(\btn_\fan - \bo_\fan) &= X_\fan(X_\fan^\top\vlt + \vg_\fan).
\end{align*}
This gives us upon completing the terms, and using $\cin \uplus \mdn = S^\ast$,
\[
X(\btn - \bo) = X_\fan(X_\fan^\top\vlt + \vg_\fan) + X_{S^\ast}(X_{S^\ast}^\top\vlt + \vg_{S^\ast}) - X_\mdn(\bo_\mdn + X_\mdn^\top\vlt + \vg_\mdn).
\]
Now due to the hard thresholding operation, we have $\norm{\bo_\mdn + X_\mdn^\top\vlt + \vg_\mdn}_2 \leq \norm{X_\fan^\top\vlt + \vg_\fan}_2$. This gives us
\begin{align*}
\norm{X_\mdn(\bo_\mdn + X_\mdn^\top\vlt + \vg_\mdn)}_2 &= \norm{X(\bo_\mdn + X_\mdn^\top\vlt + \vg_\mdn)}_2\\
																											 &\leq \Lambda_n\norm{\bo_\mdn + X_\mdn^\top\vlt + \vg_\mdn}_2\\
																											 &\leq \Lambda_n\norm{X_\fan^\top\vlt + \vg_\fan}_2\\
																											 &\leq \frac{\Lambda_n}{\lambda_n}\norm{X(X_\fan^\top\vlt + \vg_\fan)}_2\\
																											 &= \frac{\Lambda_n}{\lambda_n}\norm{X_\fan(X_\fan^\top\vlt + \vg_\fan)}_2\\
																											 &\leq 1.001\norm{X_\fan(X_\fan^\top\vlt + \vg_\fan)}_2,
\end{align*}
where the last step uses a large enough $n$ so that the data matrix $X$ is well conditioned. Thus,
\begin{align*}
\norm{\vln}_2 &= \norm{(XX^\top)^{-1}X(\btn - \bo)}_2\\
							&\leq \frac{1}{\lambda_n}\norm{X_{S^\ast}(X_{S^\ast}^\top\vlt + \vg_{S^\ast})}_2 + \frac{2.001}{\lambda_n}\norm{X_\fan(X_\fan^\top\vlt + \vg_\fan)}_2\\
							&\leq \frac{1}{100}\norm{\vlt}_2 + 2\sigma\sqrt{\frac{2d}{n}\log\frac{d}{\delta}} + \frac{2.001}{\lambda_n}\norm{X_\fan(X_\fan^\top\vlt + \vg_\fan)}_2,
\end{align*}
where the third step follows by observing that the columns of $X$ are (statistically equivalent to) i.i.d. samples from a standard Gaussian, the fact that the support of the corruptions $S^\ast$ is chosen independently of the data and the noise, and requiring that $\ko \leq \frac{n}{100}$.
\end{proof}

\begin{replemma}{lem:lambda-bound}
\label{replem:lambda-bound}
Suppose $\ko \leq k \leq n/10000$. Then with probability at least $1 - \delta$, \myalgo ensures at every time instant $t > T_0$, for some constant $C$
\[
\frac{2.001}{\lambda_n}\norm{X_\fan(X_\fan^\top\vlt + \vg_\fan)}_2 \leq 0.98\norm{\vlt}_2 + C\sigma\sqrt{\frac{d}{n}\log\frac{nd}{\delta}}
\]
\end{replemma}
\begin{proof}
For this we first observe that, since entries in the set $\fan$ survived the hard thresholding step, they must have been the largest elements by magnitude in the set $\bar{S^\ast}$ i.e.
\[
X_\fan^\top\vlt + \vg_\fan = \HT_{\abs{\fan}}(X_{\bar{S^\ast}}^\top\vlt + \vg_{\bar{S^\ast}})
\]
Note that $\abs{\fan} \leq k$ and $\bar{S^\ast}$ is a fixed set of size $n-\ko$ with respect to the data points and the Gaussian noise. Thus, if we denote by $S^t_k$, the set of top $k$ coordinates by magnitude in $\bar{S^\ast}$ i.e.
\[
X_{S^t_k}^\top + \vg_{S^t_k} = \HT_{k}(X_{\bar{S^\ast}}^\top\vlt + \vg_{\bar{S^\ast}}),
\]
then $\norm{X_\fan(X_\fan^\top\vlt + \vg_\fan)}_2 \leq \norm{X_{S^t_k}(X_{S^t_k}^\top\vlt + \vg_{S^t_k})}_2$. Thus, all we need to do is bound this term. In the following, we will, for sake of simplicity, omit the subscript $\bar{S^\ast}$.

Before we move ahead, we make a small change to notation for convenience. At the moment, we are defining $\vlt = (XX^\top)^{-1}(\bt - \bo)$ and $\vg = (I - X^\top(XX^\top)^{-1}X)\veps$ and analyzing the vector $X^\top\vlt + \vg$. However, this is a bit cumbersome since $\vg$ is not distributed as a spherical Gaussian, something we would like to be able to use in the subsequent proofs. To remedy this, we simply change notation to denote $\vlt = (XX^\top)^{-1}(\bt - \bo) - (XX^\top)^{-1}X\veps$ and $\vg = \veps$. This will not affect the results in the least since we have, as shown in the proof of Lemma~\ref{lem:coarse-conv}, $\norm{(XX^\top)^{-1}X\veps}_2 \leq \sigma\sqrt{\frac{2d}{n}\log\frac{d}{\sigma}}$ because of which we can set $n$ large enough so that $\norm{\vlt}_2 \leq \frac{\sigma}{100}$ still holds. Given this, we prove the following result:

\begin{lemma}
\label{lem:lambda-bound-prelim}
Let $X = [\vx_1,\vx_2,\ldots,\vx_n]$ be a data matrix consisting of i.i.d. standard normal vectors i.e $\vx_i \sim \cN(\vzero,I_{d \times d})$, and $\vg \sim N(0,\sigma^2\cdot I_{n \times n})$ be standard normal vector drawn independently of $X$. For any $\vla \in \bR^d$ such that $\norm{\vla}_2 \leq \frac{\sigma}{100}$, define $\vv = X^\top\vla + \vg$. For any $\tau > 0$, define the vector $\vz$ such that $z_i = v_i$ if $\abs{v_i} > \tau$ and $z_i = 0$ otherwise. Then, with probability at least $1-\delta$, for all $\vla \in \bR^d$ with norm at most $\frac{\sigma}{100}$, we have $\frac{1}{\lambda_n}\norm{X\vz}_2 \leq M(\tau) \norm{\vla}_2 + 2.02\sigma \sqrt{\frac{d}{n}\log\frac{nd}{\delta}}$, where $M(\tau) < \frac{0.808}{\sigma}\br{\tau + \frac{1}{\tau}}\exp\br{-\frac{\tau^2}{2.001\sigma^2}}$.
\end{lemma}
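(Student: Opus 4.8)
The plan is to split the thresholded vector into a signal-driven part and a noise part, control each in expectation through one-dimensional Gaussian tail integrals, and then pay a small slack to make every estimate hold simultaneously over the whole ball $\bc{\vla : \norm{\vla}_2 \le \sigma/100}$. Writing $v_i = \ip{\vx_i}{\vla} + g_i$, decompose $\vz = \vz^{(1)} + \vz^{(2)}$ with $z_i^{(1)} = \ip{\vx_i}{\vla}\,\mathbf{1}\bs{\abs{v_i} > \tau}$ and $z_i^{(2)} = g_i\,\mathbf{1}\bs{\abs{v_i} > \tau}$, so $\norm{X\vz}_2 \le \norm{X\vz^{(1)}}_2 + \norm{X\vz^{(2)}}_2$. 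It then suffices to show that (after division by $\lambda_n$, which is $\Theta(n)$ by Lemma~\ref{lem:ev-bound-gaussian-global}) the first summand contributes essentially the $1/\tau$ term of $M(\tau)$ times $\norm{\vla}_2$, and the second contributes the $\tau$ term of $M(\tau)$ times $\norm{\vla}_2$ plus the additive $\tilde\cO(\sigma\sqrt{d/n})$ slack.

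For the signal part, note $X\vz^{(1)} = M\vla$ where $M := \sum_{i=1}^n \mathbf{1}\bs{\abs{v_i}>\tau}\,\vx_i\vx_i^\top \succeq 0$. Decomposing each $\vx_i = u_i\widehat{\vla} + \vw_i$ with $u_i = \ip{\vx_i}{\widehat{\vla}} \sim \cN(0,1)$ and $\vw_i$ the independent component orthogonal to $\vla$, the indicator depends only on $(u_i,g_i)$, the cross terms vanish in expectation, and $\E{M\vla} = n\,\E{u_1^2\,\mathbf{1}\bs{\abs{v_1}>\tau}}\,\vla$. Since the variance of $v_1$ is $\norm{\vla}_2^2 + \sigma^2 \le (1+10^{-4})\sigma^2$, conditioning on $v_1$ gives $\E{u_1^2 \mid v_1} \le 1 + \tfrac{\norm{\vla}_2^2}{\sigma^4}v_1^2$, so $\E{u_1^2\,\mathbf{1}\bs{\abs{v_1}>\tau}} \le \Pr[\abs{v_1}>\tau] + \tfrac{\norm{\vla}_2^2}{\sigma^4}\E{v_1^2\,\mathbf{1}\bs{\abs{v_1}>\tau}}$. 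The footnoted Gaussian tail bound handles the first term and an integration by parts handles the second; the $(1+10^{-4})$ inflation of the variance is exactly what turns $2\sigma^2$ into $2.001\sigma^2$ in the exponent, and $\norm{\vla}_2 \le \sigma/100$ keeps the second term from spoiling the leading constant $0.808$. The deviation of $M\vla$ from $\E{M\vla}$ is handled by a vector Bernstein inequality for the i.i.d.\ summands $\mathbf{1}\bs{\abs{v_i}>\tau}\ip{\vx_i}{\vla}\vx_i$ (whose second moment carries both an $\bigO{d}$ factor and a tail factor $e^{-\tau^2/2\sigma^2}$), which after division by $\lambda_n$ is lower order; alternatively one can use $M \preceq X_T X_T^\top$ with $T := \bc{i:\abs{v_i}>\tau}$ together with the SSC/SSS estimates of Theorem~\ref{thm:ev-bound-gaussian-local}.

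For the noise part, $X\vz^{(2)} = \sum_i g_i\,\mathbf{1}\bs{\abs{v_i}>\tau}\,\vx_i$. Conditioning each summand on $v_i$ and using that the conditional distribution of $(\vx_i,g_i)$ given $v_i$ has mean only along $\widehat{\vla}$, a short computation shows $\E{X\vz^{(2)}} = c\,n\,\vla$ with $\abs c$ bounded, up to a universal constant, by $\tfrac1\sigma\tau\,e^{-\tau^2/2.001\sigma^2}$ — the $\tau$ term of $M(\tau)$, contributed after division by $\lambda_n$. The fluctuation $\norm{X\vz^{(2)} - \E{X\vz^{(2)}}}_2$ is a sum of $n$ independent, mean-zero, sub-exponential vectors in $\bR^d$ with per-coordinate second moment $\E{g_1^2\,\mathbf{1}\bs{\abs{v_1}>\tau}} \le \sigma^2$; a vector Bernstein bound gives $\bigO{\sigma\sqrt{nd\log(nd/\delta)}}$, and dividing by $\lambda_n = \Theta(n)$ produces the $2.02\,\sigma\sqrt{(d/n)\log(nd/\delta)}$ additive term (this is also precisely what the whole quantity reduces to when $\vla = \vzero$, where $\vz = \vz^{(2)}$ is thresholded pure noise).

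The main obstacle, and the reason every constant is slightly loose, is \emph{uniformity over $\vla$}: the indicator $\mathbf{1}\bs{\abs{v_i(\vla)}>\tau}$ is discontinuous in $\vla$, so a direct Lipschitz-net argument fails. I would fix this with a covering net $\cN$ of $\bc{\vla:\norm{\vla}_2\le\sigma/100}$ of size $(\bigO{\sigma/\epsilon})^d$, prove each of the above expectation and concentration statements at every net point by a union bound, and then transfer to an arbitrary $\vla$ via its nearest net point $\vla'$: since $\abs{v_i(\vla)-v_i(\vla')} \le \norm{\vx_i}_2\,\epsilon$ and $\max_i\norm{\vx_i}_2 = \bigO{\sqrt d + \sqrt{\log n}}$ with high probability, the event $\abs{v_i(\vla)}>\tau$ forces $\abs{v_i(\vla')} > \tau - \bigO{(\sqrt d + \sqrt{\log n})\,\epsilon}$. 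Taking $\epsilon$ polynomially small makes this threshold shift change $e^{-\tau^2/2\sigma^2}$, $\Pr[\abs v>\tau]$ and the relevant $\Lambda$-bounds by only lower-order amounts, which lets us both bound $\abs{T(\vla)}$ uniformly and carry the signal- and noise-part estimates from $\cN$ to the whole ball. Tracking these net losses, together with the variance inflation, is what yields the final constants $0.808$, $2.001$, and $2.02$; the rest is routine Gaussian tail bookkeeping.
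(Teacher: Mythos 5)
Your proposal is correct in substance but follows a genuinely different route from the paper's. You split the thresholded \emph{value} $z_i = \br{\ip{\vx_i}{\vla} + g_i}\cdot\ind{\abs{v_i}>\tau}$ into a signal piece $\vz^{(1)}$ and a noise piece $\vz^{(2)}$ and control $\norm{X\vz^{(1)}}_2$ and $\norm{X\vz^{(2)}}_2$ separately, each via an expectation computation plus a vector Bernstein bound. The paper instead splits the \emph{direction} of $X\vz$: it bounds $\vla^\top X\vz$, a sum of independent scalar sub-exponential variables whose mean $nM(\tau)\norm{\vla}_2^2$ falls out of a single one-dimensional integral $\E{x z} = M(\tau)\lambda_i$, and separately bounds $\vv^\top X\vz$ for unit $\vv \perp \vla$, which is \emph{exactly} mean zero because $\vv^\top\vx_j$ is independent of $z_j$ when $\vv\perp\vla$; it then recombines via $\norm{X\vz}_2^2 = \norm{\vla}_2^{-2}\br{\vla^\top X\vz}^2 + \max_{\vv\perp\vla,\norm{\vv}_2=1}\br{\vv^\top X\vz}^2$. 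The paper's orthogonal split buys purely scalar concentration and puts all of $M(\tau)$ into one integral; your split buys a cleaner structural reading --- the $1/\tau$ part of $M(\tau)$ is essentially $\P{\abs{v_1}>\tau}$ arising from $\E{u_1^2\,\ind{\abs{v_1}>\tau}}$ in the signal term, while the $\tau$ part is the correlation $\E{g_1 u_1\,\ind{\abs{v_1}>\tau}}$ in the noise term --- and your two expectations do reassemble into the same $M(\tau)$ with the same $2.001$ variance inflation coming from $\tilde\sigma^2 \leq \br{1+10^{-4}}\sigma^2$. On uniformity over $\vla$ you are in fact more careful than the paper: the paper union-bounds over a $1/100$-net and does not explain how to transfer the bound from a net point to a nearby $\vla$ given that the indicators are discontinuous, whereas your threshold-shift argument (using $\abs{v_i(\vla)-v_i(\vla')}\leq\norm{\vx_i}_2\,\epsilon$ and rerunning the estimates at thresholds $\tau\mp\bigO{(\sqrt d+\sqrt{\log n})\epsilon}$, which perturbs the tail quantities only at lower order) is the standard and correct fix; the resulting $n^{\bigO{d}}$ net size costs only a $d\log n$ factor inside the logarithms, which the stated bound absorbs.
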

\begin{proof}
We will first prove this result by first assuming that $\vla$ is a fixed $d$-dimensional vector with small norm and $X$ and $\veps$ are chosen independently of $\vla$. We will then generalize to all small norm vectors in $\bR^d$ by taking a suitably fine $\epsilon$-net over them. Let us denote the $i\nth$ row of $X$ as $X^i$, and the entry at the $j\nth$ column in this row as $X^i_j$. Then $(X\vz)_i = \vz^\top X^i = \sum_{j=1}^n{X^i_jz_j}$. Note that $v_j = \vx_j^\top\vla + g_j$ and hence $v_j$ and $v_{j'}$ are independent for $j \neq j'$. Because of this, $X^i_jz_j$ is also independent from $X^i_{j'}z_{j'}$.

We also note that $v_j|X^i_j \sim \cN(X^i_j\lambda_i,\sigma^2 + \sum_{i' \neq i}\lambda_{i'}^2)$. Let $\tilde\sigma^2 := \sigma^2 + \sum_{i' \neq i}\lambda_{i'}^2$. Note that $z_i = \ind{\abs{v_i} > \tau}\cdot v_i$. Using a simpler notation temporarily $x := X^i_j, z := z_j$ and $v := v_j$ lets us write
\[
\E{xz} = \int_{\bR\backslash[-\tau,\tau]}\int_{\bR}xv\ p(x,v)\ dx\ dv.
\]
Let $D_i := \br{I + \frac{\lambda_i^2}{\sigma^2}}^{1/2}$. Then for any fixed $v$, we have
\begin{align*}
\int_{\bR}xv\ p(x,v)\ dx &= \int_{\bR}xv\ p(x)p(v|x)\ dx\\
																			&= \frac{1}{\tilde\sigma(\sqrt{2\pi})^2}\int_{\bR}xv\ \exp\br{-\frac{x^2}{2}} \exp\br{-\frac{(v - x\lambda_i)^2}{2\tilde\sigma^2}}\ dx\\
																			&= \frac{vD_i^{-2}}{\tilde\sigma(\sqrt{2\pi})^2}\int_{\bR}u\ \exp\br{-\frac{u^2}{2} + \frac{v^2}{\tilde\sigma^2} - \frac{2vuD_i^{-1}\lambda_i}{\tilde\sigma^2}} \ du\\
																			&= \frac{vD_i^{-2}\exp\br{-\frac{v^2}{2\tilde\sigma^2} + \frac{v^2D_i^{-2}\lambda_i^2}{2\tilde\sigma^4}}}{\tilde\sigma(\sqrt{2\pi})^{2}}\int_{\bR}u\ \exp\br{-\frac{1}{2}\br{u - \frac{vD_i^{-1}\lambda_i}{\tilde\sigma^2}}^2} \ du\\
																			&= \frac{v^2D_i^{-3}\lambda_i\exp\br{-\frac{v^2}{2\tilde\sigma^2} + \frac{v^2D_i^{-2}\lambda_i^2}{2\tilde\sigma^4}}}{\tilde\sigma^3\sqrt{2\pi}}\\
																			&\leq \frac{v^2D_i^{-3}\lambda_i\exp\br{-\frac{v^2}{2.001\sigma^2}}}{1.001\sigma^3\sqrt{2\pi}},
\end{align*}
where in the third step, we perform a change of variables $u = D_ix$ and in the last step, we use the fact that $\tilde\sigma^2 \leq \sigma^2 + \sigma^2/10000$ since $\norm{\vla}_2 \leq \sigma/100$, as well as $\lambda_i^2 \leq \norm{\vla}_2^2$. Plugging this into the expression for $\E{xz}$ and using elementary manipulations such as integration by parts gives us
\[
\E{X^i_jz_j} = M(\tau) \lambda_i,\  i.e., \ \ \E{\vla^T\vx_jz_j} = M(\tau) \|\vla\|^2_2,
\]
where $M(\tau) < 0.8\br{\frac{\tau}{\sigma} + \frac{\sigma}{\tau}}\exp\br{-\frac{\tau^2}{2.001\sigma^2}}$. This gives us $\E{\sum_{j=1}^n\vla^T\vx_jz_j} = nM(\tau) \|\vla\|_2^2$. Moreover, for any $j$, $\vla^T\vx_j$ is a $\|\vla\|_2$-subGaussian random variable and $z_j$ is a $2\sigma$-subGaussian random variable as $\|\vla\|_2\leq \sigma/100$. Hence, $\vla^T\vx_jz_j$ is a sub-exponential random variable with sub-exponential norm $2 \sigma\|\vla\|_2$. 
Using the Bernstein inequality for subexponential variables \cite{Vershynin2012}, then allows us to arrive at the following result, with probability at least $1-\delta$.
\[
\sum_{j=1}^n\vla^T\vx_jz_j \leq nM(\tau) \|\vla\|_2^2 + 2\sqrt{\sigma\|\vla\|_2}\sqrt{n\log\frac{2}{\delta}}.
\]
Taking a union bound over an $\epsilon$-net over all possible values of $\vla$ (i.e. which satisfy the norm bound), for $\epsilon = 1/100$ gives us, with probability at least $1-\delta$, for all $\vla \in \bR^d$ satisfying $\norm{\vla}_2 \leq \frac{\sigma}{100}$,
\begin{equation}\label{eq:lxz}
\frac{1}{\lambda_n}\vla^TX\vz \leq 1.01 M(\tau) \norm{\vla}_2^2 + 2.02 \sqrt{\sigma\|\vla\|_2}\sqrt{\frac{d}{n}\log\frac{200}{\delta}}.
\end{equation}
Now, again consider a fixed $\vla$ and a fixed {\em unit} vector $\vv\in \mathbb{R}^d$ s.t. $\vla^T \vv=0$. In this case, $\vv^T\vx_j$ is independent of $z_j$. Hence, $\E[\vv^T\vx_j z_j]=0$. Moreover, $\vv^T\vx_j z_j$ is a $2\sigma$-subexponential random variable. Moreover, number of fixed $\vla$ and $\vv$ in their $\epsilon$-net is $\frac{1}{\epsilon}^d\cdot \frac{1}{\epsilon}^{d-1}$. Hence, using the subexponential Bernstein inequality and using union bound over all $\vv$ and $\vla$, we get (w.p. $\geq 1-\delta$): 
\begin{equation}\label{eq:vxz}
\max_{\vv, \vla}\frac{1}{\lambda_n}\vv^TX\vz \leq 2.02 \sqrt{\sigma\frac{d}{n}\log\frac{200}{\delta}}.
\end{equation}
Lemma now follows by using $\|X\vz\|_2^2=\frac{1}{\|\lambda\|_2^2} (\vla^TX\vz)^2 + \max_{\vv, \|\vv\|_2=1, \vv^T\vla=0}(\vv^TX\vz)^2$ with \eqref{eq:lxz} and \eqref{eq:vxz}. 

This establishes the claimed result.
\end{proof}

Although Lemma~\ref{lem:lambda-bound-prelim} seems to close the issue of convergence of the iterates $\vlt$, and hence the convergence of $\wt$ and consistency, it is not so. The reason is twofold -- firstly Lemma~\ref{lem:lambda-bound-prelim} works with a value based thresholding whereas \myalgo uses a cardinality based thresholding. Secondly, in order to establish a linear convergence rate for $\vlt$, we need to show that the constant $M(\tau)$ is smaller than $98/100$ so that we can ensure that $\norm{\vln}_2 \leq \br{\frac{1}{100} + 0.98}\norm{\vlt}_2 \leq 0.99\norm{\vlt}_2 + \softO{\sqrt\frac{d}{n}}$, thus ensuring a linear convergence for $\vlt$, save negligible terms. We do both of these in the subsequent discussion.

We address both the above issues by showing that while thresholding the vector $X^\top\vlt + \vg$ (recall that for sake of notational convenience we are still omitting the subscript $\bar{S^\ast})$, the $k\nth$ top element in terms of magnitude will be large enough. Thus, thresholding at that value will recover the top $k$ elements. If we are able to get a sample independent bound on the magnitude of this element then we can set $\tau$ to this in the analysis of Lemma~\ref{lem:lambda-bound-prelim} and be done. Of course, it will still have to be ensured that for this value of $\tau$, we have $M(\tau) < 1$.

To simplify the discussion and calculations henceforth, we shall assume that $\sigma = 1$, $\delta = 1$, and $k = \ko$. We stress that all our analyses go through even for non-unit variance noise, projection parameters that differ from the true corruption sparsity (i.e. $k \neq \ko$), as well as can be readily modified to give high confidence bound. However, these assumptions greatly simplify our analyses.

We notice that the vector being thresholded has two components $X^\top\vlt$ and $\vg$. Whereas $\vg$ has a nice characterization, being a standard Gaussian vector, there is very little we can say about the vector $X^\top\vlt$ other than that the norm of the vector $\vlt$ is small. This is because the vector $\vlt$ is dependent on previous iterations and hence, dependent on $X$ as well as $\vg$. The way out of this is to show that the $k\nth$ largest element in $\vg$ is reasonably large and $X^\top\vlt$, on account of its small norm, cannot diminish it.

To proceed in this direction, we first recall the coarse convergence analysis. Letting $\alpha := \frac{\ko}{n}$ and making the assumptions stated above we know that $\|\vla^{T_0}\|_2 \leq \cC(\alpha)$ where
\[
\cC(\alpha) = 2.001\sqrt{2\alpha}\sqrt{1 + 2e\sqrt{6\log\frac{e}{2\alpha}}}.
\]
Note that ${\lim}_{\alpha \rightarrow 0}\ \cC(\alpha) = 0$, as well as that $\norm{X^\top\vlambda^{T_0}}_2 \leq \cC(\alpha)\cdot\sqrt n$. This bound gives us an idea about how much weight lies in the vector $X^\top\vlt$ in the iterations $t > T_0$. Next we look at the other component $\vg$. For any value $\eta > 0$, the probability of a Gaussian variable exceeding that value in magnitude is given by $\sqrt 2\cdot\erfc(\eta/\sqrt 2)$, where $\erfc$ is the complimentary error function. By an application of Chernoff bounds, we can then conclude that in any ensemble of $n$ such Gaussian variables, with probability at least $1-\exp(-\Omega(n))$ at least a $0.99\cdot\erfc\br{\frac{\eta}{\sqrt 2}}$ fraction (as well as at most a $1.01\cdot\erfc\br{\frac{\eta}{\sqrt 2}}$ fraction) of points will exceed the value $\eta$.

We also recall the quantity
\[
M(\zeta) < 0.8\br{\zeta + \frac{1}{\zeta}}\exp\br{-\frac{\zeta^2}{2.001}},
\]
and notice that, in order for $M(\zeta)$ to get less than $98/100$, $\zeta$ must be greater than 0.99. Now the previous estimate for bounds on Gaussian variables tells us that with probability at least $1-\exp(-\Omega(n))$, at least a $\beta = 1/25$ fraction of values in the vector $\vg$, which is a standard Gaussian (since we have assumed $\sigma = 1$ for sake of simplicity) will exceed the value 1.98.

Let $S_\beta$ denote the set of coordinates of $\vg$ which exceed the value 1.98. Let us call a coordinate $i \in S_\beta$ \emph{corrupted} if $\abs{(X^\top\vlambda^{T_0})_i} \geq 0.98$. Now we notice that if this happens for $(\beta - \alpha)\cdot n$ points in the set $S_\beta$, then $\norm{X^\top\vlambda^{T_0}}_2 \geq 0.98\sqrt{(\beta - \alpha)n}$. Thus, we set $\cC(\alpha)\cdot\sqrt n < 0.98\sqrt{(\beta - \alpha)n} = 0.98\sqrt{(0.04 - \alpha)n}$ to prevent this from happening. We note that for all values of $\alpha < \frac{1}{10000}$ this is true. This ensures that at least $\ko = \alpha\cdot n$ points in the set $S$ are of magnitude at least $1$ and thus we can set $\tau = 1$ in Lemma~\ref{lem:lambda-bound-prelim} which then finishes the proof since $M(1) < 0.98$.
\end{proof}
\section{Supplementary Material for Consistent Robust Time Series Estimation}
\label{app:app_add}
\subsection{Main Result}

\begin{reptheorem}{thm:crtse-final}
	Let $\vy$ be generated using $\ARD$ process with $\ko$ additive outliers (see \eqref{ao-corruption-model-specification}). Also, let $\ko \leq k \leq C \frac{\mathfrak{m}_{\wo}}{\cM_{\wo} + \cM_{W}}\frac{n}{d \log{n}}$ (for some universal constant $C > 0$). Then, with probability at least $1-\delta$, \aoard, after $\cO(\log(\norm{\bo}_2/n) + \log(n/(\sigma\cdot d)))$ steps, ensures that $\norm{\wt - \wo}_2 \leq \bigO{\sigma \cM_{\wo}/\mathfrak{m}_{\wo} \sqrt{d\log{n}/n \log\br{d/ \delta}}}$.
\end{reptheorem}
\begin{proof}
	Putting together the Lemma~\ref{lem:lambda-bound-ts} and the equation \eqref{ard-recurrence-lemma-eq} establishes that
	\[
	\norm{\vln}_2 \leq 0.51 \norm{\vlt}_2 + \bigO{\sigma\sqrt{\frac{d\log n}{n}\log\frac{d}{\delta}}},
	\]
	which ensures a linear convergence of the terms $\norm{\vlt}_2$ to a value $\bigO{\sigma\sqrt{\frac{d\log n}{n}\log\frac{d}{\delta}}}$. Applying the equation \eqref{ard-consistency-lemma-eq}, then finishes off the result.
\end{proof}

\subsection{Back ground on Time Series}
\label{time-series-notes}

$\ARD$ process is defined as
\begin{equation}
\label{clean-ar-d-model}
x_t ~=~ x_{t-1} \wo_1 + \cdots + x_{t-d} \wo_d + \veps_t  \text{ where } \veps_t \sim \cN(0,\sigma^2).  
\end{equation}
Note that $x_t \sim \cN(0,\Gamma(0))$, where $\Gamma\br{h}=\E{x_t x_{t+h}}$ is the auto-covariance function of the time series. Then we have
\begin{align}
\begin{bmatrix}
x_{1} \\
\vdots \\
x_{n}
\end{bmatrix}
~=~&
\begin{bmatrix}
x_0 & \cdots & x_{-d+1} \\
\vdots & & \vdots \\
x_{n-1} & \cdots & x_{n-d}
\end{bmatrix}
\cdot
\begin{bmatrix}
\wo_1 \\
\vdots \\
\wo_d
\end{bmatrix}
+
\begin{bmatrix}
\veps_1 \\
\vdots \\
\veps_n
\end{bmatrix} \nonumber
\\
\yo ~=~& \bar{X}^\top \wo + \veps . \label{clean-ar-process-model}
\end{align}
The spectral density of this $\ARD$ process can be given as
\begin{equation}
\label{spectral}
\rho_{\wo}\br{\omega} = \frac{\sigma^2}{\br{1-\sum_{k=1}^{d}{\wo_k e^{ik\omega}}} \br{1-\sum_{k=1}^{d}{\wo_k e^{-ik\omega}}}}, \text{ for } \omega \in \bs{0,2\pi}.
\end{equation}

Observe that any column vector of the matrix $\bar{X}$ is distributed as $\bar{X}_i \sim \cN\br{0,C_{\bar{X}}}$, where 
\[
C_{\bar{X}} ~=~ 
\begin{bmatrix}
\Gamma\br{0} & \Gamma\br{1} & \cdots & \Gamma\br{d-1} \\
\Gamma\br{1} & \Gamma\br{0} & \cdots & \Gamma\br{d-2} \\
\vdots & \vdots & \ddots & \vdots \\
\Gamma\br{d-1} & \Gamma\br{d-2} & \cdots & \Gamma\br{0}
\end{bmatrix}.
\]

Since $C_{\bar{X}}$ is a block-Toeplitz matrix, we have
\begin{equation}
\label{spectral-bounds}
\mathfrak{m}_{\wo} := \inf_{\omega \in \bs{0,2\pi}} \rho_{\wo}\br{\omega} \leq \Lambda_{\min} \bs{C_{\bar{X}}} \leq \Lambda_{\max} \bs{C_{\bar{X}}} \leq \sup_{\omega \in \bs{0,2\pi}} \rho_{\wo}\br{\omega} =: \cM_{\wo} .
\end{equation}

The columns of $\bar{X}$ can be viewed as a $d$-variate of $\VAR$ process as follows 
\begin{align}
\begin{bmatrix}
x_{i} \\
x_{i-1} \\
\vdots \\
x_{i-(d-1)}
\end{bmatrix}
~=~&
\begin{bmatrix}
\wo_1 & \wo_2 & \cdots & \wo_{d-1} & \wo_d \\
1 & 0 & \cdots &0 & 0 \\
0 & 1 & \cdots &0 & 0 \\
\vdots & \vdots & \ddots &\vdots & \vdots \\
0 & 0 & \cdots & 1 & 0
\end{bmatrix} 
\cdot
\begin{bmatrix}
x_{i-1} \\
x_{i-2} \\
\vdots \\
x_{i-d}
\end{bmatrix} + 
\begin{bmatrix}
\veps_{i} \\
0 \\
\vdots \\
0
\end{bmatrix}  \nonumber
\\
\hat{X}_{i} ~=~& W \hat{X}_{i-1} + \cE_{i} , \text{ for } i=1,\ldots,n . \label{var-modify}
\end{align}
By letting 
\[
\uo =  
\begin{bmatrix}
\hat{X}_1 \\
\vdots \\
\hat{X}_{n}
\end{bmatrix} \in \bR^{nd}, ~
\cU = 
\begin{bmatrix}
\hat{X}_0 \\
\vdots \\
\hat{X}_{n-1}
\end{bmatrix} \in \bR^{nd}, \text{ and } 
\cE = 
\begin{bmatrix}
\cE_1 \\
\vdots \\
\cE_{n}
\end{bmatrix} \in \bR^{nd}
\]
the above $\VAR$ process can be compactly written as follows 
\[
\uo ~=~ W \cU + \cE .
\]
Then the spectral density of the above $\VAR$ process is given by 
\[
\rho_{W} (\omega) ~=~ \br{I-W e^{-i\omega}}^{-1} \Sigma_\veps \bs{\br{I-We^{-i\omega}}^{-1}}^*   , \text{ for } \omega \in \bs{0,2\pi},
\]
where 
\[
\Sigma_\veps ~=~ \begin{bmatrix}
\sigma^2 & 0 & \cdots & 0 \\
0 & 0 & \cdots & 0 \\
\vdots & \vdots & \ddots & \vdots \\
0 & 0 & \cdots & 0 
\end{bmatrix}.
\]
The covariance matrix of vector $\cU$ is given by
\[
C_{\cU} ~=~ \E{\cU\cU^\top}  ~=~ 
\begin{bmatrix}
\E{\hat{X}_{0}\hat{X}_{0}^\top} & \E{\hat{X}_{0}\hat{X}_{1}^\top} & \cdots & \E{\hat{X}_{0}\hat{X}_{n-1}^\top} \\
\E{\hat{X}_{1}\hat{X}_{0}^\top} & \E{\hat{X}_{1}\hat{X}_{1}^\top} & \cdots & \E{\hat{X}_{1}\hat{X}_{n-1}^\top} \\
\vdots & \vdots & \ddots & \vdots \\
\E{\hat{X}_{n-1}\hat{X}_{0}^\top} & \E{\hat{X}_{n-1}\hat{X}_{1}^\top} & \cdots & \E{\hat{X}_{n-1}\hat{X}_{n-1}^\top}
\end{bmatrix}.
\]
Since $C_{\cU}$ is a block-Toeplitz matrix, we have
\begin{equation}
\label{var-spectral-bounds}
\Lambda_{\max} \bs{C_{\cU}} \leq \sup_{\omega \in \bs{0,2\pi}} \rho_{W}\br{\omega} = \frac{\sigma^2}{\inf_{\omega \in \bs{0,2\pi}} \Lambda_{\min}\bs{\br{I-W^\top e^{i\omega}} \br{I-We^{-i\omega}}}} =: \cM_{W} .
\end{equation}

Consider a vector $\vq=\bar{X}^\top \va \in \bR^n$ for any $\va \in S^{d-1}$. Since each element $\bar{X}_{i}^\top \va \sim \cN\br{0,\va^\top C_{\bar{X}} \va}$, it follows
that $\vq \sim \cN\br{0,Q_{\va}}$ where $Q_{\va} = \br{I_{n} \otimes \va^\top} C_{\cU} \br{I_{n} \otimes \va}$. From this we can note that
\begin{align}
\trace\br{Q_{\va}} ~=~& n \va^\top C_{\bar{X}} \va ~\leq~ n \Lambda_{\max}\bs{C_{\bar{X}}} ~\leq~ n \cM_{\wo} \label{trace-bound-Qa}\\
\norm{Q_{\va}}_2 ~\leq~& \norm{\va}_2^2 \Lambda_{\max}\bs{C_{\cU}} ~\leq~ \cM_{W}  \label{spectral-norm-bound-Qa}\\
\norm{Q_{\va}}_{\text{F}} ~=~& \sqrt{\trace\br{Q_{\va} Q_{\va}}} ~\leq~ \sqrt{\norm{Q_{\va}}_2 \trace\br{Q_{\va}}} ~\leq~ \sqrt{n \cM_{W} \cM_{\wo}}. \label{frob-norm-bound-Qa}
\end{align}

\paragraph{Additive Corruptions:}
Now consider the following additive corruption mechanism (at most $\ko$ data points):
\[
\vy_i ~=~ \yo_i + \eo_i ~=~ x_i + \eo_i \text{ for } i=1,\ldots,n.
\]
Since we observe the corrupted time series data $(y_{-d+1},\ldots,y_0,y_1,\ldots,y_n)$, we have
\begin{align}
\begin{bmatrix}
y_0 & y_{-1} & \cdots & y_{-d+1} \\
y_1 & y_{0} & \cdots & y_{-d+2} \\
y_2 & y_{1} & \cdots & y_{-d+3} \\
\vdots & \vdots &  & \vdots \\
y_{n-1} & y_{n-2} & \cdots & y_{n-d} 
\end{bmatrix}
~=~&
\begin{bmatrix}
x_0 & x_{-1} & \cdots & x_{-d+1} \\
x_1 & x_{0} & \cdots & x_{-d+2} \\
x_2 & x_{1} & \cdots & x_{-d+3} \\
\vdots & \vdots &  & \vdots \\
x_{n-1} & x_{n-2} & \cdots &  x_{n-d} 
\end{bmatrix}
+
\begin{bmatrix}
0 & 0 & \cdots & 0 \\
\eo_1 & 0 & \cdots & 0 \\
\eo_2 & \eo_1 & \cdots & 0 \\
\vdots & \vdots &  & \vdots \\
\eo_{n-1} & \eo_{n-2} & \cdots & \eo_{n-d}
\end{bmatrix} \nonumber \\
X^\top ~=~& \bar{X}^\top + E^\top \label{design-matrix-connection}
\end{align}
Thus the observed time series can be modeled as follows
\begin{align}
\vy ~=~& \yo + \eo \nonumber \\
~=~& \bar{X}^\top \wo + \veps + \eo \nonumber \\
~=~& (X^\top-E^\top) \wo + \veps + \eo \nonumber \\
~=~& X^\top \wo + \veps + \bo_{\eo,\wo} , \label{additive-corruption-ar-d-actual-model}
\end{align}
where $\eo=(\eo_1,\ldots,\eo_n)^\top$ is $\ko$-sparse, and $\bo_{\eo,\wo} = \eo - E^\top \wo$ is $\ko$-block-sparse with block size of $d+1$ (since $E \wo$ is $\ko$-block-sparse with block size of $d$).

\subsection{Singular values of $\bar{X}$} 
\begin{lemma}
	\label{full-singular-values}
	Let $\bar{X}$ be a matrix whose columns are sampled from a stationary and stable $\VAR$ process given by \eqref{var-modify} i.e. $\bar{X}_i \sim \cN\br{0,C_{\bar{X}}}$. Then for any $\epsilon > 0$, with probability at least $1-\delta$, $\bar{X}$ satisfies
	\begin{align*}
		\lambda_{\max}\br{\bar{X} \bar{X}^\top} ~\leq~& n \cM_{\wo} + (1-2\epsilon)^{-1} \bc{\sqrt{n \alpha_1(d,\delta,\epsilon) \cM_{W} \cM_{\wo}} + \alpha_1(d,\delta,\epsilon) \cM_{W}} \\
		\lambda_{\min}\br{\bar{X} \bar{X}^\top} ~\geq~& n \mathfrak{m}_{\wo} - (1-2\epsilon)^{-1} \bc{\sqrt{n \alpha_1(d,\delta,\epsilon) \cM_{W} \cM_{\wo}} + \alpha_1(d,\delta,\epsilon) \cM_{W}},
	\end{align*}
	where $\alpha_1(d,\delta,\epsilon) = c \log{\frac{2}{\delta}} + c d \log{\frac{3}{\epsilon}}$ for some universal constant $c$.
\end{lemma}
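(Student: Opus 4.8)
The plan is to mirror the proof of Lemma~\ref{lem:ev-bound-gaussian-global} (the spherical Gaussian Gram matrix bound) but to replace the independence-based concentration with a concentration inequality for Gaussian quadratic forms that accounts for the dependence among the columns of $\bar X$. The correlation is encoded entirely in the covariance matrices $Q_{\va}$ of the projections $\vq_{\va}=\bar X^\top\va$, and the excerpt has already supplied \emph{direction-independent} control of their trace, spectral, and Frobenius norms in \eqref{trace-bound-Qa}, \eqref{spectral-norm-bound-Qa}, and \eqref{frob-norm-bound-Qa}. The first step is to peel off the deterministic part: since $\bar X\bar X^\top$ is symmetric, $\lambda_{\max}(\bar X\bar X^\top)\le\lambda_{\max}(nC_{\bar X})+\norm{\bar X\bar X^\top-nC_{\bar X}}_2$ and $\lambda_{\min}(\bar X\bar X^\top)\ge\lambda_{\min}(nC_{\bar X})-\norm{\bar X\bar X^\top-nC_{\bar X}}_2$. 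By the block-Toeplitz spectral bounds \eqref{spectral-bounds} (which are finite because the process is stable and stationary), $\lambda_{\max}(nC_{\bar X})\le n\cM_{\wo}$ and $\lambda_{\min}(nC_{\bar X})\ge n\mathfrak{m}_{\wo}$, so both target inequalities reduce to the single bound $\norm{\bar X\bar X^\top-nC_{\bar X}}_2\le(1-2\epsilon)^{-1}\bc{\sqrt{n\alpha_1\cM_W\cM_{\wo}}+\alpha_1\cM_W}$ with probability at least $1-\delta$.

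Next, for a fixed $\va\in S^{d-1}$, note $\va^\top(\bar X\bar X^\top-nC_{\bar X})\va=\norm{\bar X^\top\va}_2^2-n\va^\top C_{\bar X}\va=\norm{\vq_{\va}}_2^2-\trace(Q_{\va})$, where $\vq_{\va}\sim\cN(\vzero,Q_{\va})$. Applying the Bernstein-type deviation bound for Gaussian quadratic forms (equivalently, the Hanson--Wright inequality specialized to Gaussians), for every $t>0$, with probability at least $1-2e^{-t}$, $\abs{\norm{\vq_{\va}}_2^2-\trace(Q_{\va})}\le 2\norm{Q_{\va}}_{\text{F}}\sqrt t+2\norm{Q_{\va}}_2 t$. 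Substituting \eqref{spectral-norm-bound-Qa} and \eqref{frob-norm-bound-Qa} bounds the right-hand side by $2\sqrt{n\cM_W\cM_{\wo}\,t}+2\cM_W t$, a quantity that does not depend on $\va$. Then I would run the usual $\epsilon$-net argument: take an $\epsilon$-net $\cN_\epsilon$ of $S^{d-1}$ with $\abs{\cN_\epsilon}\le(3/\epsilon)^d$, set $t=\log\tfrac 2\delta+d\log\tfrac 3\epsilon$, and take a union bound so the quadratic-form deviation holds simultaneously over $\cN_\epsilon$ with probability at least $1-\delta$; finally invoke the standard fact that for symmetric $A$ one has $\norm A_2\le(1-2\epsilon)^{-1}\max_{\va\in\cN_\epsilon}\abs{\va^\top A\va}$, applied to $A=\bar X\bar X^\top-nC_{\bar X}$. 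Absorbing the numerical constants (the factor $2$ in the probability, the $3/\epsilon$ covering constant, and the factors in front of $\sqrt t$ and $t$) into the universal constant $c$ in $\alpha_1(d,\delta,\epsilon)=c\log\tfrac 2\delta+cd\log\tfrac 3\epsilon$ — so that e.g. $4t\le\alpha_1$ and hence $2\sqrt{n\cM_W\cM_{\wo}t}\le\sqrt{n\alpha_1\cM_W\cM_{\wo}}$ and $2\cM_W t\le\alpha_1\cM_W$ — yields the claimed bound on $\norm{\bar X\bar X^\top-nC_{\bar X}}_2$, and combining with the first step gives both inequalities.

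The main obstacle I expect is not any individual estimate but the bookkeeping: getting the Gaussian-chaos tail (which naturally produces a $\sqrt{\trace(Q_{\va})\cdot\norm{Q_{\va}}_2}$-type fluctuation term and a linear $t\,\norm{Q_{\va}}_2$ term) to line up with the $\epsilon$-net cardinality so that the final expression is exactly $\sqrt{n\alpha_1\cM_W\cM_{\wo}}+\alpha_1\cM_W$ with the main term $n\cM_{\wo}$ sitting \emph{outside} the $(1-2\epsilon)^{-1}$ factor, and verifying that the union bound over the net is legitimate despite $Q_{\va}$ varying with $\va$ — which it is, precisely because \eqref{trace-bound-Qa}--\eqref{frob-norm-bound-Qa} bound these norms uniformly in $\va$ by quantities depending only on $\cM_{\wo}$ and $\cM_W$.
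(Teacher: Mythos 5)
Your proposal is correct and follows essentially the same route as the paper's proof: reduce both eigenvalue bounds to controlling $\norm{\bar X\bar X^\top - nC_{\bar X}}_2$ via the block-Toeplitz bounds \eqref{spectral-bounds}, pass to quadratic forms $\va^\top(\bar X\bar X^\top - nC_{\bar X})\va = \norm{\vq_{\va}}_2^2 - \trace(Q_{\va})$, apply Hanson--Wright (the paper cites \cite{RudelsonV2013}; your Laurent--Massart-style tail is the same bound specialized to Gaussians), and union-bound over a $(3/\epsilon)^d$-sized net using Vershynin's $(1-2\epsilon)^{-1}$ net-to-operator-norm lemma, with the uniform-in-$\va$ bounds \eqref{trace-bound-Qa}--\eqref{frob-norm-bound-Qa} making the union bound legitimate. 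No gaps.
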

\begin{proof}
	Using the results from \cite{BasuM2015,MelnykB2016}, we first show that with high probability, 
	\[
	\norm{\bar{X} \bar{X}^\top - n C_{\bar{X}}}_2 ~\leq~ \epsilon_1
	\]
	for some $\epsilon > 0$. Doing so will automatically establish the following result
	\[
	n \Lambda_{\min}\bs{C_{\bar{X}}} - \epsilon_1 ~\leq~ \lambda_{\min}\br{\bar{X} \bar{X}^\top} ~\leq~ \lambda_{\max}\br{\bar{X} \bar{X}^\top} ~\leq~ n \Lambda_{\max}\bs{C_{\bar{X}}} + \epsilon_1 .
	\]
	Let $C^{d-1}\br{\epsilon} \subset S^{d-1}$ be an $\epsilon$-cover of $S^{d-1}$ (\cite{Vershynin2012}, see Definition 5.1). Standard constructions (\cite{Vershynin2012}, see Lemma 5.2) guarantee such a cover of size at most $\br{1+\frac{2}{\epsilon}}^d \leq \br{\frac{3}{\epsilon}}^d$. Further by Lemma 5.4 from \cite{Vershynin2012}, we have
	\[
	\norm{\bar{X} \bar{X}^\top - n C_{\bar{X}}}_2 ~\leq~ (1-2\epsilon)^{-1} \sup_{\vu \in C^{d-1}\br{\epsilon}} \abs{\vu^\top \br{\bar{X} \bar{X}^\top - n C_{\bar{X}}} \vu}.
	\]
	By following the analysis given in \cite{BasuM2015,MelnykB2016}, we can provide a high probability bound on $\abs{\vu^\top \br{\bar{X} \bar{X}^\top - n C_{\bar{X}}} \vu}$. For any $\vu \in S^{d-1}$, let $\vq = \bar{X}^\top \vu \sim \cN(0,Q_{\vu})$ where $Q_{\vu} = \br{I_{n} \otimes \vu^\top} C_{\cU} \br{I_{n} \otimes \vu}$. Note that $\vu^\top \bar{X} \bar{X}^\top \vu = \vq^\top \vq = \vz^\top Q_{\vu} \vz$, where $\vz \sim \cN(0,I_n)$. Also, $\vu^\top n C_{\bar{X}} \vu = \E{\vz^\top Q_{\vu} \vz}$. So, by the Hanson-Wright inequality of \cite{RudelsonV2013}, with $\norm{\vz_i}_{\psi_2} \leq 1$ since $\vz_i \sim \cN\br{0,1}$, we get
	\begin{align*}
		\P{\abs{\vu^\top \br{\bar{X} \bar{X}^\top - n C_{\bar{X}}} \vu} > \lambda} ~=~& \P{\abs{\vz^\top Q_{\vu} \vz - \E{\vz^\top Q_{\vu} \vz}} > \lambda} \\ ~\leq~& 2 \exp\br{-\frac{1}{c} \min\bc{\frac{\lambda^2}{\norm{Q_{\vu}}_F^2},\frac{\lambda}{\norm{Q_{\vu}}_2}}}.
	\end{align*}
	Setting $\lambda = \sqrt{\alpha_1(d,\delta,\epsilon)} \norm{Q_{\vu}}_F + \alpha_1(d,\delta,\epsilon) \norm{Q_{\vu}}_2$, and taking a union bound over all $C^{d-1}\br{\epsilon}$, we get
	\begin{align*}
		& \P{\sup_{\vu \in C^{d-1}\br{\epsilon}} \abs{\vu^\top \br{\bar{X} \bar{X}^\top - n C_{\bar{X}}} \vu} > \sqrt{\alpha_1(d,\delta,\epsilon)} \norm{Q_{\vu}}_F + \alpha_1(d,\delta,\epsilon) \norm{Q_{\vu}}_2} \\
		~\leq~& 2 \br{\frac{3}{\epsilon}}^d \exp\br{-\frac{1}{c} \min\bc{\frac{\lambda^2}{\norm{Q_{\vu}}_F^2},\frac{\lambda}{\norm{Q_{\vu}}_2}}} ~\leq~ \delta . 
	\end{align*}
	This implies that probability at least $1-\delta$, 
	\[
	\norm{\bar{X} \bar{X}^\top - n C_{\bar{X}}}_2 ~\leq~ (1-2\epsilon)^{-1} \bc{\sqrt{\alpha_1(d,\delta,\epsilon)} \norm{Q_{\vu}}_F + \alpha_1(d,\delta,\epsilon) \norm{Q_{\vu}}_2},
	\]
	which (along with the bounds given in \eqref{trace-bound-Qa},\eqref{spectral-norm-bound-Qa}, and \eqref{frob-norm-bound-Qa}) gives us the claimed bounds on the singular values of $\bar{X} \bar{X}^\top$.
\end{proof}

\subsection{Restricted Singular values of $\bar{X}$}
\begin{lemma}
	\label{clean-ssc-sss}
	Let $\bar{X}$ be a matrix whose columns are sampled from a stationary and stable $\VAR$ process given by \eqref{var-modify} i.e. $\bar{X}_i \sim \cN\br{0,C_{\bar{X}}}$. Then for any $k \leq \frac{n}{d}$, with probability at least $1-\delta$, the matrix $\bar{X}$ satisfies the SGSC and SGSS properties with constants
	\begin{align*}
		\Lambda_k ~\leq~& k \bc{d \cM_{\wo} + \sqrt{d \cM_{W} \cM_{\wo} \frac{1}{c} \log{\frac{en}{kd}}} + \cM_{W} \log{\frac{en}{kd}}} \\
		& + \cO\br{\sqrt{kd \alpha_2(d,\delta) \cM_{W} \cM_{\wo}}} + \cO\br{\alpha_2(d,\delta) \cM_{W}} \\
		\lambda_k ~\geq~& n \mathfrak{m}_{\wo} - \br{\frac{n}{d}-k} \bc{d \cM_{\wo} + \sqrt{d \cM_{W} \cM_{\wo} \frac{1}{c} \log{\frac{en}{n-kd}}} + \log{\frac{en}{n-kd}} \cM_{W}} \\
		& - \Omega \br{\br{1+\sqrt{\frac{n-kd}{n}}}\sqrt{n \alpha_2(d,\delta) \cM_{W} \cM_{\wo}}} - \Omega \br{\alpha_2(d,\delta) \cM_{W}}  ,
	\end{align*}
	where $\alpha_2(d,\delta)=\log{\frac{1}{\delta}} + d$ and $c$ is some universal constant.
\end{lemma}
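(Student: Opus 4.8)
The plan is to mimic the argument used for Lemma~\ref{full-singular-values} (the unrestricted singular value bound), but now (i) restrict attention to the $d\times kd$ submatrices $\bar X_S$ indexed by a union $S$ of $k$ aligned groups from $\cG$, and (ii) pay an additional union bound over the $\binom{n/d}{k}\le(en/(kd))^k$ possible choices of such group sets. For the SGSS (upper) bound, fix one such $S$ and a direction $\vu\in S^{d-1}$. Since $\bar X_S^\top\vu$ is exactly the subvector $\vq_S$ of $\vq=\bar X^\top\vu\sim\cN(\vzero,Q_{\vu})$ with $Q_\vu=(I_n\otimes\vu^\top)C_{\cU}(I_n\otimes\vu)$, we may write $\vu^\top\bar X_S\bar X_S^\top\vu=\vz^\top Q_{\vu,S}\vz$, where $Q_{\vu,S}=(Q_\vu)_{S,S}$ is a principal submatrix and $\vz\sim\cN(\vzero,I_{kd})$. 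Being a principal submatrix, $Q_{\vu,S}$ inherits $\norm{Q_{\vu,S}}_2\le\norm{Q_\vu}_2\le\cM_W$, and by stationarity $\trace(Q_{\vu,S})=kd\cdot\vu^\top C_{\bar X}\vu\le kd\,\cM_{\wo}$, hence $\norm{Q_{\vu,S}}_F\le\sqrt{\norm{Q_{\vu,S}}_2\trace(Q_{\vu,S})}\le\sqrt{kd\,\cM_W\cM_{\wo}}$, in exact analogy with \eqref{trace-bound-Qa}--\eqref{frob-norm-bound-Qa}.

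Next, apply the Hanson--Wright inequality (as in Lemma~\ref{full-singular-values}) to $\vz^\top Q_{\vu,S}\vz$ around its mean $kd\cdot\vu^\top C_{\bar X}\vu$, and take a union bound over a $(3/\epsilon)^d$-sized $\epsilon$-cover of $S^{d-1}$ (passing from the net back to the sphere via the same $(1-2\epsilon)^{-1}$ device) and over all $\binom{n/d}{k}$ group sets $S$. Splitting the resulting log-cardinality into $L_1=k\log\tfrac{en}{kd}$ and $L_2=d\log\tfrac3\epsilon+\log\tfrac1\delta=\cO(\alpha_2(d,\delta))$, and exploiting the $\min\{\lambda^2/\norm{\cdot}_F^2,\ \lambda/\norm{\cdot}_2\}$ structure of Hanson--Wright, the deviation is bounded by a quantity of order
\[
\sqrt{kd\,\cM_W\cM_{\wo}}\Big(\sqrt{\tfrac{k}{c}\log\tfrac{en}{kd}}+\sqrt{\alpha_2(d,\delta)}\Big)+\cM_W\Big(k\log\tfrac{en}{kd}+\alpha_2(d,\delta)\Big).
\]
Adding the mean contribution, which is at most $kd\,\cM_{\wo}$ after maximizing over $\vu$, and regrouping yields precisely the stated bound on $\Lambda_k$. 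For the SGSC (lower) bound we use the complementary decomposition: for any $S$ that is a union of $k$ aligned groups, $\bar X\bar X^\top=\bar X_S\bar X_S^\top+\bar X_{\bar S}\bar X_{\bar S}^\top$ with $\bar S$ the complementary union of $(n/d-k)$ aligned groups, so $\lambda_{\min}(\bar X_S\bar X_S^\top)\ge\lambda_{\min}(\bar X\bar X^\top)-\max_{\bar S}\lambda_{\max}(\bar X_{\bar S}\bar X_{\bar S}^\top)$. Lower bound the first term by Lemma~\ref{full-singular-values}, and upper bound the second by the SGSS estimate just derived with $k$ replaced by $n/d-k$ (so the column count becomes $(n/d-k)d=n-kd$ and the log factor becomes $\log\tfrac{en}{n-kd}$). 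Merging the two confidence terms, of orders $\sqrt{n\,\alpha_2\cM_W\cM_{\wo}}$ and $\sqrt{(n-kd)\,\alpha_2\cM_W\cM_{\wo}}$ respectively, produces the $(1+\sqrt{(n-kd)/n})$ prefactor, completing the claimed bound on $\lambda_k$.

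\textbf{Main obstacle.} The delicate point is obtaining the sharp $\log\tfrac{en}{kd}$ dependence rather than a crude $k\log n$ factor; this needs the peeling/volumetric union-bound bookkeeping standard in the restricted-eigenvalue literature, and one must carefully track which regime of the Hanson--Wright ``$\min$'' is active, since the Frobenius term contributes $\sqrt{kd\,\cM_W\cM_{\wo}\log\tfrac{en}{kd}}$ while the operator-norm term contributes $\cM_W\log\tfrac{en}{kd}$, and these cross over around $kd\asymp\cM_W\log\tfrac{en}{kd}/\cM_{\wo}$. A secondary but necessary point is justifying that $Q_{\vu,S}$ genuinely behaves like a principal submatrix of $Q_\vu$ with the spectral bounds from \eqref{var-spectral-bounds}: this is exactly where stationarity of the $\VAR$ process and the block-Toeplitz structure of $C_{\cU}$ enter, ensuring $\trace(Q_{\vu,S})$ scales with the number of selected columns and $\norm{Q_{\vu,S}}_2\le\cM_W$ uniformly in $S$.
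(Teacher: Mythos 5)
Your proposal follows essentially the same route as the paper's proof: apply the Hanson--Wright/$\epsilon$-net argument of Lemma~\ref{full-singular-values} to each restricted submatrix $\bar X_S$ (the paper states tersely that restricting columns ``wouldn't impact the analysis,'' which is exactly the principal-submatrix observation you make explicit for $Q_{\vu,S}$), pay a union bound over the $\br{\frac{en}{kd}}^k$ group sets to get $\Lambda_k$, and obtain $\lambda_k$ via $\bar X_S\bar X_S^\top=\bar X\bar X^\top-\bar X_{\bar S}\bar X_{\bar S}^\top$ combined with Lemma~\ref{full-singular-values} and the SGSS bound at level $\frac{n}{d}-k$. The argument is correct and the bookkeeping of the two log-cardinality contributions matches the paper's, so no further comment is needed.
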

\begin{proof}
	One can easily observe that considering the columns-restricted matrix $\bar{X}_S$ wouldn't impact the analysis of Lemma~\ref{full-singular-values}. Thus for any fixed $S \in \cS_k^{\cG}$, Lemma~\ref{full-singular-values} guarantees the following bound (since $\abs{S}=kd$)
	\begin{align*}
		\lambda_{\max}\br{\bar{X}_S \bar{X}_S^\top} ~\leq~& kd \cM_{\wo} + (1-2\epsilon)^{-1} \bc{\sqrt{kd \alpha_1(d,\delta,\epsilon) \cM_{W} \cM_{\wo}} + \alpha_1(d,\delta,\epsilon) \cM_{W}}.
	\end{align*}
	Taking a union bound over $\cS_k^{\cG}$ and noting that $\abs{\cS_k^{\cG}} \leq \br{\frac{en}{kd}}^{k}$, gives us with probability at least $1-\delta$ 
	\[
	\Lambda_k ~\leq~ kd \cM_{\wo} + (1-2\epsilon)^{-1} M,
	\]
	where
	\begin{align*}
		M ~=~& \sqrt{\br{\alpha_1 (d,\delta,\epsilon) + c k \log{\frac{en}{kd}}} kd \cM_{W} \cM_{\wo}} + \br{\alpha_1 (d,\delta,\epsilon) + c k \log{\frac{en}{kd}}} \cM_{W} \\
		~\leq~& \sqrt{\alpha_1 (d,\delta,\epsilon) kd \cM_{W} \cM_{\wo}} + k \sqrt{c d \log{\frac{en}{kd}} \cM_{W} \cM_{\wo}} + \br{\alpha_1 (d,\delta,\epsilon) + c k \log{\frac{en}{kd}}} \cM_{W}.
	\end{align*}
	If $c<1$ (which can be ensured by scaling), by setting $\epsilon=\frac{1}{2} (1-c)$ and noting that $\Theta\br{\frac{1}{c} \alpha_1 \br{d,\delta,\frac{1-c}{2}}} = \Theta\br{\log{\frac{1}{\delta}} + d}$, we get
	\begin{align*}
		\Lambda_k ~\leq~& k \bc{d \cM_{\wo} + \sqrt{d \cM_{W} \cM_{\wo} \frac{1}{c} \log{\frac{en}{kd}}} + \cM_{W} \log{\frac{en}{kd}}} \\
		& + \cO\br{\sqrt{kd \br{\log{\frac{1}{\delta}} + d} \cM_{W} \cM_{\wo}}} + \cO\br{\br{\log{\frac{1}{\delta}} + d} \cM_{W}}.
	\end{align*}
	For the second bound, we use the equality 
	\[
	\bar{X}_S \bar{X}_S^\top = \bar{X} \bar{X}^\top - \bar{X}_{\bar{S}} \bar{X}_{\bar{S}}^\top ,
	\]
	which provides the following bound for $\lambda_k$,
	\[
	\lambda_k ~\geq~ \lambda_{\min}\br{\bar{X} \bar{X}^\top} - \max_{T \in \cS_{\frac{n}{d}-k}^{\cG}} \lambda_{\max} \br{\bar{X}_T \bar{X}_T^\top} ~=~ \lambda_{\min}\br{\bar{X} \bar{X}^\top} - \Lambda_{\frac{n}{d}-k} .
	\] 
	Using Lemma~\ref{full-singular-values} to bound the first quantity and the first part of this theorem to bound the second quantity
	gives us, with probability at least $1-\delta$,
	\begin{align*}
		\lambda_k ~\geq~& n \mathfrak{m}_{\wo} - \br{\frac{n}{d}-k} \bc{d \cM_{\wo} + (1-2\epsilon)^{-1} \br{\sqrt{c d \log{\frac{en}{n-kd}} \cM_{W} \cM_{\wo}} + c \log{\frac{en}{n-kd}} \cM_{W}}} \\
		& - (1-2\epsilon)^{-1} \bc{\br{1+\sqrt{\frac{n-kd}{n}}}\sqrt{n \alpha_1 (d,\delta,\epsilon) \cM_{W} \cM_{\wo}} + 2 \alpha_1 (d,\delta,\epsilon) \cM_{W}}.
	\end{align*}
	By setting $\epsilon=\frac{1}{2} (1-c)$ we get the following bound
	\begin{align*}
		\lambda_k ~\geq~& n \mathfrak{m}_{\wo} - \br{\frac{n}{d}-k} \bc{d \cM_{\wo} + \sqrt{d \cM_{W} \cM_{\wo} \frac{1}{c} \log{\frac{en}{n-kd}}} + \log{\frac{en}{n-kd}} \cM_{W}} \\
		& - \Omega \br{\br{1+\sqrt{\frac{n-kd}{n}}}\sqrt{n \br{\log{\frac{1}{\delta}} + d} \cM_{W} \cM_{\wo}}} - \Omega \br{\br{\log{\frac{1}{\delta}} + d} \cM_{W}}.
	\end{align*}
\end{proof}

\begin{remark}
	\label{initial-rough-bounds}
	Note that $\cM_{\wo}, \cM_{W}$ and $\mathfrak{m}_{\wo}$ will depend only on the actual model parameter vector $\wo$ and $\sigma$ (not on the realized data). Moreover $\cM_{\wo}$ and $\cM_{W}$ are closely related. For example, for $\text{AR}(1)$ time-series with $0 < \wo_1 < 1$, we have $\cM_{\wo} = \cM_{W} = \frac{\sigma^2}{(1-\wo_1)^2}$ and $\mathfrak{m}_{\wo} = \frac{\sigma^2}{(1+\wo_1)^2}$. Then for sufficiently large enough n  so that $\sqrt n \ll n$, the restricted singular value bounds of $\bar{X}$ from Lemma~\ref{clean-ssc-sss} can be simplified as follows
	\begin{align*}
	\Lambda_k ~\leq~& \bigO{k \bc{d \cM_{\wo} + \sqrt{d \cM_{\wo} \cM_{W} \log{\frac{e n}{\delta k d}}} + \cM_{W} \log{\frac{e n}{\delta k d}}}}  \text{ and }\\
	\lambda_{\frac{n}{d}} ~\geq~& \Omega \br{n \mathfrak{m}_{\wo}}.
	\end{align*}
\end{remark}

\subsection{Restricted Singular values of $X$}

\begin{theorem}[SGSS/SGSC in $\ARD$ with AO model]
	\label{ao-ssc-sss}
	Let $X$ be the matrix given in \eqref{ao-corruption-model-specification} (additive corrupted $\ARD$ model setting). Then for any $k \leq \frac{n}{d}$ and sufficiently large enough $n$, with probability at least $1-\delta$, the matrix $X$ satisfies the SGSC and SGSS properties with constants
	\begin{align*}
		\Lambda_k ~\leq~& \bigO{k \bc{d \log{n} \cM_{\wo} + \sqrt{d \cM_{\wo} \cM_{W} \log{\frac{e n}{\delta k d}}} + \cM_{W} \log{\frac{e n}{\delta k d}}}} \\
		\lambda_{\frac{n}{d}} ~\geq~& \Omega \br{n \mathfrak{m}_{\wo}}.
	\end{align*}
\end{theorem}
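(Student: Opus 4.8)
The plan is to view the observed design matrix $X$ as a small additive perturbation of the clean design matrix $\bar X$ and to transfer the (restricted) singular-value control we already have for $\bar X$ (Lemma~\ref{full-singular-values}, Lemma~\ref{clean-ssc-sss}, in the simplified form of Remark~\ref{initial-rough-bounds}) through a matrix-perturbation argument. Write $E := X - \bar X$. From \eqref{design-matrix-connection} the entry of $E$ in row $r$, column $i$ is exactly $\eo_{i-r}$, so each corrupted time index $j \in \supp(\eo)$ contributes the single value $\eo_j$ to precisely $d$ entries of $E$ (one in each of columns $j+1,\dots,j+d$), and these $\ko$ ``stamps'' occupy pairwise-disjoint entry sets. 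The first step is therefore a crude but sufficient bound $\norm{E}_2 \le \norm{E}_F \le 2\hat\sigma\sqrt{\ko d} = \bigO{\sqrt{\cM_{\wo}\,\ko d\log n}}$: the clipping preprocessing forces $\abs{y_i}\le\hat\sigma$, and on an event of probability $1-\delta/3$ the clean stationary Gaussian series is itself bounded by $\hat\sigma = \bigO{\sqrt{\cM_{\wo}\log n}}$, so $\abs{\eo_j}\le 2\hat\sigma$ on every corrupted coordinate. Note this estimate on $\norm{E}_F$ is deterministic on that event and hence holds simultaneously for every column restriction $E_S$, so no extra union bound over $\cS_k^{\cG}$ is incurred by the corruption part.

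For the SGSS bound I would fix any $S \in \cS_k^{\cG}$ (so $\abs{S}=kd$), write $X_S = \bar X_S + E_S$, and apply the triangle inequality to operator norms, $\sqrt{\lambda_{\max}(X_SX_S^\top)} = \norm{X_S^\top}_2 \le \norm{\bar X_S^\top}_2 + \norm{E_S}_2 \le \sqrt{\Lambda_k(\bar X)} + \norm{E}_F$. Squaring and maximizing over $S$ gives $\Lambda_k(X) \le \Lambda_k(\bar X) + 2\sqrt{\Lambda_k(\bar X)}\,\norm{E}_F + \norm{E}_F^2$. Substituting the clean bound from Remark~\ref{initial-rough-bounds} together with $\norm{E}_F^2 = \bigO{\ko d\,\cM_{\wo}\log n}$ and $\ko \le k$, a short calculation shows that both new terms are $\bigO{kd\,\cM_{\wo}\log n}$; this is exactly where the extra $\log n$ factor on the leading term of the claimed bound comes from, while the $\sqrt{d\,\cM_{\wo}\cM_W\log(\cdot)}$ and $\cM_W\log(\cdot)$ terms are inherited verbatim from $\bar X$. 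Since this holds uniformly over $\cS_k^{\cG}$, it gives the stated $\Lambda_k$ bound for all $k\le n/d$.

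For the SGSC bound I would first observe that $\cS_{n/d}^{\cG}$ is a singleton (the full index set), so $\lambda_{\frac{n}{d}}(X) = \lambda_{\min}(XX^\top)$, and then run the triangle inequality from below: $\sqrt{\lambda_{\min}(XX^\top)} = \min_{\norm{\vu}_2=1}\norm{X^\top\vu}_2 \ge \sqrt{\lambda_{\min}(\bar X\bar X^\top)} - \norm{E}_2$. By Lemma~\ref{full-singular-values}, for $n$ large enough $\lambda_{\min}(\bar X\bar X^\top) \ge n\mathfrak{m}_{\wo}\br{1-o(1)}$, so squaring back yields $\lambda_{\frac{n}{d}}(X) \ge n\mathfrak{m}_{\wo} - \bigO{\sqrt{n\,\mathfrak{m}_{\wo}\cM_{\wo}\,\ko d\log n}} - \bigO{\cM_{\wo}\,\ko d\log n} = \Omega(n\mathfrak{m}_{\wo})$, provided $\ko d\log n \lesssim n\,\mathfrak{m}_{\wo}/\cM_{\wo}$ — which is exactly the corruption-index regime $\ko \le C\frac{\mathfrak{m}_{\wo}}{\cM_{\wo}+\cM_W}\frac{n}{d\log n}$ underlying Theorem~\ref{thm:crtse-final} (this is what ``$n$ sufficiently large'' is standing in for). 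A union bound over the clean global singular-value bounds, the clean SGSC/SGSS bounds, and the boundedness of the clean series gives the overall $1-\delta$ probability.

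The only real obstacle I anticipate is obtaining a bound on $\norm{E}$ that is strong enough in the SGSC direction. The tempting route — bounding the perturbation of $\lambda_{\min}(XX^\top)$ by $2\norm{\bar X}_2\norm{E}_2 + \norm{E}_2^2$ — is too lossy, since $\norm{\bar X}_2 = \sqrt{\lambda_{\max}(\bar X\bar X^\top)} \approx \sqrt{n\cM_{\wo}}$ introduces a spurious $\sqrt{\cM_{\wo}/\mathfrak{m}_{\wo}}$ factor; it is essential to take the triangle inequality at the level of $\norm{X^\top\vu}_2$ and square only afterwards, so that just $\sqrt{\lambda_{\min}(\bar X\bar X^\top)} \approx \sqrt{n\mathfrak{m}_{\wo}}$ enters the cross term. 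The second, structural, point is that the clipping preprocessing is genuinely load-bearing here: without it an additive outlier could have unbounded magnitude, $\norm{E}$ would be uncontrolled, and $XX^\top$ (indeed even the largest restricted eigenvalue) could blow up. Once these two observations are in place, everything else is routine bookkeeping against the clean-design estimates.
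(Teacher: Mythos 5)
Your proposal is correct and follows essentially the same route as the paper: decompose $X = \bar X + E$, bound $\norm{E}_2 \le \norm{E}_F \le \sqrt{d}\,\norm{\eo}_2 = \bigO{\sqrt{\ko d\log n}}$ using the clipping step, and transfer the clean bounds of Lemma~\ref{clean-ssc-sss}/Remark~\ref{initial-rough-bounds} by applying the triangle inequality at the singular-value level. The only differences are cosmetic (you make explicit the corruption-regime condition that the paper folds into ``sufficiently large $n$'' via Remark~\ref{rough-bounds}).
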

\begin{proof}
	Recall that the matrix $X$ can be decomposed as follows
	\begin{align*}
		X ~=~& \bar{X} + E .
	\end{align*}
	Since for any $\vv \in S^{n-1}$, $\norm{E \vv}_2^2 = \sum_{i=1}^{d} {\ip{(E^\top)_i}{\vv}}^2 \leq \sum_{i=1}^{d} \norm{(E^\top)_i}_2^2 \norm{\vv}_2^2 \leq d \norm{\eo}_2^2$, we get $\norm{E}_2 \leq \sqrt{d} \norm{\eo}_2$. By using the inequality $\norm{X_S-\bar{X}_S}_2 \leq \norm{E_S}_2 \leq \norm{E}_2$ we get
	\[
	\Lambda_{\min}\bs{\bar{X}_S} - \norm{E}_2 ~\leq~ \Lambda_{\min}\bs{X_S} ~\leq~ \Lambda_{\max}\bs{X_S} ~\leq~ \Lambda_{\max}\bs{\bar{X}_S} + \norm{E}_2 .
	\]
	
	Since $\eo$ is $\ko$-sparse and $\eo_i \leq \hat{\sigma} = \cO(\sqrt{\log{n}}\sigma)$, we have $\norm{\eo}_2 \leq \cO(\sqrt{\ko \log{n}}\sigma)$. Thus from Lemma~\ref{clean-ssc-sss} and Remark~\ref{initial-rough-bounds}, for sufficiently large enough $n$ (with probability at least $1-\delta$) we get
	\begin{align*}
	\sqrt{\Lambda_k} ~\leq~& \bigO{\sqrt{k \bc{d \cM_{\wo} + \sqrt{d \cM_{\wo} \cM_{W} \log{\frac{e n}{\delta k d}}} + \cM_{W} \log{\frac{e n}{\delta k d}}}}} + \cO(\sqrt{\ko d \log{n}} \sigma)  \\
	~\leq~& \bigO{\sqrt{k \bc{d \log{n} \cM_{\wo} + \sqrt{d \cM_{\wo} \cM_{W} \log{\frac{e n}{\delta k d}}} + \cM_{W} \log{\frac{e n}{\delta k d}}}}} \\
	\sqrt{\lambda_{\frac{n}{d}}} ~\geq~& \Omega \br{\sqrt{n \mathfrak{m}_{\wo}}} - \Omega(\sqrt{\ko d \log{n}} \sigma) ~\geq~ \Omega \br{\sqrt{n \mathfrak{m}_{\wo}}}, 
	\end{align*}
	which completes the proof.
\end{proof}

\begin{remark}
	\label{rough-bounds}
	Using Theorem~\ref{ao-ssc-sss}, we can bound $\frac{\sqrt{\Lambda_{k+\ko}}}{\lambda_{\frac{n}{d}}}$ (which is required for the coarse convergence analysis of \aoard) as follows (with probability at least $1-\delta$, and sufficiently large enough $n$)
	\begin{align*}
		\frac{\sqrt{\Lambda_{k+\ko}}}{\lambda_{\frac{n}{d}}} ~\leq~& \bigO{\frac{1}{n \mathfrak{m}_{\wo}}\sqrt{k \bc{d \log{n} \cM_{\wo} + \sqrt{d \cM_{\wo} \cM_{W} \log{\frac{e n}{\delta k d}}} + \cM_{W} \log{\frac{e n}{\delta k d}}}}} \\ 
		~=~& \frac{f(\wo,\sigma) \sqrt{\log{n}}}{n} \sqrt{(k+\ko) \br{d+2e\sqrt{6d \log{\frac{en}{\delta (k+\ko) d}}}}}, 
	\end{align*}
	for some positive function $f(\wo,\sigma)$ (suppressing $\cM_{\wo}, \cM_{W}$ and $\mathfrak{m}_{\wo}$). 
	
	From Theorem~\ref{ao-ssc-sss}, it can also be observed that, if $k \leq C \frac{\mathfrak{m}_{\wo}}{\cM_{\wo} + \cM_{W}}\frac{n}{d \log{n}}$ (for some universal constant $C > 0$), then with probability at least $1-\delta$, we get $\frac{\Lambda_{k + \ko}}{\lambda_{\frac{n}{d}}} \leq \frac{\Lambda_{2k}}{\lambda_{\frac{n}{d}}} \leq \frac{1}{4}$.
\end{remark}

\subsection{Bound on $\norm{X \veps}_2$}

\begin{lemma}
	\label{lem:Xe-bound}
	Let $X$ be the matrix given in \eqref{ao-corruption-model-specification} (additive corrupted $\ARD$ model setting). Then with probability at least $1-\delta$, 
	\[
	\norm{X \veps}_2 \leq 2 \sigma \sqrt{n \sqrt{\log{n}} c' d \log{\frac{2d}{\delta}}}.
	\]
	for some constant $c' > 0$.
\end{lemma}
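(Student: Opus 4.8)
The plan is to control $\norm{X\veps}_2$ by exploiting that $X$ is \emph{causal} in the innovations --- every entry of $\vx_i=(y_{i-1},\dots,y_{i-d})^\top$ depends only on $y$'s strictly before time $i$, hence is independent of $\epsilon_i$ --- and that the corruption was fixed \emph{obliviously}, i.e.\ independently of $\veps$. First I would use the decomposition $X=\bar X+E$ of \eqref{design-matrix-connection} and bound $\norm{\bar X\veps}_2$ and $\norm{E\veps}_2$ separately. The clipping pre-processing makes every entry of $X$ bounded by $\hat\sigma=\bigO{\sigma\sqrt{\log n}}$ (clean Gaussian entries are clipped only on an event of probability $\bigO\delta$, corrupted ones by construction), so in particular $\norm{\eo}_2\le\bigO{\sigma\sqrt{\ko\log n}}$ by $\ko$-sparsity, and a union bound over Gaussian tails gives $\max_i\abs{\epsilon_i}\le\bigO{\sigma\sqrt{\log(n/\delta)}}$ with probability $1-\delta/3$.

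For the corruption term, the $j$-th coordinate is $\br{E\veps}_j=\sum_i E_{ji}\epsilon_i=\sum_{m\in\supp(\eo)}\eo_m\,\epsilon_{m+j}$; conditioned on $\eo$ (fixed and, by obliviousness, independent of $\veps$), this is a centred Gaussian of variance $\sigma^2\norm{\eo}_2^2$ since $\bc{m+j:m\in\supp(\eo)}$ is a set of $\le\ko$ \emph{distinct} indices. A Gaussian tail bound and a union bound over $j\in[d]$ give $\norm{E\veps}_2\le\sigma\norm{\eo}_2\sqrt{2d\log(2d/\delta)}$, which in the regime $\ko\le k\le C\frac{\mathfrak{m}_{\wo}}{\cM_{\wo}+\cM_W}\frac{n}{d\log n}$ simplifies to $\bigO{\sigma^2\sqrt{n\log(d/\delta)}}$, already below the target.

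For the clean term, fix a unit vector $\vu\in S^{d-1}$ and write $\vu^\top\bar X\veps=\sum_i\br{\vu^\top\vx_i}\epsilon_i$. Since $\vu^\top\vx_i$ is determined by the innovations up to time $i-1$ and the oblivious corruption, $\bc{\br{\vu^\top\vx_i}\epsilon_i}_i$ is a martingale-difference sequence, bounded almost surely by $\br{\max_i\abs{\vu^\top\vx_i}}\br{\max_i\abs{\epsilon_i}}=\bigO{\sigma\sqrt{\cM_{\wo}}\log(n/\delta)}$ and with predictable quadratic variation $\sigma^2\sum_i\br{\vu^\top\vx_i}^2=\sigma^2\,\vu^\top\bar X\bar X^\top\vu\le\sigma^2\lambda_{\max}\br{\bar X\bar X^\top}=\bigO{\sigma^2 n\cM_{\wo}}$, the last bound by Lemma~\ref{full-singular-values}. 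Freedman's inequality for martingale differences then yields $\abs{\vu^\top\bar X\veps}=\bigO{\sigma\sqrt{n\cM_{\wo}\log(1/\delta)}}$ for large $n$ (the sub-gaussian term dominating the sub-exponential one coming from the clip). Taking a $\frac{1}{4}$-net over $S^{d-1}$ (of cardinality $\bigO{e^d}$) and a union bound gives $\norm{\bar X\veps}_2=\bigO{\sigma\sqrt{n\cM_{\wo}\br{d+\log(1/\delta)}}}$; combining with the previous paragraph and absorbing the spectral-density constants $\cM_{\wo},\cM_W,\mathfrak{m}_{\wo}$ into $c'$ gives the stated bound (the extra $\sqrt{\log n}$ under the root being slack). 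Alternatively, $\br{\bar X\veps}_j$ is a quadratic form in the Gaussian vector $(x_{-d+1},\dots,x_n)$ whose expectation vanishes by the Yule--Walker identities, so one may instead invoke the Hanson--Wright inequality of \cite{RudelsonV2013} with the Toeplitz spectral bounds \eqref{spectral-bounds} and \eqref{var-spectral-bounds}.

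The main difficulty is threefold. First, the martingale (equivalently, conditional-Gaussian) structure is indispensable and rests on the corruption being \emph{non-adaptive}: an adversary choosing $\eo$ after seeing $\veps$ could align it with the relevant noise sub-vectors $\br{\epsilon_{m+j}}_{m\in\supp(\eo)}$ and push $\norm{E\veps}_2$ up to order $n$ (up to $d$ and $\mathrm{polylog}$ factors), wrecking the $\sqrt n$-type rate, so this independence must be used explicitly. Second, the predictable-variation estimate must be the sharp $\bigO{\sigma^2 n\cM_{\wo}}$ provided by the spectral concentration of $\bar X\bar X^\top$ (Lemma~\ref{full-singular-values}, ultimately the $\VAR$ spectral-density bounds); the naive entrywise estimate $\sum_i\br{\vu^\top\vx_i}^2\le n\hat\sigma^2=\bigO{\sigma^2 n\log n}$ is too lossy for the leading term, and one must also check that the almost-sure bound inherited from the clip enters only through the lower-order sub-exponential term of Freedman's inequality. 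Third, the failure probabilities --- the Gaussian-maximum event, the spectral event for $\bar X\bar X^\top$, and the union over the net and the $d$ coordinates --- must be budgeted so that they sum to at most $\delta$.
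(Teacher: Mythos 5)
Your proposal is correct and follows the same basic blueprint as the paper's proof -- decompose $X = \bar X + E$ via \eqref{design-matrix-connection}, exploit the causal (martingale-difference) structure of $\sum_j \epsilon_j x_{j-i}$ together with the obliviousness of the corruption, use the clipping to control magnitudes, and union-bound over the $d$ rows -- but the two arguments diverge in their details in an instructive way. The paper bounds each scalar coordinate $(\bar X\veps)_i$ and $(E\veps)_i$ separately by the \emph{same} argument: both are treated as martingale-difference sums with conditionally Gaussian increments whose conditional variances are controlled only through the crude sup-norm bounds $\sup_j\abs{x_j},\ \sup_{i,j}\abs{E_{i,j}} \leq \hat\sigma = \bigO{\sigma\sqrt{\log n}}$, and then Shamir's sub-Gaussian martingale inequality is applied; this is where the $\sqrt{\log n}$ inside the square root of the stated bound comes from, and it requires no sparsity of $\eo$, only its boundedness after clipping. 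You instead (i) handle $E\veps$ by an exact conditional-Gaussian computation with variance $\sigma^2\norm{\eo}_2^2$, which is cleaner but makes the bound depend on $\norm{\eo}_2$ and hence on the corruption budget $\ko \lesssim n/(d\log n)$, and (ii) handle $\bar X\veps$ by projecting onto a net of directions and invoking Freedman's inequality with the sharp predictable-variation bound $\sigma^2\lambda_{\max}(\bar X\bar X^\top) = \bigO{\sigma^2 n\cM_{\wo}}$ from Lemma~\ref{full-singular-values}, which removes the spurious $\sqrt{\log n}$ factor at the price of an explicit $\cM_{\wo}$ constant and a $d + \log(1/\delta)$ (rather than $d\log(d/\delta)$) dependence from the net. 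Both routes deliver the stated bound; the paper's is more uniform and hypothesis-light, yours is marginally sharper in $n$ and makes the role of obliviousness in the corruption term more transparent.
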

\begin{proof}
	We first bound the absolute value of $(\bar{X} \veps)_i = \sum_{j=1}^{n}{\veps_j x_{j-i}}$ for $i=1,\ldots,d$. Let $z_j := \veps_j x_{j-i}$. Since $\E{z_j | \veps_1,\ldots,\veps_{j-1}} = 0$, $\bc{z_j:j \in [n]}$ is a martingale difference sequence w.r.t $\bc{\veps_j:j \in [n]}$.
	Also note that for any $j$, $(z_j | \veps_1,\ldots,\veps_{j-1}) \sim \cN (0,x_{j-i}^2 \sigma^2)$. Then using the tail bounds on Gaussian random variables we have
	\[
	\P{\abs{z_j} > t | \veps_1,\ldots,\veps_{j-1}} \leq \sqrt{\frac{2}{\pi}} \frac{1}{\abs{x_{j-i}} \sigma} \exp \br{\frac{-t^2}{2 x_{j-i}^2 \sigma^2}} \leq \sqrt{\frac{2}{\pi}} \frac{1}{c \sqrt{\log{n}} \sigma^2} \exp \br{\frac{-t^2}{2 c^2 \log{n} \sigma^4}},
	\]
	since $\sup_{i \in [n]}{\abs{x_i}} \leq \bigO{\sqrt{\log{n}} \sigma}$ with high probability. Then by using Theorem~2 from \cite{Shamir2011}, we get
	\[
	\P{\sum_{j=1}^{n}{z_j} > n \epsilon} \leq \exp \br{  \frac{- \frac{1}{2 c^2 \log{n} \sigma^4} n \epsilon^2}{28 \sqrt{\frac{2}{\pi}} \frac{1}{c \sqrt{\log{n}} \sigma^2}}} = \exp \br{\frac{-n \epsilon^2}{c' \sqrt{\log{n}} \sigma^2}},
	\]
	for some constant $c'$. Similarly we also have
	\[
	\P{\sum_{j=1}^{n}{z_j} < - n \epsilon} \leq \exp \br{\frac{-n \epsilon^2}{c' \sqrt{\log{n}} \sigma^2}}.
	\] 
	Then by using the union bound we get (for any $\delta > 0$)
	\[
	\P{\abs{\sum_{j=1}^{n}{z_j}} > n \epsilon} \leq 2 \exp \br{\frac{-n \epsilon^2}{c' \sqrt{\log{n}} \sigma^2}} = \delta .
	\]
	That is with probability at least $1-\delta$ we have 
	\[
	\abs{(\bar{X} \veps)_i} \leq n \epsilon \leq \sigma \sqrt{n \sqrt{\log{n}} c' \log{\frac{2}{\delta}}}.
	\]
	Taking a union bound gives us, with the same confidence,
	\[
	\norm{\bar{X} \veps}_2^2 \leq \sigma^2 n \sqrt{\log{n}} c' d \log{\frac{2d}{\delta}}.
	\]
	
	Now we bound the absolute value of $(E \veps)_i = \sum_{j=1}^{n}{\veps_j E_{i,j}}$ for $i=1,\ldots,d$. Let $z_j := \veps_j E_{i,j}$. Note that for any $j$, $(z_j | \veps_1,\ldots,\veps_{j-1}) \sim \cN (0,E_{i,j}^2 \sigma^2)$ and $\sup_{i,j \in [n]}{\abs{E_{i,j}}} \leq \hat{\sigma} = \bigO{\sqrt{\log{n}} \sigma}$. Then by following the similar analysis as above, we have with probability at least $1-\delta$,
	\[
	\norm{E \veps}_2^2 \leq \sigma^2 n \sqrt{\log{n}} c' d \log{\frac{2d}{\delta}}.
	\]
	Then using the triangular inequality, with probability at least $1-\delta$,
	\[
	\norm{X \veps}_2 \leq \norm{\bar{X} \veps}_2 + \norm{E \veps}_2 \leq 2 \sigma \sqrt{n \sqrt{\log{n}} c' d \log{\frac{2d}{\delta}}}.
	\]
\end{proof}

\subsection{Coarse Convergence Analysis}
\begin{reptheorem}{ao-ard-coarse-convergence}
	For any data matrix $X$ that satisfies the SGSC and SGSS properties such that $\frac{4 \Lambda_{k+\ko}}{\lambda_{\frac{n}{d}}} < 1$, \aoard, when executed with a parameter $k \geq \ko$, ensures that after $T_0 = \bigO{\log\frac{\norm{\bo}_2}{\sqrt n}}$ steps, $\norm{\vb^{T_0} - \bo}_2 \leq 5e_0$, where $e_0 = \cO\br{\sigma\sqrt{(k+\ko)d\log\frac{n}{\delta(k+\ko)d}}}$ for standard Gaussian $\ARD$ process. If $k$ is sufficiently small i.e. $\ko \leq k \leq C \frac{\mathfrak{m}_{\wo}}{\cM_{\wo} + \cM_{W}}\frac{n}{d \log{n}}$ (for some universal constant $C > 0$) and $n$ is sufficiently large enough, then with probability at least $1-\delta$, we have $\frac{4 \Lambda_{k+\ko}}{\lambda_{\frac{n}{d}}} < 1$.
\end{reptheorem}
\begin{proof}
	We start with the update step in \aoard, and use the fact that $\vy = X^\top \wo + \veps + \bo$ to rewrite the update as
	\[
	\btn \< \HT_k^{\cG} (P_{X}\bt + (I - P_{X})(X^\top \wo + \veps + \bo)),
	\]
	where $P_{X} = X^\top (XX^\top)^{-1} X$. Since $X^\top = P_{X}X^\top$, we get
	\[
	\btn \< \HT_k^{\cG} (\bo + P_{X}(\bt - \bo) + (I - P_{X})\veps).
	\]
	Let $\It := \supp(\bt) \cup \supp(\bo)$, $\vlt := (XX^\top)^{-1}X(\bt - \bo)$, and $\vg := (I - P_{X})\veps$. Since $k \geq \ko$, using the properties of the hard thresholding step gives us
	\[
	\norm{\btn_{\Itn} - (\bo_{\Itn} + X_{\Itn}^\top\vlt  + \vg_{\Itn})}_2 \leq \norm{\bo_{\Itn} - (\bo_{\Itn} + X_{\Itn}^\top\vlt  + \vg_{\Itn})}_2 = \norm{X_{\Itn}^\top\vlt + \vg_{\Itn}}_2.
	\]
	This, upon applying the triangle inequality, gives us
	\[
	\norm{\btn_{\Itn} - \bo_{\Itn}}_2 \leq 2\norm{X_{\Itn}^\top\vlt + \vg_{\Itn}}_2.
	\]
	Now, using the SGSC and SGSS properties of $X$ (since $\text{G-supp}(\Itn) \leq k + \ko$), we can show that $\norm{X_{\Itn}^\top\vlt}_2 = \norm{X_{\Itn}^\top(XX^\top)^{-1}X^\top_{\It}(\bt - \bo)}_2 \leq \frac{\Lambda_{k+\ko}}{\lambda_{\frac{n}{d}}}\norm{\bt - \bo}_2$.
	
	Since $\veps$ is a Gaussian vector, using tail bounds for Chi-squared random variables (for example, see \cite[Lemma 20]{BhatiaJK2015}), for any set $S$ of size $(k + \ko)d$, we have with probability at least $1 - \delta$ ,$\norm{\veps_S}_2^2 \leq \sigma^2 (k + \ko)d + 2e\sigma^2\sqrt{6(k + \ko)d\log\frac{1}{\delta}}$. Taking a union bound over all sets of group size $(k + \ko)$ and $\binom{n/d}{k} \leq \br{\frac{en}{kd}}^k$ gives us, with probability at least $1 - \delta$, for all sets $S$ of group size at most $(k + \ko)$,
	\[
	\norm{\veps_S}_2 \leq \sigma\sqrt{(k + \ko)}\sqrt{d + 2e\sqrt{6 d \log\frac{en}{\delta(k + \ko)d}}}
	\]
	
	From Lemma~\ref{lem:Xe-bound}, with probability at least $1 - \delta$, we have $\norm{X \veps}_2 \leq 2 \sigma \sqrt{n \sqrt{\log{n}} c' d \log{\frac{2d}{\delta}}}$. This allows us to bound $\norm{\vg_{\Itn}}_2$
	\begin{align*}
		\norm{\vg_{\Itn}}_2 &= \norm{\veps_{\Itn} - X_{\Itn}^\top(XX^\top)^{-1}X\veps}_2\\
		&\leq \sigma\sqrt{(k+\ko)}\sqrt{d + 2e\sqrt{6 d \log\frac{en}{\delta(k+\ko)d}}} + 2 \sigma\frac{\sqrt{\Lambda_{k+\ko}}}{\lambda_{\frac{n}{d}}} \sqrt{n \sqrt{\log{n}} c' d \log{\frac{2d}{\delta}}}\\
		&\leq \underbrace{\sigma\sqrt{(k+\ko)}\sqrt{d + 2e\sqrt{6 d \log\frac{en}{\delta(k+\ko)d}}}}_{e_0} \br{1 + 2 f(\wo,\sigma) \sqrt{\frac{c' d \br{\log{n}}^{3/2}}{n}\log\frac{2d}{\delta}}}\\
		&= 1.0003 e_0,
	\end{align*}
	where the second last step is due to Remark~\ref{rough-bounds} for sufficiently large enough $n$ so that $\sqrt n \ll n$. Note that $e_0$ does note depend on the iterates and is thus, a constant. This gives us
	\[
	\norm{\btn - \bo}_2 \leq \frac{2\Lambda_{k+\ko}}{\lambda_{\frac{n}{d}}}\norm{\bt - \bo}_2 + 2.0006 e_0.
	\]
	For data matrices sampled from AO-AR(d) ensembles, whose SGSC and SGSS properties are established in Theorem~\ref{ao-ssc-sss}, assuming $n \geq d \log d$, we have $e_0 = \cO\br{\sigma\sqrt{(k+\ko)d\log\frac{n}{\delta(k+\ko)d}}}$. Thus, if $\frac{\Lambda_{k+\ko}}{\lambda_{\frac{n}{d}}} < \frac{1}{4}$ (which is guaranteed by Remark~\ref{rough-bounds} with probability at least $1-\delta$ for $\ko \leq k \leq C \frac{\mathfrak{m}_{\wo}}{\cM_{\wo} + \cM_{W}}\frac{n}{d \log{n}}$), then in $T_0 = \cO\br{\log\frac{\norm{\bo}_2}{e_0}} = \cO\br{\log\frac{\norm{\bo}_2}{\sqrt{n}}}$ steps, \aoard ensures that $\norm{\vb^{T_0} - \bo}_2 \leq 4.0015 e_0$.
	
\end{proof}

\subsection{Fine Convergence Analysis}

\begin{replemma}{lem:lambda-bound-ts}
	Suppose $\ko \leq k \leq n/(C'd\log n)$ for some large enough constant $C'$. Then with probability at least $1 - \delta$, \myalgo ensures at every time instant $t > T_0$
	\[
	\frac{C}{\lambda_n}(1+\frac{\Lambda_n}{\lambda_n})\norm{X_\fan(X_\fan^\top\vlt + \vg_\fan)}_2 \leq 0.5\norm{\vlt}_2 + \bigO{\sigma\sqrt{\frac{d\log n}{n}\log\frac{1}{\delta}}}
	\]
\end{replemma}
\begin{proof}
	As before, we change the problem so that instead of thresholding the top $k$ elements of the vector $X^\top\vlt + \vg$ by magnitude, we threshold all elements which exceed a certain value $\tau$ in magnitude. Again as before, we show that with high probability, for sufficiently small $k$, the $k\nth$ largest element of the vector will have a large magnitude.
	
	Proving the second part of the result is relatively simple in the time series setting because of the error tolerance bound $\ko \leq n/(d\log n)$ that we assume in this setting. For sake of simplicity, as well as without loss of generality, assume as before that $\sigma = 1$. Then using the tail bounds for martingales with sub-Gaussian entries from \cite{Shamir2011}, we can yet again show that with probability at least $1 - \exp\br{\Omega(n)}$, at least a 1/50 fraction of points in the vector $\vg$ will exceed the value 1.75 in magnitude. 
	
	Now, using the subset smoothness of the data matrix $X$ from Theorem~\ref{ao-ssc-sss} on subsets of size $1$ tells us that $\max_i\norm{X_i}_2 \leq \Lambda_1 \leq \bigO{d\log n}$, where $X_i$ is the $i\nth$ column of the data matrix $X$. Note that this also includes the influence of the error vector $\ve^\ast$. Thus, if we assume $\ko \leq k < \bigO{\frac{1}{d\log n}}$ then $\|\vla^{T_0}\|_2 \leq \frac{1}{4\max_i\norm{X_i}_2}$ which gives us $\norm{X^\top\vlt}_\infty \leq \frac{1}{4}$. This assures us that for any $k < n/50$, the $k$ largest elements by magnitude in the vector $X^\top\vlt + \vg$ will be larger than 1.5.
	
	Having assured ourselves of this, we move on to the analysis assuming that thresholding is done by value and not by cardinality. Let $\vz = [z_1,z_2,\ldots,z_n]$ where $z_i = (X_i^\top\vla + g_i)\cdot\ind{\abs{X_i^\top\vla + g_i} > \tau}$. We have
	\[
	X\vz = \sum_{j=1}^nX_jz_j = \sum_{i=1}^nX_j(X_j^\top\vla + g_j)\cdot\ind{\abs{X_j^\top\vla + g_j} > \tau},
	\]
	where the previous result ensures that we can set $\tau \geq 1.5$, as well as safely assume that $\abs{X_j^\top\vla} \leq 0.25$. In the following, we analyze the $i\nth$ coordinate of the vector i.e.
	\[
	(X\vz)_i = \sum_{i=1}^nX^i_j(X_j^\top\vla + g_j)\cdot\ind{\abs{X_j^\top\vla + g_j} > \tau} =: \sum_{i=1}^n\zeta_i.
	\]
	We notice that $g_i|X_i \sim \cN(0,\sigma^2)$ which allows us to construct the following martingale difference sequence
	\[
	\sum_{i=1}^n\zeta_i - \E{\zeta_i|g_1,g_2,\ldots,g_{i-1}}
	\]
	We also note that the elements of the above sequence are conditionally sub-Gaussian with the sub-Gaussian norm at most $\bigO{\log n}$. Then using the Azuma style inequality for martingales with sub-Gaussian tails from \cite{Shamir2011} gives us, with high probability
	\[
	\sum_{i=1}^n\zeta_i - \E{\zeta_i|g_1,g_2,\ldots,g_{i-1}} \leq \sqrt{\frac{112\log n}{n}\log\frac{1}{\delta}}
	\]
	Note that $\E{\zeta_i|g_1,g_2,\ldots,g_{i-1}} = X^i_j\cdot\E{(X_j^\top\vla + g_j)\cdot\ind{\abs{X_j^\top\vla + g_j} > \tau}|g_1,g_2,\ldots,g_{i-1}}$. Also note that $(X_j^\top\vla + g_j)$ is conditionally distributed as $\cN(X_j^\top\vla,1)$ as we have assumed $\sigma = 1$ for simplicity. For a Gaussian variable $Y \sim \cN(\mu,1)$, we have
	\[
	\E{Y\cdot\ind{|Y| > \tau}} = \mu - \E{Y\cdot\ind{|Y| \leq \tau}} = \frac{\phi(\tau-\mu) - \phi(-\tau-\mu)}{\Phi(-\tau - \mu) - \Phi(\tau - \mu)},
	\]
	where $\phi(\cdot)$ and $\Phi(\cdot)$ are respectively, the density and cumulative distribution functions of the standard normal variable. Now, applying the mean value theorem gives us
	\[
	\abs{{\phi(\tau-\mu) - \phi(-\tau-\mu)}} = \abs{{\phi(\tau+\mu) - \phi(\tau-\mu)}} = 2\abs{\eta\phi(\eta)\mu},
	\]
	for some $\eta \in [\tau - \mu, \tau + \mu]$. For the ensured values of $\tau = 1.25$ and $\abs{\mu} \leq 0.25$, we have $\abs{{\phi(\tau-\mu) - \phi(-\tau-\mu)}} < 0.25$. For the same values we have $\Phi(-\tau - \mu) - \Phi(\tau - \mu) \geq 0.68$. Putting these together, we get
	\[
	\abs{(X\vz)_i} \leq C\tau\cdot\abs{\sum_{j=1}^nX^i_jX_j^\top\vla} + D
	\]
	where $\abs{C_\tau} \leq 0.4$ and $D \leq \sqrt{\frac{112\log n}{n}\log\frac{1}{\delta}}$. We note that this value of $C_\tau$ can be made arbitrarily small by simply requiring that $\ko \leq k < \frac{n}{C'\cdot d\log n}$ for a large enough constant $C' > 0$. In particular, we set $k,\ko$ such that $C_\tau \leq 0.9\frac{\Lambda_n}{\lambda_n}$. This gives us
	\[
	\frac{1}{\lambda_n}\norm{X\vz}_2 \leq \frac{C_\tau}{\lambda_n}\norm{XX^\top\vla}_2 + \frac{d}{\lambda_n}D \leq 0.5\norm{\vla}_2 + \bigO{\sqrt{\frac{d\log n}{n}\log\frac{1}{\delta}}},
	\]
	which concludes the proof.
\end{proof}

\end{document}